\newsavebox{\algbox}
\newcommand\reallywidehat[1]{%
\savestack{\tmpbox}{\stretchto{%
  \scaleto{%
    \scalerel*[\widthof{\ensuremath{#1}}]{\kern-.6pt\bigwedge\kern-.6pt}%
    {\rule[-\textheight/2]{1ex}{\textheight}}
  }{\textheight}%
}{0.5ex}}%
\stackon[1pt]{#1}{\tmpbox}%
}
\newtheorem{conj}{Conjecture}
\newtheorem{thm}{Theorem}
\newtheorem{rmk}[conj]{Remark}
\newtheorem{lem}[conj]{Lemma}
\newtheorem{prop}[conj]{Proposition}
\newtheorem{defn}[conj]{Definition}
\newtheorem{coro}{Corollary}
\newtheorem{assumption}{Assumption}
\newtheorem{prob}{Problem}
\newcommand{\f}[1]{\boldsymbol{#1}}
\newcommand{\bb}[1]{\mathbb{#1}}
\newcommand{\ca}[1]{\mathcal{#1}}
\newcommand{\m}[1]{\mathrm{#1}}
\newcommand{\s}[1]{\mathsf{#1}}
\newcommand{\ie}{\emph{i.e.}}
\newcommand{\RUFF}{\s{RUFF}}
\newcommand{\PUFF}{\s{PUFF}}
\newcommand{\UFF}{\s{UFF}}
\newcommand{\CFF}{\s{CFF}}
\newcommand{\coord}{\s{rep}}
\newcommand{\response}{\s{response}}
\newcommand{\alignment}{\s{align}}
\newcommand{\remove}[1]{}
\newcommand{\1}{\texttt{1}} 
\title{Recovery of sparse linear classifiers from mixture of responses}
\author{Venkata Gandikota~\thanks{V. Gandikota is with the Electrical Engineering \& Computer Science Department at the Syracuse University, NY 13210, USA (email: \texttt{gandikota.venkata@gmail.com}).} ~~~Arya~Mazumdar\thanks{A. Mazumdar is with the Computer Science Department at the University of Massachusetts Amherst, Amherst, MA 01003, USA (email: \texttt{arya@cs.umass.edu}).} ~~~Soumyabrata~Pal\thanks{S. Pal is with the Computer Science Department at the University of Massachusetts Amherst, Amherst, MA 01003, USA (email: \texttt{spal@cs.umass.edu}).}}
\begin{document}
\maketitle

\begin{abstract}
In the problem of learning a {\em mixture of linear classifiers}, the aim is to learn a collection of hyperplanes  from a sequence of binary responses. Each response is a result of querying with a vector and indicates the side of a randomly chosen hyperplane from the collection the query vector belong to. This model provides a rich representation of heterogeneous data with categorical labels and has only been studied in some special settings. We look at a hitherto  unstudied problem of query complexity upper bound of recovering all the hyperplanes, especially for the case when the hyperplanes are sparse. This setting is a natural generalization of the extreme quantization problem known as 1-bit compressed sensing. Suppose we have a set of $\ell$ unknown $k$-sparse vectors. We can query the set with another vector $\f{a}$, to obtain the sign of the inner product of $\f{a}$ and a randomly chosen vector from the $\ell$-set. How many queries are sufficient to identify all the $\ell$ unknown vectors? This question is significantly more challenging than both the basic 1-bit compressed sensing problem (i.e., $\ell=1$ case) and the analogous regression problem (where the value instead of the sign is provided). We provide rigorous query complexity results (with efficient algorithms) for this problem.
\end{abstract}

\section{Introduction}

One of the first and most basic tasks of machine learning is to train a binary linear classifier. Given a set of  explanatory variables (features) and the binary responses (labels), the objective of this task is to find the hyperplane in the space of features that best separates the variables according to their responses. In this paper, we consider a natural generalization of this problem and model a classification task as a mixture of $\ell$ components.  In this generalization, each response is stochastically generated by picking a hyperplane uniformly from the set of $\ell$ unknown hyperplanes, and then returning the side of that hyperplane the feature vector lies. The goal is to learn all of these $\ell$ hyperplanes as accurately as possible, using the least number of responses.

This can be termed as a mixture of binary linear classifiers \cite{sun2014learning}. Similar mixture of simple machine learning models have been around for at least the last thirty years \cite{de1989mixtures} with mixture of linear regression models being the most studied ones \cite{chen2014convex, huang2013nonparametric, khalili2007variable, shen2019iterative, song2014robust,wang2019convergence, yi2016solving, zhu2004hypothesis}. Models of this type are pretty good function approximators \cite{bishop1998latent,jordan1994hierarchical} and have numerous applications in modeling heterogeneous settings such as   machine translation~\cite{liang2006end}, behavioral health~\cite{deb2000estimates}, medicine~\cite{blackwell2006applying}, object recognition~\cite{quattoni2005conditional} etc. While  algorithms for learning the parameters of mixture of linear regressions are solidly grounded (such as tensor decomposition based learning algorithms of  \cite{chaganty2013spectral}), in many of the above applications the labels are discrete categorical data, and therefore a mixture of classifiers is a better model than mixture of regressions. To the best of our knowledge,~\cite{sun2014learning} first rigorously studied a mixture of linear classifiers and provided polynomial time algorithm to approximate the subspace spanned by the component classifiers (hyperplane-normals) as well as a prediction algorithm that given a feature and label, correctly predicts the component used. In this paper we study a related but different problem: the sample complexity of learning {\em all} the component hyperplanes. Our model also differs  from \cite{sun2014learning} where the component responses are `smoothened out'. Here the term {\em sample complexity} is used with a slightly generalized meaning than traditional learning theory - as we explain next, and then switch to the term {\em query complexity} instead. 

Recent works on mixture of sparse linear regressions concentrate on an active query based setting \cite{kris2019sampling, mazumdar2020recovery, yin2018learning}, where one is allowed to design a sample point and query an oracle with that point. The oracle then randomly chooses one of the component models and returns the answer according to that model.
In this paper we adapt exactly this setting for binary classifiers. We assume while queried with a point (vector), an oracle randomly chooses one of the $\ell$ binary classifiers, and then returns an answer according to what was chosen. For the most of this paper we concentrate on recovering `sparse' linear classifiers, which implies that each of the classifiers uses only few of the explanatory variables. This setting is in spirit of the well-studied {\em 1-bit compressed sensing} (1bCS) problem. 

\paragraph{1-bit compressed sensing.}  

In 1-bit compressed sensing, linear measurements of a sparse vector are quantized to only 1 bit, e.g. indicating whether the measurement outcome is positive or not, and the task is to recover the vector up to a prescribed Euclidean error with a minimum number of measurements. An overwhelming majority of the literature focuses on the nonadaptive setting for the problem \cite{acharya2017improved, ai2014one, flodin2019superset, gopi2013one, JLBB13, plan2013one}. Also, a large portion of the literature concentrates on learning only the support of the sparse vector from the 1-bit measurements \cite{acharya2017improved, gopi2013one}.
 
It was shown in \cite{JLBB13} that $O(\frac{k}{\epsilon} \log (\frac n\epsilon))$ Gaussian queries\footnote{all coordinates of the query vector are sampled independently from the standard Gaussian distribution} suffice to approximately (to the Euclidean precision $\epsilon$) recover an unknown $k$-sparse vector $\f{\beta}$ using $1$-bit measurements.  Given the labels of the query vectors, one recovers $\f{\beta}$ by finding a $k$-sparse vector that is consistent with all the labels. If we consider enough queries, then the obtained solution is guaranteed to be close to the actual underlying vector. \cite{acharya2017improved} studied a two-step recovery process, where in the first step, they use queries corresponding to the rows of a special matrix, known as {\em Robust Union Free Family (RUFF)}, to recover the support of the unknown vector $\f{\beta}$ and then use this support information to approximately recover $\f{\beta}$ using an additional $\tilde{O}(\frac{k}{\epsilon})$ Gaussian queries.  Although the recovery algorithm works in two steps, the queries are nonadaptive.
\paragraph{Mixture of sparse linear classifiers.}

The main technical difficulty that arises in recovering multiple sparse hyperplanes using $1$-bit measurements (labels) is to \emph{align} the responses of different queries concerning a fixed unknown hyperplane. To understand this better, let us consider the case when $\ell=2$ (see Figure~\ref{fig:l2}). Let $\f{\beta^1}, \f{\beta^2}$ be two unknown $k$-sparse vectors corresponding to two sparse linear classifiers. On each query, the oracle samples a $\f{\beta^i}$, for $i \in \{1,2\}$, uniformly at random and returns the binary label corresponding to it ({\color{blue} $+$} or {\color{red} $-$}). One can query the oracle repeatedly with the same query vector to ensure a response from both the classifiers with overwhelmingly high probability. 


\begin{figure}
	\centering
  \includegraphics[width=0.5\textwidth]{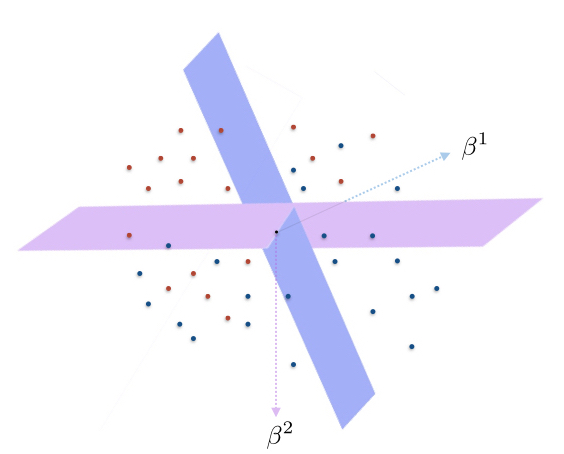}
  \caption{\small{Recover the two lines given red triangles and blue dots. How many such points do we require in order to recover the two lines?}}
  \label{fig:l2}
\end{figure}

For any query vector if the responses corresponding to the two classifiers are the same (\ie, $(+, +)$ or $(-, -)$), then we do not gain any information separating the two classifiers. We might still be able to reconstruct some sparse hyperplanes, but the recovery guarantees of such an algorithm will be poor. 
On the other hand, if both the responses are different (\ie, $(+, -)$), then we do not know which labels correspond to a particular classifier. For example, if the responses are $(+,-)$ and $(+,-)$ for two distinct query vectors, then we do not know if the `plusses' correspond to the same classifier. This issue of alignment makes the problem  challenging.  Such alignment issues are less damning in the case of mixture of linear regressions even in the presence of noise~\cite{kris2019sampling, mazumdar2020recovery, yin2018learning} since we can utilize the magnitude information of the inner products (labels) to our advantage.  

One of the challenges in our study is to recover the supports of the two unknown vectors. 
Consider the case when $\s{supp}(\f{\beta^1}) \neq \s{supp}(\f{\beta^2})$, where $\s{supp}(\f{v})$ denotes the support of the vector $\f{v} \in \bb{R}^n$. In this case, we show that using an RUFF in combination with another similar class of union-free family (UFF), we can deduce the supports of both $\f{\beta^1}$ and $\f{\beta^2}$. Wielding known constructions of such UFFs from literature, we can recover the supports of both $k$-sparse vectors using $O(k^3 \log^2 n )$ queries. 
Once we obtain the supports, we use an additional $O(\frac{k}{\epsilon} \log nk)$ Gaussian queries (with a slight modification) to approximately recover the individual vectors. 

We then extend this two-step process (using more general classes of UFFs) to recover a mixture of $\ell$ different sparse vectors under the assumption that the support of no vector is contained in the union of supports of the remaining ones (Assumption~\ref{assum:sm}). 
The assumption implies that if the sparse vectors are arranged as columns of a matrix, then the matrix contains the identity matrix as a permutation of the rows. This separability condition appears before in~\cite{arora2016computing, donoho2004does, slawski2013matrix} in the context of nonnegative integer matrix factorization, which is a key tool that we will subsequently use to prove our results. To quote \cite{arora2016computing} in the context of matrix factorization, ``an approximate separability condition is regarded as a fairly benign assumption and is believed to hold in many practical contexts in machine learning.'' We believe this observation holds for our context as well (each classifier uses some unique feature). 

We show that with this support separability condition, $\tilde{O}(\ell^6 k^3)$ queries suffice for support recovery of $\ell$ different $k$-sparse vectors. Further, using $\tilde{O}((\ell^3 k/\epsilon))$ queries, we can recover each of the $\f{\beta^i}$'s, for $i \in \{1, \ldots, \ell\}$ up to $\epsilon$ precision (see Theorem~\ref{thm:rec} and Theorem~\ref{thm:twostage}). 

The two-stage procedure described above, can be made completely non-adaptive using queries from union free families  (see Theorem~\ref{thm:onestage}). 

Furthermore, for $\ell=2$, we see that the support condition (Assumption \ref{assum:sm}) is not necessary. We can approximately recover the two unknown vectors provided a) they are not extremely sparse and, b) each $\f{\beta^i} \in \delta \bb{Z}^n$ for some $\delta > 0$. 
To prove this, we borrow the tools from \cite{ai2014one} who give guarantees for 1-bit compressed sensing using sub-Gaussian vectors. In particular, we use queries with independent Bernoulli coordinates which are sub-Gaussian. These discrete random queries (as opposed to continuous Gaussians) along with condition (b), enables us to align the labels corresponding to the two unknown vectors. (see Theorem~\ref{thm:grid} for more details). Note that condition (a) is due to the result by \cite{ai2014one} and is necessary for recovery using sub-Gaussian queries and (b) is a mild assumption on the precision of the unknown vectors, which was also necessary  \cite{kris2019sampling, yin2018learning} for learning the mixture of sparse linear regressions. 

\paragraph{Technical take-away.} As stated above, the main technical hurdle in the paper lies in the support recovery problem for the sparse vectors.  We do it in a few (non-adaptive) steps. First, we try to design queries that will lead us to estimate the size of the intersections of supports for every pair of sparse vectors. If we can design such a set of queries, then using a nonnegative integer matrix factorization techniques we can estimate all the supports. Indeed, this is where the separability assumption comes in handy. Further, because of the separability assumption, it is also possible to design queries from which we can simulate the response of every unknown vector with a random Gaussian query and  tag it. This allows us to recover and {\em cluster} the responses of every unknown vector to a set of random Gaussian queries from which we can approximately recover all the unknown vectors. 

To estimate the size of the intersections of supports for the vector-pairs, we rely on combinatorial designs (and related set-systems), such as pairwise independent cover free families~\cite{furedi1996onr}. While some such union-free families have been used to estimate the support in 1-bit compressed sensing before~\cite{acharya2017improved}, the use of pairwise independent sets to {\em untangle multiple sparse vectors} is new and has almost nothing to do with the recovery of sparse vectors itself.

We leave the problem of designing a query scheme that works for any general $\ell$ without any assumptions as an open problem. Lack of Assumption~\ref{assum:sm} seems to be a fundamental barrier to support recovery as it ensures that a sparse vector will never be in the span of the others.  However, a formal statement of this effect still eludes us.
For large $\ell$, finding out the dependence of query complexity on $\ell$ is also a natural question. Overall, this study leads to an interesting set  of questions that are technically demanding as well as quite relevant to practical  modeling of heterogeneous data that are ubiquitous in applications. 
For instance, in recommendation systems, where the goal is to identify the factors governing the preferences of individual members of a group via crowdsourcing while preserving the anonymity of their responses.

\paragraph{Organization.} The rest of this paper is organized as follows. In the next section, we formally define the problem statement followed by a list of our contributions in Section~\ref{sec:contri}. The notations and the necessary  background on various families of sets are presented in Section~\ref{sec:bckgrnd}. We prove Theorem~\ref{thm:rec} in Section ~\ref{sec:supp-rec}, Theorem~\ref{thm:twostage} in Section~\ref{sec:2stage}, Theorem~\ref{thm:onestage} in Section~\ref{sec:single} and Theorem \ref{thm:grid} in Section \ref{sec:l2} respectively.  Moreover, we demonstrate the capability of our algorithms to learn movie genre preferences of two unknown users using the MovieLens~\cite{harper2015movielens} dataset. The experimental details are included in Section~\ref{sec:exp} of the paper.
\section{Problem Statement}
Let $\f{\beta}^{1},\f{\beta}^{2},\dots,\f{\beta}^{\ell} \in \bb{R}^{n}$ be a set of $\ell$ unknown $k$-sparse vectors. 

Each $\f{\beta^i}$ defines a linear classifier that assigns a label from $\{-1, 1\}$ to every vector in $ \f{v} \in \bb{R}^n$ according to $\s{sign}(\langle \f{v},\f{\beta} \rangle)$, where the sign function $\s{sign}:\bb{R} \rightarrow \{-1, 1\}$ , is defined as,
\[\s{sign}(x) = \begin{cases}
+1 \quad & \text{ if } x \ge 0 \\
-1 \quad &\text{ if }  x< 0 \\
\end{cases}.\]


As described above, our work focuses on recovering the unknown classifiers in the \emph{query model} that was used in  \cite{yin2018learning,kris2019sampling} to study the mixtures of sparse linear regressions. In the query model, we assume the existence of an oracle ${\cal O}$ which when queried with a vector $\f{v} \in \bb{R}^{n}$, samples one of the classifiers $\f{\beta} \in \{ \f{\beta}^{1},\f{\beta}^{2},\dots,\f{\beta}^{\ell}  \}$ uniformly at random and returns the label of $\f{v}$ assigned by the sampled classifier $\f{\beta}$. 
The goal of approximate recovery is to reconstruct each of the unknown classifiers using small number of oracle queries. The problem can be formalized as follows: 
\begin{prob}[$\epsilon$-recovery]
Given $\epsilon > 0$, and query access to oracle $\cal O$, find $k$-sparse vectors $\{ \f{\hat{\beta^1}}, \f{\hat{\beta^2}}, \dots, \f{\hat{\beta^\ell}}\}$ such that for some permutation $\sigma:[\ell]\rightarrow[\ell]$
\begin{align*}
\left\| \frac{\f{\beta^i}}{\| \f{\beta^i} \|_2}-\frac{\f{\hat{\beta}^{\sigma(i)}}}{ \|\f{\hat{\beta}^{\sigma(i)}}\|_2} \right\|_2 \le \epsilon \quad \forall \; i\in [\ell].
\end{align*}
\end{prob}
Since from the classification labels, we lose the magnitude information of the unknown vectors, we assume each $\f{\beta^i}$ and the estimates $\f{\hat{\beta^i}}$ to have a unit norm. 

Similar to the literature on one-bit compressed sensing, one of our proposed solutions employs a two-stage algorithm to recover the unknown vectors. In the first stage the algorithm recovers the support of every vector, and then in the second stage, approximately recovers the vectors using the support information.
 
For any vector $\f{v} \in \bb{R}^n$, let $\s{supp}(\f{v}) := \{i \in [n] \mid \f{v}_i \neq 0 \}$ denote the support of $\f{v}$.  The problem of support recovery is then defined as follows: 
\begin{prob}[Support Recovery]
Given query access to oracle $\cal O$, construct $\{ \f{\hat{\beta^1}},\f{\hat{\beta^2}},\dots, \f{\hat{\beta^\ell}} \}$ such that for some permutation $\sigma:[\ell]\rightarrow[\ell]$
\begin{align*}
\s{supp}(\f{\hat{\beta}^{\sigma(i)}}) = \s{supp}(\f{\beta^{i}}) \quad \forall \; i\in [\ell]
\end{align*}
\end{prob}
For both these problems, we primarily focus on minimizing the query complexity of the problem, \ie, minimizing the number of queries that suffice to approximately recover all the sparse unknown vectors or their supports. However, all the algorithms proposed in this work also run in time $O(\textrm{poly}(q))$, where $q$ is the query complexity of the algorithm.

\section{Our contributions}\label{sec:contri}

In order to present our first set of results, we need certain assumption regarding the separability of supports of the unknown vectors. In particular, we want each component of the mixture to have a unique identifying coordinate. More formally, it can be stated as follows: 
\begin{assumption}{\label{assum:sm}}
For every $i \in [\ell]$, 
$\s{supp}(\f{\beta^i)} \not \subseteq \bigcup_{j: j\neq i} \s{supp} (\f{\beta^j})$, 
i.e. the support of any unknown vector is not contained in the union of the support of the other unknown vectors.
\end{assumption}


\paragraph{Two-stage algorithm:} 
First, we propose a two-stage algorithm for $\epsilon$-recovery of the unknown vectors. In the first stage of the algorithm, we recover the support of the unknown vectors (Theorem~\ref{thm:rec}),  followed by $\epsilon$-recovery using the deduced supports (Theorem~\ref{thm:twostage}) in the second stage. Each stage in itself is non-adaptive, \ie, the queries do not depend on the responses of previously made queries. 

\begin{thm}\label{thm:rec}
Let $\{\f{\beta^1}, \ldots, \f{\beta^\ell} \}$ be a set of $\ell$ unknown $k$-sparse vectors in $\bb{R}^n$ that satisfy Assumption \ref{assum:sm}. 
There exists an algorithm to recover the support of every unknown vector $\{ \f{\beta^i} \}_{i \in [\ell]}$ with probability at least $1-O(1/n^2)$, using  $O(\ell^6 k^3 \log^2 n)$ non-adaptive queries to oracle $\ca{O}$. 
\end{thm}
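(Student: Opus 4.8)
The plan is to reduce support recovery to reconstructing the coordinate--classifier incidence matrix $\f{B}\in\{0,1\}^{n\times\ell}$, defined by $\f{B}_{ic}=1$ iff $i\in\s{supp}(\f{\beta^i})$, up to a permutation of its columns. The one primitive I will build is a \emph{counting query}: for a fixed set $S\subseteq[n]$ I issue $t$ separate (non-adaptive) queries, each a fresh vector with i.i.d.\ Gaussian entries on $S$ and zeros elsewhere, and I record the fraction of $+1$ responses. For a classifier with $\s{supp}(\f{\beta^c})\cap S=\emptyset$ the inner product is exactly $0$, so $\s{sign}=+1$ deterministically; for a classifier meeting $S$ the inner product is a nondegenerate Gaussian, so $\s{sign}=+1$ with probability $\tfrac12$. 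Setting $r_S:=\lvert\{c:\s{supp}(\f{\beta^c})\cap S\neq\emptyset\}\rvert$ and averaging over the oracle's uniform choice of classifier gives $\Pr[+1]=1-\tfrac{r_S}{2\ell}$. Thus estimating $\Pr[+1]$ to additive accuracy $\tfrac{1}{4\ell}$ recovers the integer $r_S$ exactly, and by Hoeffding $t=O(\ell^2\log n)$ repetitions suffice per set with failure probability $O(1/n^4)$. A union bound over all $\mathrm{poly}(n)$ query sets then delivers the overall $1-O(1/n^2)$ guarantee. The key observation is that this counting primitive breaks the confounding of the mixture: a naive ``firing'' test only reveals $\bigcup_c\s{supp}(\f{\beta^c})$, whereas $r_S$ carries quantitative information that lets us untangle the individual supports.

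Using this primitive I will extract, via suitably designed set systems, three quantities: (i) the union $U:=\bigcup_c\s{supp}(\f{\beta^c})$, with $|U|\le \ell k$; (ii) for each $i\in U$ the multiplicity $m_i:=\lvert\{c: i\in\s{supp}(\f{\beta^c})\}\rvert$; and (iii) for each pair $i,i'\in U$ the co-occurrence count $\f{M}_{ii'}:=\lvert\{c: i,i'\in\s{supp}(\f{\beta^c})\}\rvert$, i.e.\ the entries of the symmetric matrix $\f{M}=\f{B}\f{B}^{\top}$. For (i) I use a $\RUFF$ / $(\ell k)$-cover-free family over $[n]$: the firing bits $[r_S\ge 1]$ over its sets decode $U$ exactly, since a coordinate lies in $U$ iff every family-set containing it fires, and the cover-free property guarantees a non-firing witness for each $i\notin U$. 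For (ii) and (iii) I use a \emph{pairwise-independent} (order-$2$) cover-free family (in the spirit of \CFF\ constructions of Füredi): it guarantees that for every pair $\{i,i'\}$ inside any set of size $\le \ell k$, some family-set $S$ satisfies $S\cap U=\{i,i'\}$ (and likewise $S\cap U=\{i\}$ for singletons). Having already recovered $U$, I can identify such isolating sets and read $r_S=m_i$ and $r_S=m_i+m_{i'}-\f{M}_{ii'}$, yielding $\f{M}_{ii'}=m_i+m_{i'}-r_S$. The query complexity is the product of the family size and the $\tilde O(\ell^2)$ repetitions per set; plugging in the known sizes $\tilde O((\ell k)^2)$ and $\tilde O((\ell k)^3)$ of these union-free families gives the claimed $O(\ell^6k^3\log^2 n)$ bound.

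Finally I reconstruct $\f{B}$ from $\f{M}$ using Assumption~\ref{assum:sm}, which is exactly the separability (anchor) condition for nonnegative integer matrix factorization. The coordinates with $m_i=\f{M}_{ii}=1$ are precisely the private coordinates; two such anchors $i,i'$ belong to the same classifier iff $\f{M}_{ii'}=1$, so the relation ``$\f{M}_{ii'}=1$'' partitions the anchors into equivalence classes. By Assumption~\ref{assum:sm} every classifier owns at least one anchor, so there are exactly $\ell$ classes, one per classifier; choosing a representative anchor $a_c$ in each class, and using that $a_c$ lies in classifier $c$ alone, I obtain $\f{M}_{i\,a_c}=\f{B}_{ic}$ and hence $\s{supp}(\f{\beta^c})=\{i\in U:\f{M}_{i\,a_c}=1\}$ for every $c$. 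This is precisely the polynomial-time anchor-finding step of separable nonnegative matrix factorization (e.g.\ Arora et al.), and it recovers all $\ell$ supports up to the permutation permitted by the problem statement.

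I expect the main obstacle to be the combinatorial design in the second paragraph: realizing that the mixture forces us to estimate the counts $r_S$ rather than mere membership, and then constructing (and correctly decoding) a family that simultaneously isolates every pair $\{i,i'\}$ against all $\le \ell k$ other coordinates of the a~priori unknown union $U$. Coupling the robust estimation of $r_S$ (whose error must be driven below $1/\ell$ uniformly over all sets) with the size bounds of these pairwise-independent union-free families is where the stated query complexity is won or lost; the Gaussian counting primitive and the separability-based reconstruction are comparatively routine once the design is in place.
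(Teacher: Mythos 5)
Your proposal is correct and follows the same architecture as the paper's proof: estimate $|\ca{S}(i)|$ via a union-free family, estimate the pairwise quantities via a $(2,\ell k)$-cover-free family, assemble the Gram matrix $\f{X}\f{X}^{T}$ of the support matrix, and invert it using the separability assumption. Two of your implementation choices differ locally and are worth noting. First, your counting primitive uses a fresh symmetric Gaussian vector supported on $S$ at every repetition, so that $\Pr[+1]=1-r_S/2\ell$ exactly; the paper instead repeats a \emph{fixed} query $\f{v}$ together with $-\f{v}$ and exploits the convention $\s{sign}(0)=+1$ to count zero projections. Because a fixed query can accidentally satisfy $\langle \f{v},\f{\beta}\rangle=0$ for a vector whose support meets $S$, the paper must hedge with $\ell+1$ independent copies of the $\PUFF$ matrix and take a maximum of the counts; your fresh-randomness primitive makes this event occur with probability zero on every repetition and removes that factor of $\ell$ (you would actually land at $O(\ell^5k^3\log^2 n)$, comfortably inside the claimed bound). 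Second, for recovering $\f{X}$ from $\f{X}\f{X}^{T}$ the paper gives an abstract uniqueness argument (any binary $\f{Y}$ with $\f{Y}\f{Y}^{T}=\f{X}\f{X}^{T}$ must equal $\f{X}\f{R}$ for a permutation $\f{R}$, because Assumption~\ref{assum:sm} plants an $\ell\times\ell$ permutation submatrix), whereas you give the explicit anchor-based reconstruction: identify coordinates with $m_i=1$, cluster them by the relation $\f{M}_{ii'}=1$, and read each support off the column $\f{M}_{\cdot\,a_c}$. Both are valid; yours is constructive and makes the polynomial-time claim immediate.
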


Now using this support information, we can approximately recover the unknown vectors using an additional $\tilde{O}(\ell^3 k)$ non-adaptive queries. 
\begin{thm}{\label{thm:twostage}}
Let $\{\f{\beta^1}, \ldots, \f{\beta^\ell} \}$ be a set of $\ell$ unknown $k$-sparse vectors in $\bb{R}^n$ that satisfy Assumption \ref{assum:sm}. 
There exists a two-stage algorithm that uses 
$O\Big(\ell^6 k^3 \log^2 n + (\ell^3 k/\epsilon)\log (nk/\epsilon) \log(k/\epsilon) \Big)$ 
 oracle queries for the $\epsilon$-recovery of all the unknown vectors with probability at least $1-O(1/n)$.
\end{thm}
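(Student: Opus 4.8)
The plan is to split the $\epsilon$-recovery into the two advertised stages, charging the $O(\ell^6 k^3 \log^2 n)$ term to support recovery and the $\tilde O(\ell^3 k/\epsilon)$ term to approximate recovery. For the first stage I would simply invoke Theorem~\ref{thm:rec} to obtain, with probability $1-O(1/n^2)$, the supports $S_i := \s{supp}(\f{\beta}^i)$ of all $\ell$ classifiers (up to the unavoidable permutation). Assumption~\ref{assum:sm} guarantees that each $S_i$ contains an \emph{identifying coordinate} $c_i \in S_i \setminus \bigcup_{j\neq i} S_j$, which I fix once and for all; these coordinates are the lever for the clustering in the second stage.

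The second stage reduces the mixture to $\ell$ independent single-vector 1-bit compressed sensing instances. I would draw $T = O\big(\tfrac{k}{\epsilon}\log\tfrac{nk}{\epsilon}\big)$ i.i.d.\ standard Gaussian vectors $\f{g}_1,\dots,\f{g}_T$ supported on $\bigcup_i S_i$; the same family serves every classifier, since the restriction of a Gaussian to $S_i$ is again Gaussian and by \cite{JLBB13} this many queries suffice to recover one $k$-sparse unit vector to precision $\epsilon$. The crux is to extract, for each $t$ and each $i$, the label $y_t^{(i)} := \s{sign}(\langle \f{g}_t, \f{\beta}^i\rangle)$ from the mixture oracle, and here I would exploit $c_i$ via a \emph{spiking} trick: because $c_i$ lies in no other support, adding $\pm M \f{e}_{c_i}$ to $\f{g}_t$ leaves every classifier $j\neq i$ unaffected while forcing classifier $i$'s response to the constant $\pm\s{sign}(\f{\beta}^i_{c_i})$, provided $M$ dominates the (bounded) Gaussian inner products. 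Repeating each of the three queries $\f{g}_t$, $\f{g}_t + M\f{e}_{c_i}$, $\f{g}_t - M\f{e}_{c_i}$ enough times to estimate its fraction of $+1$ responses to accuracy $\tfrac{1}{2\ell}$, I can read off $y_t^{(i)}$: each classifier contributes exactly $1/\ell$ to the $+1$-probability, so forcing $i$'s vote shifts that fraction by $1/\ell$ precisely when $i$'s natural vote disagrees with the forced one, while the sign of the difference between the two spiked fractions recovers $\s{sign}(\f{\beta}^i_{c_i})$.

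For the accounting, estimating a single fraction to accuracy $\tfrac{1}{2\ell}$ with confidence $1-\delta$ costs $O(\ell^2\log(1/\delta))$ repetitions by Hoeffding, and there are $2\ell+1$ fractions per Gaussian query (one shared unspiked estimate plus two per classifier), giving $O(\ell^3\log(1/\delta))$ oracle calls per $\f{g}_t$; fixing $\delta$ by a union bound over the $O(\ell k/\epsilon)$ estimation events then yields the claimed second term $O\big(\tfrac{\ell^3 k}{\epsilon}\log\tfrac{nk}{\epsilon}\log\tfrac{k}{\epsilon}\big)$ up to the logarithmic factors shown. Having produced, for every $i$, a correctly clustered labeled sample $\{(\f{g}_t, y_t^{(i)})\}_{t=1}^T$, I would feed it to the $\ell=1$ recovery guarantee to obtain $\f{\hat\beta}^i$ with $\norm{\f{\beta}^i/\norm{\f{\beta}^i} - \f{\hat\beta}^i/\norm{\f{\hat\beta}^i}} \le \epsilon$, and close with a union bound: $O(1/n^2)$ for the first stage, $O(1/n)$ for the clustering, and $O(\ell/n^2)$ for the $\ell$ single-vector recoveries, for an overall success probability $1-O(1/n)$. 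Summing the two stages reproduces the stated query bound.

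I expect the main obstacle to be making the spiking argument rigorous: the identifying coordinate's magnitude $|\f{\beta}^i_{c_i}|$ is not bounded below a priori, so one must argue that a controlled (polynomially large) spike level $M$ still dominates $\langle \f{g}_t, \f{\beta}^i\rangle$ across all $T$ queries simultaneously, and one must verify that the $\tfrac{1}{2\ell}$-accurate frequency estimates separate the two alternatives with the required confidence. Everything else — the Gaussian query count and the reduction to the single-hyperplane recovery theorem — is a routine invocation of Theorem~\ref{thm:rec} and of \cite{JLBB13}.
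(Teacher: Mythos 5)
Your proposal is correct and follows essentially the same route as the paper: recover supports via Theorem~\ref{thm:rec}, use the identifying coordinates guaranteed by Assumption~\ref{assum:sm} to spike Gaussian queries with a huge value and thereby demultiplex each classifier's responses, and then feed the clustered labels to the single-vector guarantee of \cite{JLBB13} restricted to the known support, with the same $O(\ell^2)$-repetition count estimation giving the $O\big((\ell^3 k/\epsilon)\log(nk/\epsilon)\log(k/\epsilon)\big)$ term. The only cosmetic difference is that the paper, for a fixed target $t$, spikes the representative coordinates of all the \emph{other} classifiers in a single query (forcing their responses to known signs, recovered separately but at no extra cost from the RUFF queries) so the target's label is read off from one $\s{poscount}$, whereas you spike the target's own coordinate and compare spiked against unspiked frequencies; both yield the same accounting, and the magnitude caveat you flag for the spike level is acknowledged identically in the paper's remark following the theorem statement.
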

\begin{rmk}
We note that for the two-stage recovery algorithm to be efficient, we require the magnitude of non-zero entries of the unknown vectors to be non-negligible (at least $1/\textrm{exp}(n)$). This assumption however is not required to bound the query complexity of the algorithm which is the main focus of this work. 
\end{rmk}


\paragraph{Completely non-adaptive algorithm: } 
Next, we show that the entire $\epsilon$-recovery algorithm can be made non-adaptive (single-stage) at the cost of increased query complexity. 
\begin{thm}{\label{thm:onestage}}
Let $\{\f{\beta^1}, \ldots, \f{\beta^\ell} \}$ be a set of $\ell$ unknown $k$-sparse vectors in $\bb{R}^n$ that satisfy Assumption \ref{assum:sm}. 
There exists an algorithm that uses 
$O\Big((\ell^{\ell+3} k^{\ell+2}/\epsilon )\log n \log (n/\epsilon)  \log(k/\epsilon) \Big)$ non-adaptive oracle queries for the $\epsilon$-recovery of all the unknown vectors with probability at least $1-O(1/n)$. 
\end{thm}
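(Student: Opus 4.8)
The plan is to keep the joint \emph{decoding} of the two-stage scheme intact while pre-committing to a single, configuration-oblivious batch of queries; recall that non-adaptivity only forbids letting the \emph{queries} depend on earlier responses, and still permits an arbitrary post-processing of all responses at the end. So I would reuse the support-recovery batch of Theorem~\ref{thm:rec} verbatim (contributing the $\ell^6k^3\log^2 n$ term, which is dominated below), decode the supports $\s{supp}(\f{\beta^i})$ and hence the unique coordinates $c_i\in\s{supp}(\f{\beta^i})\setminus\bigcup_{j\neq i}\s{supp}(\f{\beta^j})$ guaranteed by Assumption~\ref{assum:sm}, and only afterwards interpret a separately pre-fixed batch of value-recovery queries.

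First I would recall the isolation primitive behind the second stage. For a random Gaussian $\f{a}$ one can read off $\s{sign}(\langle\f{a},\f{\beta^i}\rangle)$ by querying with $\f{a}$ superimposed on a large bias supported on $\{c_j:j\neq i\}$ with signs matching $\s{sign}(\f{\beta^j}_{c_j})$: every classifier $j\neq i$ is then driven to the deterministic value $+1$, so over $O(\ell\log(\cdot))$ repetitions of the \emph{same} query a $-1$ can appear only if classifier $i$ was sampled and $\langle\f{a},\f{\beta^i}\rangle<0$. Because the bias set is fixed, a given tag isolates the \emph{same} classifier across all $\f{a}$, so its sign stream is automatically aligned and can be fed to the one-bit compressed sensing guarantee of \cite{JLBB13} to $\epsilon$-recover $\f{\beta^i}$; taking $O\p{\tfrac{k}{\epsilon}\log(n/\epsilon)}$ Gaussians per classifier is what yields the $\tfrac1\epsilon\log(n/\epsilon)\log(k/\epsilon)$ factors, exactly as in Theorem~\ref{thm:twostage}.

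The sole obstruction to non-adaptivity is that the isolating set $\{c_j:j\neq i\}$ depends on the unknown supports. To remove it I would pre-commit, for each base Gaussian $\f{a}$, to a whole \emph{family} of biased copies $\f{a}+M\sum_{x\in B}\sigma(x)\f{e}_x$, where the pairs $(B,\sigma)$ range over a union-free design on $[n]$ chosen so that, for \emph{every} admissible configuration and every $i$, at least one $(B,\sigma)$ satisfies $B\cap\s{supp}(\f{\beta^i})=\emptyset$ and $|B\cap\s{supp}(\f{\beta^j})|=1$ with the correct sign for each $j\neq i$. The single-intersection requirement, needed so that no sign cancellation occurs inside a support and the forced value is genuinely $+1$, is the real combinatorial demand: it is a simultaneous ``hit every other support exactly once, miss this one'' condition that must hold against all $\le\binom{n}{k}^{\ell}$ configurations. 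Realizing it with an order-$\ell$ cover-free / perfect-hashing-type family is what inflates the tag count to $O(\ell^{\ell+3}k^{\ell+1}\log n)$; since the correct tag for each $i$ is identified \emph{after the fact} from the decoded supports, the decoder never needs to know the configuration in advance.

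The final bookkeeping multiplies the per-classifier sensing cost $O\p{\tfrac{k}{\epsilon}\log(n/\epsilon)\log(k/\epsilon)}$ by this tag count $O(\ell^{\ell+3}k^{\ell+1}\log n)$ (the repetitions that guarantee each classifier is sampled under every relevant tag being absorbed into the $\ell$ factors), and then union-bounds over the $O(1/n)$ failure events of support recovery and of the individual one-bit sensing steps. I expect the genuine difficulty to lie entirely in the third paragraph: proving that a family of the claimed size realizes the ``exactly-one-hit, zero-hit'' separation for all configurations at once, and that the forced responses stay robustly $+1$ once a finite bias $M$ and the Gaussian perturbation are accounted for — the target precision $\epsilon$ together with the $\delta\bb{Z}^n$-type granularity of the entries controls how large $M$ must be and thereby feeds back into the $\log(k/\epsilon)$ factor.
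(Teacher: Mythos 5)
Your proposal matches the paper's proof in all essentials: the paper likewise keeps the two-stage decoder but pre-commits to $\tilde{O}(k/\epsilon)$ blocks of biased Gaussian queries built from an $(\ell,\ell k)$-$\CFF$ of size $m=O((\ell k)^{\ell+1}\log n)$, whose defining property guarantees that for every configuration some row places the large bias exactly on the representative coordinates $\{\coord(\f{\beta}^{t'})\}_{t'\neq t}$ and nowhere else in the union of supports, with the correct tag selected after the fact from the non-adaptively recovered supports (Lemma~\ref{lem:gaussian_cff}). The only cosmetic difference is that the paper uses all-positive biases and compares $\s{poscount}$ against the separately recovered signs $\s{sign}(\f{\beta}^{t'}_{\coord(\f{\beta}^{t'})})$ rather than baking sign patterns $\sigma$ into the design, but the combinatorial object, the query count accounting, and the decoding logic are the same as yours.
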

Note that even though the one-stage algorithm uses many more queries than the two-stage algorithm, a completely non-adaptive is highly parallelizable as one can choose all the query vectors in advance. Also, in the $\ell = O(1)$ regime, the query complexity is comparable to its  two-stage analogue. 

While we mainly focus on minimizing the query complexity, all the algorithms proposed in this work run in $\s{poly}(n)$ time assuming every oracle query takes $\s{poly}(n)$ time and $\ell = o(\log n)$. 

\paragraph{Non-adaptive algorithm for $\ell=2$ without Assumption~\ref{assum:sm}: } For $\ell=2$, we do not need the separability condition (Assumption~\ref{assum:sm}) required earlier for support recovery. Even for $\epsilon$-recovery, instead of Assumption~\ref{assum:sm}, we just need a mild assumption on the precision $\delta$, and the sparsity of the unknown vectors.  In particular, we propose an algorithm for the $\epsilon$-recovery of the two unknown vectors using $\tilde{O}(k^3 + k/\epsilon)$ queries provided the unknown vectors have some finite precision and are not extremely sparse.


\begin{assumption}{\label{assum:max}}
For $\f{\beta} \in \{\f{\beta^1},\f{\beta^2}\}$, $\| \f{\beta} \|_{\infty} = o(1)$.
\end{assumption}

Assumption~\ref{assum:max} ensures that we can safely invoke the result of \cite{ai2014one} who use the exact same assumption in the context of $1$-bit compressed sensing using sub-Gaussian queries. 

\begin{thm}{\label{thm:grid}}
Let $\f{\beta^1}, \f{\beta^2}$ be two $k$-sparse vectors in $\bb{R}^n$ that satisfy Assumption~\ref{assum:max}. Let $\delta > 0$ be the largest real such that $\f{\beta^1}, \f{\beta^2} \in \delta \bb{Z}^n$. There exists an algorithm that uses $O(k^3 \log^2 n + (k^2/\epsilon^4 \delta^2) \log^2 (n/k \delta^2))$ (adaptive) oracle queries for the $\epsilon$-recovery of $\f{\beta^1}, \f{\beta^2}$ with probability at least $1-O(1/n)$.  

Moreover, if $\s{supp}(\f{\beta^1}) \neq \s{supp}(\f{\beta^2})$, then there exists a two-stage  algorithm for the $\epsilon$-recovery of the two vectors using only  $O(k^3 \log^2 n + (k/\epsilon) \log (nk/\epsilon) \log(k/\epsilon))$ non-adaptive oracle queries. 
\end{thm}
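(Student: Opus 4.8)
The plan is to prove the two claims separately, since the cheaper non-adaptive bound exploits distinct supports while the general adaptive bound must cope with possibly identical supports. Both begin with a support-recovery phase costing $O(k^3\log^2 n)$ queries (the $\ell=2$ specialization of the RUFF/UFF machinery behind Theorem~\ref{thm:rec}); in the general case we only need the union $S=\s{supp}(\f{\beta^1})\cup\s{supp}(\f{\beta^2})$, of size at most $2k$, so the remaining work takes place in an effective dimension $2k$ rather than $n$.

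For the distinct-support claim, the key observation is that $\s{supp}(\f{\beta^1})\neq\s{supp}(\f{\beta^2})$ forces a coordinate $j$ on which exactly one vector, say $\f{\beta^2}$, is nonzero. I would query with $\f{b}+M\f{e}_j$ for a large constant $M$ and Gaussian $\f{b}$ supported on $S$: the term $M\f{\beta^2}_j$ dominates, so $\f{\beta^2}$'s response is pinned to the known value $\s{sign}(\f{\beta^2}_j)$, while $\f{\beta^1}$'s response equals $\s{sign}(\langle\f{b},\f{\beta^1}\rangle)$ because $\f{\beta^1}_j=0$. Repeating each query $O(\log(k/\epsilon))$ times to observe both classifiers w.h.p., any response disagreeing with $\s{sign}(\f{\beta^2}_j)$ must come from $\f{\beta^1}$, yielding a clean $1$-bit measurement of $\f{\beta^1}$; the Gaussian $1$-bit recovery of \cite{JLBB13,acharya2017improved} then reconstructs $\f{\beta^1}$ to precision $\epsilon$. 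To recover $\f{\beta^2}$ with the same predetermined queries, I would add a second batch of plain Gaussian $\f{b}'$: once $\f{\beta^1}$ is known its response $\s{sign}(\langle\f{b}',\f{\beta^1}\rangle)$ is predictable, so the companion response is attributed to $\f{\beta^2}$. Since both batches are fixed in advance and only the post-processing uses the recovered $\f{\beta^1}$, the scheme is non-adaptive and two-stage, giving the stated $O(k^3\log^2 n+(k/\epsilon)\log(nk/\epsilon)\log(k/\epsilon))$ bound.

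For the general adaptive claim, identical supports defeat the unique-coordinate trick, so I would instead align responses using the grid structure. First find a separating pivot $\f{a}_0$ with responses $(+,-)$ by random sampling on $S$ (if no such pivot exists, the two classifiers coincide as unit vectors and a single $1$-bit recovery suffices). For each Bernoulli measurement direction $\f{a}\in\{0,1\}^{S}$ I would sweep a scalar $\lambda$ over a grid and query $\lambda\f{a}_0+\f{a}$: because $\langle\f{a},\f{\beta^i}\rangle,\langle\f{a}_0,\f{\beta^i}\rangle\in\delta\bb{Z}$, each classifier's response flips at a discrete value $\lambda_i=-\langle\f{a},\f{\beta^i}\rangle/\langle\f{a}_0,\f{\beta^i}\rangle$, and the pivot tags which flip belongs to which classifier. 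Locating the two flips reveals the \emph{aligned} signs $\s{sign}(\langle\f{a},\f{\beta^i}\rangle)$. Collecting these aligned measurements for each classifier and invoking the sub-Gaussian $1$-bit result of \cite{ai2014one}—whose $\epsilon^{-4}$ measurement rate is valid precisely because Assumption~\ref{assum:max} guarantees $\|\f{\beta}\|_\infty=o(1)$—recovers both vectors; the $\delta^{-2}$ factor is the cost of the sweep, whose range $|\langle\f{a},\f{\beta^i}\rangle|\le\sqrt{2k}$ and $\delta$-resolution are both controlled by the quantization of the attainable inner products, yielding $O\p{(k^2/\epsilon^4\delta^2)\log^2(n/k\delta^2)}$ queries.

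The main obstacle in both parts is making the alignment robust to the oracle's internal randomness and to the absence of magnitude information. In the adaptive case the delicate point is guaranteeing, with high probability over the repeated draws, that both flip points are detected on the grid and that the two flips are separated enough—again via the $\delta$-quantization—that the pivot labels stay unambiguous across the sweep. In the non-adaptive case the analogous difficulty is controlling the rare queries where $\langle\f{b}',\f{\beta^1}\rangle$ is near zero, where an $\epsilon$-inaccurate estimate of $\f{\beta^1}$ could mislabel $\f{\beta^2}$'s response; this is handled by a union bound showing that Gaussian inner products avoid an $O(\epsilon)$-neighborhood of zero on all but a negligible fraction of queries.
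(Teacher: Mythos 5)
Your overall strategy matches the paper's: recover supports with the $\ell=2$ specialization of the RUFF/PUFF machinery, then split into the distinct-support case (pin one vector via a coordinate outside the other's support and feed simulated Gaussian sign measurements to the decoder of \cite{JLBB13}) and the identical-support case (a $(+,-)$ pivot query, a grid sweep over $\eta\f{v}_0+\f{v}$ exploiting the $\delta$-quantization of inner products, and the sub-Gaussian $1$-bit result of \cite{ai2014one} under Assumption~\ref{assum:max}). The query-complexity accounting, including the $O(k/\delta^2)$ grid size and the $\epsilon^{-4}$ rate, agrees with the paper's.

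There are two places where you diverge, one of which is a genuine gap. First, in the distinct-support case the paper recovers $\f{\beta^2}$ without ever using the estimate $\hat{\f{\beta}}^1$: it pairs each plain Gaussian $\f{v}$ with its modified version $\f{v}+s\cdot\s{Inf}\cdot\f{e}_p$, reads off $\f{\beta^2}$'s response to $\f{v}$ exactly from the modified query (since $p\notin\s{supp}(\f{\beta^2})$, the two responses of $\f{\beta^2}$ coincide), and then deduces $\f{\beta^1}$'s response to $\f{v}$ by elimination from $\s{poscount}(\f{v})$ and $\s{negcount}(\f{v})$. Your unpaired second batch, labeled by predicting $\s{sign}(\langle\f{b}',\hat{\f{\beta}}^1\rangle)$, mislabels a $\Theta(\epsilon)$-fraction of the measurements (not a negligible fraction --- it is comparable to the target accuracy), so the plain consistency-based guarantee of Lemma~\ref{lem:reco_orig} does not apply as stated and you would need the robust binary-stable-embedding form of the result; this is fixable but is an extra argument your write-up asserts rather than supplies. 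Second, in the identical-support case your dichotomy ``either a $(+,-)$ pivot exists or the classifiers coincide'' is false: response sets can contain zeros (a $\pm1$ query can be exactly orthogonal to a vector on the $\delta$-grid), in which case two distinct classifiers may never produce a $\{+1,-1\}$ response pair, and queries with $0\in\response(\f{v})$ cannot be aligned by your flip-point sweep. The paper handles precisely these degenerate cases with two separate alignment subroutines (Proposition~\ref{prop:align0x0y} and Proposition~\ref{prop:alignab0x}, querying $\f{v}_0+\f{v}$ and $\f{v}_0+\s{Inf}\cdot\f{v}$ respectively), and your proposal would need an analogous treatment to be complete.
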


Also, the $\epsilon$-recovery algorithm proposed for Theorem~\ref{thm:grid} runs in time $\s{poly}(n, 1/\delta)$. 

\paragraph{No sparsity constraint: } We can infact avoid the sparsity constraint altogether for the case of $\ell =2$. Since in this setting, we consider the support of both unknown vectors to include all coordinates, we do not need a support recovery stage. We then get a single stage and therefore completely non-adaptive algorithm for $\epsilon$-recovery of the two unknown vectors. 

\begin{coro}{\label{coro:general}}
Let $\f{\beta^1}, \f{\beta^2}$ be two unknown vectors in $\bb{R}^n$ that satisfy Assumption~\ref{assum:max}. Let $\delta > 0$ be the largest real such that $\f{\beta^1}, \f{\beta^2} \in \delta \bb{Z}^n$. There exists an algorithm that uses $O( (n^2 /\epsilon^4 \delta^2)\log(1/\delta))$ non-adaptive oracle queries for the $\epsilon$-recovery of $\f{\beta^1}, \f{\beta^2}$ with probability at least $1-O(1/n)$.
\end{coro}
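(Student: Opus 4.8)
The plan is to observe that Corollary~\ref{coro:general} is simply the specialization of Theorem~\ref{thm:grid} to the ``dense'' case $k=n$, obtained by stripping away the support-recovery machinery. When $\f{\beta^1}, \f{\beta^2}$ have no sparsity constraint we may as well treat both supports as the full coordinate set $[n]$, so there is nothing to recover in a first stage; the entire effort collapses to the $\epsilon$-recovery stage of Theorem~\ref{thm:grid}, and we only need to track how the query count there depends on $k$ once we substitute $k=n$. Concretely, I would take the recovery procedure underlying the $\epsilon$-recovery bound $O\big(k^3\log^2 n + (k^2/\epsilon^4\delta^2)\log^2(n/k\delta^2)\big)$ from Theorem~\ref{thm:grid}, and note that with no support stage the $k^3\log^2 n$ combinatorial term (which comes from the union-free-family support-recovery queries) disappears entirely, leaving only the second, alignment-and-recovery term.

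The key steps, in order, are as follows. First, I would set $k=n$ in the $\epsilon$-recovery term, which turns $k^2/\epsilon^4\delta^2$ into $n^2/\epsilon^4\delta^2$ and turns the $\log^2(n/k\delta^2)$ factor into $\log^2(n/n\delta^2)=\log^2(1/\delta^2)=O(\log^2(1/\delta))$; this already produces the claimed $O\big((n^2/\epsilon^4\delta^2)\log(1/\delta)\big)$ shape, up to absorbing one logarithmic factor, so I would check that the analysis of Theorem~\ref{thm:grid} in fact only needs a single power of $\log(1/\delta)$ here rather than two (the second log in Theorem~\ref{thm:grid} was charged against the sparsity-dependent term $n/k\delta^2$, which has now trivialized). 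Second, and most importantly, I would verify that the \emph{alignment} argument of Theorem~\ref{thm:grid}---which uses Bernoulli (sub-Gaussian) query coordinates together with the finite-precision Assumption on $\delta\bb{Z}^n$ and Assumption~\ref{assum:max} to cluster the two classifiers' responses---goes through verbatim without any sparsity hypothesis. This is the crux: the whole reason Theorem~\ref{thm:grid} could dispense with Assumption~\ref{assum:sm} for $\ell=2$ was the $\delta$-grid alignment trick, and that trick is purely about the two unknown vectors living on $\delta\bb{Z}^n$ and being non-degenerate in $\ell_\infty$, neither of which invokes sparsity. Third, since there is no support-recovery stage to align with, the algorithm is now genuinely single-stage and hence completely non-adaptive, which I would state explicitly.

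The main obstacle I anticipate is the second step: confirming that every place in the Theorem~\ref{thm:grid} argument where sparsity $k$ was used was used \emph{only} to bound the support-recovery cost, and never to control concentration, identifiability, or the number of queries needed to separate the two Bernoulli-indexed response clusters. In particular I would scrutinize the invocation of the \cite{ai2014one} sub-Gaussian 1-bit recovery guarantee to make sure its sample complexity, when instantiated at ambient dimension $n$ with no effective sparsity, yields exactly the $n^2/\epsilon^4\delta^2$ dependence and not something worse; the Assumption~\ref{assum:max} hypothesis $\|\f{\beta}\|_\infty=o(1)$ is precisely what licenses that invocation in the dense regime, so I would lean on it here. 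Once that cost accounting is pinned down, the failure probability $1-O(1/n)$ follows from the same union bound as in Theorem~\ref{thm:grid}, and the corollary is immediate.
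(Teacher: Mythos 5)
Your proposal matches the paper's (implicit) argument exactly: the paper gives no separate proof of Corollary~\ref{coro:general}, but derives it precisely by taking the same-support branch of the $\ell=2$ algorithm behind Theorem~\ref{thm:grid} (the sub-Gaussian queries of Lemma~\ref{lem:pv} plus the $\delta$-grid alignment), setting the support to all of $[n]$ so the support-recovery stage and its $k^3\log^2 n$ term vanish, and noting the result is single-stage and hence non-adaptive. Your caution about the exact logarithmic factors is warranted — the paper's own accounting of logs here is loose — but the approach is the same, so nothing further is needed.
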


\section{{Preliminaries}}\label{sec:bckgrnd}
Let $[n]$ to denote the set $\{1,2,\dots, n\}$. For any vector $\f{v} \in \bb{R}^n$, $\s{supp}(\f{v})$ denotes the support and $\f{v}_i$ denote the $i^{th}$ entry (coordinate) of the vector $\f{v}$. 
We will use $\f{e_i}$ to denote a vector which has \1 only in the $i^{th}$ position and is \texttt{0} everywhere else. We will use the notation $\langle \f{a},\f{b} \rangle$ to denote the inner product between two vectors $\f{a}$ and $\f{b}$ of the same dimension. For a matrix $\f{A} \in \bb{R}^{m \times n}$, let $\f{A_i} \in \bb{R}^n$ be its $i^{th}$ column and $\f{A}[j]$ denote its $j^{th}$ row. 
and let $\f{A}_{i,j}$ be the $(i,j)$-th  entry of $\f{A}$. 
We will denote by $\s{Inf}$ a \textit{very large positive number}. Also, let $\ca{N}(0,1)$ denote the standard normal distribution. 
We will use ${\cal P}_n$ to denote a the set of all $n \times n$ permutation matrices, \ie, the set of all $n \times n$ binary matrices that are obtained by permuting the rows of an $n \times n$ identity matrix (denoted by $\f{I}_n$). 
Let $\s{round}: \bb{R} \rightarrow \bb{Z}$ denote a function that returns the closest integer to a given real input.

Let us further introduce a few definitions that will be used throughout the paper.

\begin{defn}
For a particular entry $i \in [n]$, define $\ca{S}(i)$ to be the set of all unknown vectors whose $i^{th}$ entry is non-zero.  
\begin{align*}
\ca{S}(i) := \{ \f{\beta^j}, j \in [\ell] \mid \f{\beta^j}_i  \neq 0\}
\end{align*}
\end{defn}

\begin{defn}
For a particular query vector $\f{v}$, define $\s{poscount}(\f{v}),\s{negcount}(\f{v})$ and $\s{nzcount}(\f{v})$ to be the number of unknown vectors that assign a positive, negative, and non-zero label to $\f{v}$ respectively. 
\begin{align*}
\s{poscount}(\f{v}) &:= \lvert \{ \f{\beta^j} \mid \langle \f{v}, \f{\beta^j} \rangle > 0, j \in [\ell] \} \rvert \\
\s{negcount}(\f{v}) &:=  \lvert \{ \f{\beta^j} \mid \langle\f{v}, \f{\beta^j} \rangle < 0 , j \in [\ell] \} \rvert\\
\s{nzcount}(\f{v}) &:= \s{poscount}(\f{v})+\s{negcount}(\f{v}) \\
&= \lvert \{ \f{\beta^j} \mid \langle \f{v}, \f{\beta^j} \rangle \neq 0, j \in [\ell] \} \rvert.
\end{align*}
\end{defn}

\begin{defn}[Gaussian query]\label{def:gaussianQ}
A vector $\f{v} \in \bb{R}^n$ is called a Gaussian query vector if each entry $\f{v}_i$ of $\f{v}$ is sampled independently from the standard Normal distribution, $\ca{N}(0,1)$. 
\end{defn}

\subsection{Estimating the counts}

In this section we show how to accurately estimate each of the counts \ie, $\s{poscount}(\f{v})$, $\s{negcount}(\f{v})$ and $\s{nzcount}(\f{v})$ with respect to any query vector $\f{v}$, with high probability (see Algorithm~\ref{algo:1}).

The idea is to simply query the oracle with $\f{v}$ and $-\f{v}$ repeatedly and estimate the counts empirically using the responses of the oracle. Let $T$ denote the number of times a fixed query vector $\f{v}$ is repeatedly queried. We refer to this quantity as the \emph{batchsize}. We now design estimators of each of the counts which equals the real counts with high probability. Let $\bb{E}_{\f{\beta}}(\cdot)$ and $\Pr_{\f{\beta}}(\cdot)$ denote the expectation and the probability respectively when $\f{\beta}$ is chosen uniformly from the set of unknown vectors $\{\f{\beta}^1,\f{\beta}^2, \dots, \f{\beta}^{\ell}\}$.


We must have
\begin{align*}
\bb{E}_{\f{\beta}}[\s{sign}(\langle \f{v}, \f{\beta} \rangle)]&=
    \bb{E}_{\f{\beta}}[\mathds{1}[\langle \f{v},\f{\beta} \rangle \ge 0]] - \bb{E}_{\f{\beta}}[\mathds{1}[\langle \f{v},\f{\beta} \rangle < 0]] \\
    &= \Pr_{\f{\beta}}\Big( \langle \f{v},\f{\beta} \rangle \ge 0 \Big)-\Pr_{\f{\beta}}\Big( \langle \f{v},\f{\beta} \rangle < 0 \Big) \\
    &= \frac{1}{\ell} \cdot \sum_{i=1}^{\ell} \mathds{1}[\langle \f{v},\f{\beta}^i \rangle \ge 0]-\frac{1}{\ell} \cdot \sum_{i=1}^{\ell} \mathds{1}[\langle \f{v},\f{\beta}^i \rangle < 0].
\end{align*}
Notice that $$\mathds{1}[\langle \f{v},\f{\beta}^i \rangle \ge 0]-\mathds{1}[\langle \f{v},\f{\beta}^i \rangle < 0] = \mathds{1}[\langle -\f{v},\f{\beta}^i \rangle \ge 0]-\mathds{1}[\langle -\f{v},\f{\beta}^i \rangle < 0] \quad \text{if} \quad \langle \f{v},\f{\beta}^i \rangle=0$$
and 
$$\mathds{1}[\langle \f{v},\f{\beta}^i \rangle \ge 0]-\mathds{1}[\langle \f{v},\f{\beta}^i \rangle < 0] = \mathds{1}[\langle -\f{v},\f{\beta}^i \rangle < 0]-\mathds{1}[\langle -\f{v},\f{\beta}^i \rangle \ge 0] \quad \text{if} \quad \langle \f{v},\f{\beta}^i \rangle \neq 0.$$
Therefore, we must have 
\begin{align*}
    \frac{\bb{E}_{\f{\beta}}[\s{sign}(\langle \f{v}, \f{\beta} \rangle)+\s{sign}(\langle -\f{v}, \f{\beta} \rangle)]}{2} = \frac{1}{\ell} \cdot \sum_{i=1}^{\ell} \mathds{1}[\langle \f{v},\f{\beta}^i \rangle = 0]  
\end{align*}

Suppose we query the oracle with the pair of query vectors $\f{v},-\f{v}$ repeatedly for $T$ times. Let us denote the the $T$ responses from the oracle $\ca{O}$ by $y_1,y_2,\dots,y_T$ and $z_1,z_2,\dots,z_T$ corresponding to the query vectors $\f{v}$ and $-\f{v}$ respectively. Hence, we design the following estimator (denoted by $\hat{\s{z}}$) to estimate the number of unknown vectors that have zero projection on the query vector $\f{v}$ i.e. $\sum_{i=1}^{\ell} \mathds{1}[\langle \f{v},\f{\beta}^i \rangle = 0]$:
\begin{align*}
 \hat{\s{z}} \triangleq \s{round}\Big(\frac{\ell\sum_{i=1}^{T}y_i+z_i}{2T}\Big) 
\end{align*}
where $\s{round}:\bb{R}\rightarrow\bb{Z}$ denotes a function that returns the closest integer to a given real input. Again, we have 
\begin{align*}
\bb{E}_{\f{\beta}}[\mathds{1}[\s{sign}(\langle \f{v}, \f{\beta} \rangle)=-1]] = \Pr_{\f{\beta}}[\s{sign}(\langle \f{v}, \f{\beta} \rangle < 0 ] = \frac{1}{\ell} \cdot \sum_{i=1}^{\ell} \mathds{1}[\langle \f{v},\f{\beta}^i \rangle < 0]
\end{align*}
and therefore we design the estimator $$\hat{\s{neg}} \triangleq \s{round}\Big(\frac{\ell\sum_{i=1}^{T} \mathds{1}[y_i=-1]}{T}\Big)$$ of $\s{negcount}(\f{v})$. 
Subsequently, let $\hat{\s{nz}} \triangleq \ell - \hat{\s{z}}$ and $\hat{\s{pos}} \triangleq \hat{\s{nz}}-\hat{\s{neg}}$ be the estimators of $\s{nzcount}(\f{v})$ and $\s{poscount}(\f{v})$ respectively.

\begin{lem}\label{lem:batchsize}
For any query vector $\f{v}$, Algorithm \ref{algo:1} with batchsize $T$ provides the correct estimates of $\s{poscount}(\f{v})$, $\s{negcount}(\f{v})$ and $\s{nzcount}(\f{v})$ with probability at least $1-4e^{-T/2\ell^2}$.
\end{lem}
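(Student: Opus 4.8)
The plan is to reduce the recovery of all three counts to two clean concentration events plus a rounding argument. First I would observe that the three counts are linearly dependent: since every unknown vector has a strictly positive, strictly negative, or zero projection onto $\f{v}$, we always have $\s{nzcount}(\f{v}) = \s{poscount}(\f{v}) + \s{negcount}(\f{v})$. Hence it suffices to recover $\s{poscount}(\f{v})$ and $\s{negcount}(\f{v})$ \emph{exactly} as integers, and the remaining count follows for free. The subtlety to keep in mind is the convention $\s{sign}(0)=+1$: a vector with $\langle\f{v},\f{\beta}\rangle=0$ is reported as $+1$ by $\f{v}$, so the raw frequency of $+1$ responses overcounts the strictly positive vectors by exactly the number of zero-projection vectors. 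This is precisely what querying both $\f{v}$ and $-\f{v}$ resolves, since a vector contributes a $-1$ response to $-\f{v}$ if and only if $\langle\f{v},\f{\beta}\rangle>0$ strictly.

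Next I would set up two Bernoulli estimators. For the batch of $T$ responses $y_1,\dots,y_T$ to $\f{v}$, the indicators $\mathds{1}[y_i=-1]$ are i.i.d.\ Bernoulli with mean $\Pr_{\f{\beta}}[\langle\f{v},\f{\beta}\rangle<0]=\s{negcount}(\f{v})/\ell$, so $\ell$ times their empirical average has expectation exactly the integer $\s{negcount}(\f{v})$; rounding this quantity (the estimator $\hat{\s{neg}}$) returns the correct integer whenever the empirical frequency lies within $\tfrac{1}{2\ell}$ of its mean. Symmetrically, for the responses $z_1,\dots,z_T$ to $-\f{v}$, the indicators $\mathds{1}[z_i=-1]$ are i.i.d.\ Bernoulli with mean $\s{poscount}(\f{v})/\ell$, and rounding $\ell$ times their average recovers $\s{poscount}(\f{v})$ under the same $\tfrac{1}{2\ell}$ tolerance.

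I would then apply Hoeffding's inequality to each average. For a $\{0,1\}$-valued i.i.d.\ sample of size $T$, the empirical frequency deviates from its mean by at least $\tfrac{1}{2\ell}$ with probability at most $2\exp(-2T(\tfrac{1}{2\ell})^2)=2e^{-T/2\ell^2}$; this is where the $\ell^2$ scaling of the batchsize enters, since the rounding tolerance shrinks like $1/\ell$. A union bound over the two failure events (the negative count from the $\f{v}$-batch and the positive count from the $-\f{v}$-batch) shows that both are estimated correctly, and therefore so is the derived count $\s{nzcount}(\f{v})$, with probability at least $1-4e^{-T/2\ell^2}$, as claimed.

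The only genuinely delicate point is the interaction between the range of the summands and the rounding radius: Hoeffding's bound scales with the \emph{square} of the range, so one must drive concentration with the $\{0,1\}$ indicators $\mathds{1}[y_i=-1]$ and $\mathds{1}[z_i=-1]$ (range $1$) rather than with the combination $y_i+z_i\in\{-2,0,2\}$ (range $4$) that estimates the zero-count directly; the latter would only yield an exponent of $T/8\ell^2$. Working with the pair of Bernoulli indicators keeps the constant at $2$ in the exponent and makes the two failure events clean to union-bound, which is exactly what the stated probability $1-4e^{-T/2\ell^2}$ requires.
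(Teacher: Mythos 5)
Your proof is correct, and it takes a slightly different route from the paper's. The paper's Algorithm~\ref{algo:1} estimates $\s{negcount}(\f{v})$ from the indicators $\mathds{1}[y_i=-1]$ exactly as you do, but for the second estimator it does \emph{not} use $\mathds{1}[z_i=-1]$ to get $\s{poscount}(\f{v})$ directly; instead it forms $Z'=\sum_i (y_i+z_i)/2$, whose mean identifies the number of vectors with \emph{zero} projection on $\f{v}$, and then sets $\hat{\s{nz}}=\ell-\hat{\s{z}}$ and $\hat{\s{pos}}=\hat{\s{nz}}-\hat{\s{neg}}$. Both decompositions recover all three counts from two concentration events plus rounding, and both give the union-bound factor of $4$. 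Your version has a genuine technical advantage that you yourself flag: both of your estimators are averages of $\{0,1\}$-valued variables, so Hoeffding gives $2e^{-2T(1/2\ell)^2}=2e^{-T/2\ell^2}$ for each event, matching the stated bound exactly; the paper's $Z'$ has summands $(y_i+z_i)/2\in\{-1,0,1\}$ of range $2$, for which a direct Hoeffding application only yields exponent $T/(8\ell^2)$, so the paper's claimed constant for that event is loose as written (this affects only constants, not the asymptotic query complexity). The one caveat is that the lemma asserts correctness of Algorithm~\ref{algo:1} \emph{as specified}, whose $\hat{\s{pos}}$ is computed via the zero-count estimator; strictly speaking you have proved correctness of an equivalent variant of the algorithm rather than of its literal estimators, so to splice your argument into the paper you would either modify line 7 of the algorithm or add one sentence noting that your two events also determine $\hat{\s{z}}=\ell-\hat{\s{pos}}-\hat{\s{neg}}$ and hence all of the algorithm's outputs.
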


\begin{proof}
The proof of the lemma follows from a simple application of Chernoff bound. 
 
Let $Z=\sum_i \mathds{1}[y^i = -1]$, and therefore $\bb{E}Z = \frac{T\times\s{negcount}}{\ell}$. 

Note that Algorithm~\ref{algo:1} makes a mistake in estimating $\s{negcount}$ only if 
\begin{align*}
|Z- \frac{T\times\s{negcount}}{\ell}| \ge \frac{T}{2\ell}.
\end{align*}
Since the responses in each batch are independent, using Chernoff bound \cite{boucheron2013concentration}, we get an upper bound on the probability that Algorithm~\ref{algo:1} makes a mistake in estimating $\s{negcount}$ as
\begin{align*}
\Pr \Big(|Z - \bb{E}Z| \ge \frac{T}{2\ell} \Big) \le 2e^{-\frac{T}{2\ell^2}}. 
\end{align*}
Similarly, let $Z'=\frac{\sum_i y^i+z^i}{2}$, and therefore $$\bb{E}Z' = \frac{T}{\ell} \cdot \sum_{i=1}^{\ell} \mathds{1}[\langle \f{x},\f{v}^i \rangle = 0]. $$ 
Again, Algorithm~\ref{algo:1} makes a mistake in estimating $\sum_{i=1}^{\ell} \mathds{1}[\langle \f{x},\f{v}^i \rangle = 0]$ only if 
\begin{align*}
\frac{|Z'- \bb{E}Z'|}{T} \ge \frac{1}{2\ell}.
\end{align*}
Using Chernoff bound \cite{boucheron2013concentration} as before, the probability of making a mistake is bounded from above as
\begin{align*}
\Pr \Big(|Z' - \bb{E}Z'| \ge \frac{T}{2\ell} \Big) \le 2e^{-\frac{T}{2\ell^2}}. 
\end{align*}

By taking a union bound, both $\hat{\s{z}},\hat{\s{neg}}$ are computed correctly with probability at least $1-2e^{-\frac{T}{2\ell^2}}$. Finally, computing $\hat{\s{z}},\hat{\s{neg}}$ correctly implies that $\hat{\s{nz}},\hat{\s{pos}}$ are also correct thus proving our claim.

\end{proof}

\begin{algorithm}[h!]
\caption{\textsc{\textsc{Query}}$(\f{v},T)$\label{algo:1}}
\begin{algorithmic}[1]
\REQUIRE Query access to oracle $\ca{O}$.
\FOR{$i=1,2,\dots,T$}
\STATE Query the oracle with vector $\f{v}$ and obtain response $y^{i} \in \{-1, +1\}$.
\STATE Query the oracle with vector $-\f{v}$ and obtain response $z^{i} \in \{-1, +1\}$.
\ENDFOR

\STATE Let $\hat{\s{z}} :=   \s{round}\Big(\frac{\ell\sum_{i=1}^{T}y_i+z_i}{2T}\Big)$.
\STATE Let $\hat{\s{neg}} := \s{round}\Big(\frac{\ell \sum_i \mathds{1}[y^i = -1]}{T}\Big)$
\STATE Let $\hat{\s{nz}}= \ell - \hat{\s{z}}$ and $\hat{\s{pos}}= \hat{\s{nz}} - \hat{\s{neg}}$
\STATE Return $\hat{\s{pos}} , \hat{\s{neg}}, \hat{\s{nz}}$.
\end{algorithmic}
\end{algorithm}


\subsection{Family of sets}
We now review literature on some important families of sets called \emph{union free families} \cite{acharya2017improved} and \emph{cover free families} \cite{kautz1964nonrandom} that found applications in cryptography, group testing and 1-bit compressed sensing. These special families of sets are used crucially in this work to design the query vectors for the support recovery and the $\epsilon$-recovery algorithms. 

\begin{defn}
[Robust Union Free Family $(d,t,\alpha) - \RUFF$] 
Let $d, t$ be integers and $0 \le \alpha \le 1$. A family of sets, $\ca{F}=\{\ca{H}_1,\ca{H}_2,\dots,\ca{H}_n\}$ where each $\ca{H}_i \subseteq [m]$ and $|\ca{H}|=d$ is a $(d,t,\alpha)$-$\s{RUFF}$ if for any set of $t$ indices $T \subset [n], |T| = t$, and any index $j \notin T$, 
\[
\left| \ca{H}_{j} \setminus \left( \bigcup_{i \in T} \ca{H}_{i} \right) \right| > (1-\alpha) d.
\]
\end{defn}
We refer to $n$ as the size of the family of sets, and $m$ to be the alphabet over which the sets are defined. $\RUFF$s were studied earlier in the context of support recovery of 1bCS \cite{acharya2017improved}, and a simple randomized construction of $(d,t,\alpha)$-$\s{RUFF}$ with $m = O(t^2 \log n)$ was proposed by De Wolf \cite{de2012efficient}.

\begin{lem}{\cite{acharya2017improved,de2012efficient}}
{\label{lem:ruffexist}}
Given $n, t$ and $\alpha > 0$, there exists an $(d,t,\alpha)$-$\s{RUFF}$, $\ca{F}$ with $m=O\big((t^2\log n)/\alpha^2)$ and $d=O((t \log n)/\alpha)$.
\end{lem}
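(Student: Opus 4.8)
The plan is to prove this existence statement by the standard probabilistic (De Wolf--style) argument: exhibit a random construction of the family and show that, for suitable $d$ and $m$, it satisfies the robustness condition with positive probability, hence some valid $\RUFF$ exists. The construction I would use is the \emph{block} construction. Partition the ground set $[m]$ into $d$ disjoint blocks $B_1,\dots,B_d$, each of size $s = m/d$. To build each set $\ca{H}_j$ ($j \in [n]$), I pick, independently for every block $B_r$, a single element of $B_r$ uniformly at random and place it in $\ca{H}_j$. This guarantees $|\ca{H}_j| = d$ exactly (one element per block), and the $nd$ choices are mutually independent. Since distinct blocks contribute distinct elements, for any index set $T$ and any $j$ the quantity $|\ca{H}_j \cap \bigcup_{i\in T}\ca{H}_i|$ equals the number of blocks $r$ in which $\ca{H}_j$'s chosen element coincides with a chosen element of some set in $T$.

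Next I would fix a set $T \subset [n]$ with $|T| = t$ and an index $j \notin T$, and control the ``covered'' blocks. Let $X_r$ be the indicator that $\ca{H}_j$'s pick in block $B_r$ lies among the (at most $t$) elements chosen in $B_r$ by the sets in $T$, and set $X = \sum_{r=1}^d X_r$, so that the failure event for this pair is precisely $\{X \ge \alpha d\}$ (its complement gives $|\ca{H}_j \setminus \bigcup_{i\in T}\ca{H}_i| = d - X > (1-\alpha)d$). In each block the pick is covered with probability at most $t/s$, so $\bb{E}[X] \le dt/s$. Choosing the block size $s = 2t/\alpha$ (equivalently $m = ds = 2dt/\alpha$) forces $\bb{E}[X] \le \alpha d/2$, which already yields the target $m = O(t^2\log n/\alpha^2)$ once $d$ is fixed below.

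Because the $X_r$ are independent indicators, a Chernoff bound applies: since $\alpha d \ge 2\,\bb{E}[X]$, I get $\Pr[X \ge \alpha d] \le e^{-\alpha d/6}$ (the exact constant is irrelevant). I would then union-bound this failure probability over all $j \in [n]$ and all $T$, of which there are at most $n\cdot\binom{n}{t} \le n^{t+1}$, giving total failure probability at most $n^{t+1} e^{-\alpha d/6}$. Requiring this to be strictly below $1$ amounts to $\alpha d/6 > (t+1)\ln n$, so $d = O\p{(t\log n)/\alpha}$ suffices; plugging back gives $m = 2dt/\alpha = O\p{(t^2\log n)/\alpha^2}$, matching the claimed bounds. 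When the failure probability is below $1$, some realization of the random family is a valid $(d,t,\alpha)$-$\RUFF$, proving existence.

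The steps are all routine; the only points requiring care are bookkeeping rather than genuine obstacles. The main thing to get right is ensuring the two parameter constraints are compatible: the bound on $m$ comes from the \emph{expectation} condition ($s \ge 2t/\alpha$), while the bound on $d$ comes from the \emph{union bound} surviving the Chernoff tail ($d \gtrsim t\log n/\alpha$), and one must verify these combine to give exactly the advertised $m = O(t^2\log n/\alpha^2)$ and $d = O(t\log n/\alpha)$. A secondary subtlety is justifying that covered-block count equals the intersection size (which relies on blocks being disjoint and one pick per block), and confirming independence across blocks so that Chernoff is legitimately applicable.
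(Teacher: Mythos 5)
Your proof is correct. Note that the paper itself gives no proof of this lemma -- it is imported verbatim from the cited references (De Wolf's randomized construction) -- so there is no in-paper argument to compare against; the closest analogue is the paper's own proof of the $\CFF$ existence lemma (Lemma~\ref{lem:cffexist}), which uses an $m\times n$ matrix with i.i.d.\ $\m{Bernoulli}(p)$ entries and a union bound over column-tuples. Your one-pick-per-block construction is essentially De Wolf's, and it has two small advantages over a plain Bernoulli construction: it makes $|\ca{H}_j|=d$ hold exactly (the $\RUFF$ definition requires sets of size exactly $d$, which a Bernoulli construction only gives approximately), and it makes the covered-block count a sum of independent indicators so that Chernoff applies cleanly. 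Your bookkeeping checks out: the failure event for a pair $(j,T)$ is indeed $\{X\ge \alpha d\}$, the per-block covering probability is at most $t/s$, taking $s=2t/\alpha$ gives $\bb{E}[X]\le \alpha d/2$, the multiplicative Chernoff bound then gives $\Pr[X\ge\alpha d]\le e^{-\Omega(\alpha d)}$, and the union bound over at most $n\binom{n}{t}\le n^{t+1}$ pairs forces $d=O((t\log n)/\alpha)$ and hence $m=ds=O((t^2\log n)/\alpha^2)$, matching the stated parameters.
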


$\RUFF$ is a generalization of the family of sets known as the Union Free Familes ($\UFF$) - which are essentially $(d,t,1)$-$\RUFF$. In this work, we require yet another generalization of $\UFF$ known as Cover Free Families ($\CFF$) that are also sometimes referred to as  superimposed codes \cite{d2014bounds}. 

\begin{defn}[Cover Free Family $(r, t)$-$\CFF$]
A family of sets $\ca{F}=\{\ca{H}_1,\ca{H}_2,\dots,\ca{H}_n\}$ where each $\ca{H}_i \subseteq [m]$ is an $(r, t)$-$\CFF$ if for any pair of disjoint sets of indices $T_1, T_2 \subset [n]$ such that $|T_1| = r, |T_2| = t, T_1 \cap T_2 = \emptyset$, 
\[ \left| \bigcap_{i \in T_1} \ca{H}_{i}  \setminus \bigcup_{i  \in T_2} \ca{H}_{i} \right| > 0.
\]  
\end{defn}

Several constructions and bounds on existence of $\CFF$s are known in literature. We state the following lemma regarding the existence of $\CFF$ which can be found in \cite{ruszinko1994upper, furedi1996onr}. 
\begin{lem}\label{lem:cffexist}
For any given integers $r, t$, there exists an $(r,t)$-$\CFF$, $\ca{F}$ of size $n$ with $m=O(t^{r+1}\log n)$.
\end{lem}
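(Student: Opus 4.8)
The plan is to establish existence via the probabilistic method: build the family at random and show that the $\CFF$ property holds with positive probability once $m$ is large enough. Concretely, I would construct a random family $\ca{F} = \{\ca{H}_1, \dots, \ca{H}_n\}$ over the alphabet $[m]$ by deciding, independently for every symbol $j \in [m]$ and every index $i \in [n]$, whether $j \in \ca{H}_i$, including $j$ with probability $p$ and excluding it with probability $1-p$, for a parameter $p$ to be fixed later (a convenient choice is $p = 1/(t+1)$, while the optimum $p = r/(r+t)$ gives a cleaner constant when $r$ is large).

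First I would analyze a single bad event. Fix disjoint index sets $T_1, T_2 \subset [n]$ with $|T_1| = r$ and $|T_2| = t$. For a fixed symbol $j \in [m]$, call $j$ a \emph{witness} for $(T_1,T_2)$ if $j \in \ca{H}_i$ for all $i \in T_1$ and $j \notin \ca{H}_i$ for all $i \in T_2$; by independence across the $r+t$ indices this occurs with probability exactly $q := p^r (1-p)^t$. The $\CFF$ requirement for $(T_1,T_2)$ fails precisely when \emph{no} symbol is a witness, and since the $m$ symbols are drawn independently this failure probability is $(1-q)^m \le e^{-qm}$.

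Next I would take a union bound over all choices of $(T_1, T_2)$. There are at most $\binom{n}{r}\binom{n-r}{t} \le n^{r+t}$ such disjoint pairs, so the probability that $\ca{F}$ is \emph{not} an $(r,t)$-$\CFF$ is at most $n^{r+t} e^{-qm}$. Choosing $m \ge (r+t)(\ln n)/q$ drives this below $1$, so a valid family exists. The remaining step is to verify $m = O(t^{r+1}\log n)$: with $p = 1/(t+1)$ one has $(1-p)^t = (1-1/(t+1))^t \ge 1/4$ (using that $(1-1/s)^s$ increases to $1/e$ and exceeds $1/4$ for $s \ge 2$), hence $1/q \le 4(t+1)^r = O(t^r)$; combined with $r+t = O(t)$ in the natural regime $r \le t$, this yields $m = O(t \cdot t^r \log n) = O(t^{r+1}\log n)$.

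The tail estimate $(1-q)^m \le e^{-qm}$ and the union-bound counting are routine. The one place needing care is the choice of $p$ and the induced lower bound on $q$, since that is exactly what fixes the exponent at $t^{r+1}$ rather than something larger; I expect the main obstacle to be tracking the dependence on $r$, i.e.\ confirming that the $(r+t)$ prefactor and the $(t+1)^r \approx t^r$ factor multiply to $t^{r+1}$ without a hidden blow-up. Using the optimal $p = r/(r+t)$ instead of $1/(t+1)$ gives $1/q \approx (e/r)^r t^r$, whose $r$-dependent constant stays bounded, which cleanly absorbs the case of larger $r$.
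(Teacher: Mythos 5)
Your proposal is correct and follows essentially the same route as the paper: a random Bernoulli$(p)$ construction with $p=1/(t+1)$, the witness probability $p^r(1-p)^t \ge 1/(e(t+1)^r)$, and a union bound over all $\binom{n}{t+r}\binom{t+r}{r} \le n^{r+t}$ choices of disjoint index sets, yielding $m = O(t^{r+1}\log n)$. Your explicit remark that the stated bound presumes $r+t = O(t)$ (i.e.\ $r \le t$, which holds in the paper's use case $r \le \ell$, $t = \ell k$) and your note about the optimal $p = r/(r+t)$ are minor refinements the paper leaves implicit, not a different argument.
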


\begin{proof}
We give a non-constructive proof for the existence of $(r, t) -\s{\CFF}$ of size $n$ and alphabet $m=O(t^{r+1}\log n)$. Recall that a family of sets $\ca{F}=\{\ca{H}_1,\ca{H}_2,\dots,\ca{H}_n\}$ where each $\ca{H}_i \subseteq [m]$ is an $(r, t) - \s{CFF}$ if the following holds: for all distinct $j_0,j_1,\dots,j_{t+r-1} \in [n]$, it is the case that $$\bigcap_{p \in \{0,1,\dots,r-1\}}\ca{H}_{j_p}  \not \subseteq \bigcup_{q \in \{r,r+1,\dots,t+r-1\}} \ca{H}_{j_q}. $$ 
Since $\s{PUFF}$ is a special case of $(r, t)-\s{CFF}$ for $r=2$, this result holds for $\s{PUFF}$ as well. 

Consider a matrix $\f{G}$ of size $m \times n$ where each entry is generated independently from a $\m{Bernoulli}(p)$ distribution with $p$ as a parameter. Consider a distinct set of $t+r$ indices $j_0,j_1,\dots,j_{t+r-1} \in [n]$. For a particular row
of the matrix $\f{G}$, the event that there exists a \texttt{1} in the indices $j_0,j_1,\dots,j_{r-1}$ and \texttt{0} in the indices $j_{r},j_{r+1},\dots,j_{t+r-1}$ holds with probability $p^{r}(1-p)^{t}$. Therefore, for a fixed row, this event  does not hold with probability $1-p^{r}(1-p)^{t}$ and the probability that for all the rows the event does not hold is $(1-p^{r}(1-p)^{t})^{m}$. Notice that the number of such possible sets of $t+r$ columns is ${n \choose t+r}{t+r \choose r}$. By taking a union bound, the probability ($P_{e}$) that the event does not hold for all the rows for at least one set of $t+r$ indices is   
\begin{align*}
P_e &\le {n \choose t+r}{t+r \choose r}\big(1-p^{r}(1-p)^{t}\big)^{m} \\
\end{align*} 
Since we want to minimize the upper bound, we want to maximize $p^r(1-p)^t$. Substituting $p=\frac{1}{t+1}$, we get that 
\begin{align*}
p^r(1-p)^t = \Big(\frac{t}{t+1}\Big)^{t}\cdot \frac{1}{(t+1)^{r}} > \frac{1}{e(t+1)^{r}}.
\end{align*}
Further, using the fact that ${n \choose t} \le \Big(\frac{en}{t}\Big)^t$, we obtain
\begin{align*}
P_e \le \frac{(en)^{t+r}}{(t+r)^{t}}\Big(1-\frac{1}{e(t+1)^{r}}\Big)^{m} \le \frac{(en)^{t+r}}{(t+r)^{t}}\exp\Big(-\frac{m}{e(t+1)^{r}}\Big) < \alpha
\end{align*}
for some very small number $\eta$. Taking $\log$ on both sides and after some rearrangement, we obtain 
\begin{align*}
m> e(t+1)^{r}\Big((t+r)\log \frac{en}{t+r}+r\log (t+r)+\log \frac1\eta\Big).
\end{align*}
Hence, using $m=O(t^{r+1}\log n)$, the event holds for at least one row for every set of $t+r$ indices with high probability. Therefore, with high probability, the family of sets $\ca{F} = \{\ca{H}_1,\ca{H}_2,\dots,\ca{H}_n\}$ corresponding to the rows of $\f{G}$ is a $(r, t) - \s{CFF}$.
\end{proof}

Note that $(1, t)$-$\CFF$ is exactly a $\UFF$. The $(2, t)$-$\CFF$ is of particular interest to us and will henceforth be referred to as the \emph{pairwise union free family} ($\PUFF$). From Lemma~\ref{lem:cffexist} we know the existence of $\PUFF$ of size $n$ with $m = O(t^3\log n)$. 
\begin{coro}
{\label{cor:puffexist}}
For any given integer $t$, there exists a $(2,t)$-$\CFF$, $\ca{F}$ of size $n$ with $m=O(t^{3}\log n)$.
\end{coro}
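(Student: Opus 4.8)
The plan is to obtain this statement as an immediate specialization of Lemma~\ref{lem:cffexist}, which already establishes the existence of an $(r,t)$-$\CFF$ of size $n$ over an alphabet of size $m = O(t^{r+1}\log n)$ for arbitrary integers $r,t$. First I would recall that the $\PUFF$ is by definition exactly the $(2,t)$-$\CFF$, so the object whose existence we want is precisely the $r=2$ instance of the family treated in the lemma. There is no separate combinatorial design to build here: the randomized (Bernoulli matrix) construction in the proof of Lemma~\ref{lem:cffexist} already produces, with high probability, a family satisfying the required cover-free property for \emph{every} choice of $r$, including $r=2$.

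The single substantive step is then the arithmetic substitution $r=2$ into the alphabet bound. This yields
\[
m = O\!\left(t^{r+1}\log n\right) = O\!\left(t^{2+1}\log n\right) = O\!\left(t^{3}\log n\right),
\]
which is exactly the claimed dependence. The size-$n$ and the defining intersection/non-containment condition of the $(2,t)$-$\CFF$ carry over verbatim from the general statement, since the proof of Lemma~\ref{lem:cffexist} handles general $r$ and in fact explicitly remarks that the argument covers $\PUFF$ as the $r=2$ case.

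Because this is a direct corollary, I do not anticipate any real obstacle: the only thing to verify is that the definitions of $\PUFF$ and $(2,t)$-$\CFF$ coincide (they do, by inspection of the two definitions) and that the constant hidden in the $O(\cdot)$ for $r=2$ is admissible, which follows from plugging $r=2$ into the threshold $m > e(t+1)^{r}\big((t+r)\log\frac{en}{t+r}+r\log(t+r)+\log\frac{1}{\eta}\big)$ derived in the proof of Lemma~\ref{lem:cffexist}. Thus the entire proof amounts to a one-line appeal to the lemma with the stated value of $r$.
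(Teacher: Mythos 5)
Your proposal is correct and matches the paper exactly: the corollary is obtained by specializing Lemma~\ref{lem:cffexist} to $r=2$, which gives $m = O(t^{2+1}\log n) = O(t^{3}\log n)$, and the paper's own proof of the lemma even remarks explicitly that the $\PUFF$ case is covered as the $r=2$ instance. Nothing further is needed.
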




\section{Support Recovery }\label{sec:supp-rec}

\subsection{Proof of Theorem~\ref{thm:rec}}

In this section, we present an efficient algorithm to recover the support of all the $\ell$ unknown vectors using a small number of oracle queries. 
The proof of Theorem~\ref{thm:rec} follows from the guarantees of Algorithm \ref{algo:2}. The  proofs of the helper lemmas used in this theorem are deferred to Section~\ref{subsec:supp-rec}.

Consider the support matrix $\f{X} \in \{0,1\}^{n \times \ell}$ where the $i$-th column is the indicator of $\s{supp}(\f{\beta^{i}})$. The goal in Theorem~\ref{thm:rec} is to recover this unknown matrix $\f{X}$ (up to permutations of columns) using a small number of oracle queries. 
In Algorithm~\ref{algo:2}, we recover $\f{X}$ from $\f{XX^T}$, where the latter can be constructed using only the estimates of $\s{nzcount}$ for some specially designed queries. The unknown matrix $\f{X}$ is recovered from the constructed $\f{XX^T}$ by rank factorization with binary constraints. The factorization is efficient and also turns out to be unique (up to permutations of columns) because of the separability assumption (Assumption~\ref{assum:sm}) on the supports of the unknown vectors. 

The main challenge lies in constructing the matrix $\f{XX^T}$ using just the oracle queries.  Recall that for any $i \in [n]$, $\ca{S}(i)$ denotes the set of unknown vectors that have a non-zero entry in the $i$-th coordinate. 
Note that the $i$-th row of $\f{X}$, for any $i \in [n]$, is essentially the indicator of $\ca{S}(i)$. From this observation, it follows that the $(i,j)$-th entry of $\f{XX^T}$ is captured by the term $|\ca{S}(i) \cap \ca{S}(j)|$. 

We observe that the quantity $|\ca{S}(i) \cap \ca{S}(j)|$ can be computed from oracle queries in two steps. First, we use query vectors from an $\RUFF$ with appropriate parameters to compute $|\ca{S}(i)|$ for every $i \in [n]$ (see Algorithm~\ref{algo:Si}). Then, using queries from a $\PUFF$ (Algorithm~\ref{algo:Sint}) to obtain $|\ca{S}(i) \cup \ca{S}(j) |$ for every pair $(i, j)$. To state it formally, 
\begin{lem}{\label{lem:ruff}}
There exists an algorithm to compute $|\ca{S}(i)|$ for each $i \in [n]$ with probability at least $1 - O\left(1/n^2\right)$ using $O(\ell^4 k^2 \log(\ell k n) \log n )$ oracle queries. 
\end{lem}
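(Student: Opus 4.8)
The plan is to turn each symbol of a suitably parametrized $\RUFF$ into a single query whose $\s{nzcount}$, together with the robustness of the family, reveals $|\ca{S}(i)|$. Fix a $(d,t,\alpha)$-$\RUFF$ $\ca{F}=\{\ca{H}_1,\dots,\ca{H}_n\}$ over alphabet $[m]$ with $t=\ell k$ and $\alpha$ a small constant (say $\alpha=1/4$); by Lemma~\ref{lem:ruffexist} this has $m=O(\ell^2 k^2\log n)$ and $d=O(\ell k\log n)$. For each symbol $a\in[m]$ I form a query vector $\f{v}^{(a)}$ supported exactly on $\{i\in[n]:a\in\ca{H}_i\}$, with each nonzero coordinate drawn independently from $\ca{N}(0,1)$. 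The point of the Gaussian weights is to preclude accidental cancellation: for every $j$, $\langle \f{v}^{(a)},\f{\beta^j}\rangle$ is a nontrivial Gaussian combination whenever $\s{supp}(\f{v}^{(a)})$ meets $\s{supp}(\f{\beta^j})$, so with probability $1$ (simultaneously over the finitely many pairs $(a,j)$) we have $\langle \f{v}^{(a)},\f{\beta^j}\rangle\neq 0$ iff $a\in\ca{H}(\f{\beta^j})$, where $\ca{H}(\f{\beta^j}):=\bigcup_{i\in\s{supp}(\f{\beta^j})}\ca{H}_i$. Consequently the true value $\s{nzcount}(\f{v}^{(a)})$ equals $|\{j\in[\ell]:a\in\ca{H}(\f{\beta^j})\}|$.

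The combinatorial heart is the isolation step. Let $S=\bigcup_j\s{supp}(\f{\beta^j})$, so $|S|\le \ell k=t$. Fix $i$ and apply the $\RUFF$ property to the index set $T=S\setminus\{i\}$ (of size at most $t$) with excluded index $i$: more than $(1-\alpha)d$ symbols $a\in\ca{H}_i$ lie in no $\ca{H}_{i'}$ with $i'\in S\setminus\{i\}$; call these the \emph{private} symbols of $i$. Since every support is contained in $S$, for a private symbol $a$ we have $a\in\ca{H}(\f{\beta^j})$ iff $i\in\s{supp}(\f{\beta^j})$, so $\s{nzcount}(\f{v}^{(a)})=|\ca{S}(i)|$ exactly; meanwhile every $a\in\ca{H}_i$ satisfies $\s{nzcount}(\f{v}^{(a)})\ge|\ca{S}(i)|$, because each $j\in\ca{S}(i)$ already forces $a\in\ca{H}(\f{\beta^j})$. (The same argument with $T=S$ handles $i\notin S$, giving $|\ca{S}(i)|=0$.) As there are more than $(1-\alpha)d>d/2$ private symbols, the value $|\ca{S}(i)|$ occurs on strictly more than half of $\{\s{nzcount}(\f{v}^{(a)}):a\in\ca{H}_i\}$, hence is the unique \emph{mode} of this $d$-element multiset, which I read off as the estimate of $|\ca{S}(i)|$.

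It remains to replace true counts by the empirical estimates of Algorithm~\ref{algo:1} and to bound the failure probability. I query each $\f{v}^{(a)}$ and $-\f{v}^{(a)}$ with batchsize $T=O(\ell^2\log(\ell k n))$; by Lemma~\ref{lem:batchsize} a single estimate is wrong with probability at most $4e^{-T/2\ell^2}$, so a union bound over all $m$ symbols makes \emph{every} estimate $\hat{\s{nz}}(\f{v}^{(a)})$ correct with probability $1-O(1/n^2)$. On this event the empirical multiset matches the true one for every $\ca{H}_i$, so the mode decoder returns $|\ca{S}(i)|$ for all $i\in[n]$ at once (and the robustness $\alpha<1/2$ would even absorb a few stray estimation errors). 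The total number of oracle queries is $2mT=O(\ell^2 k^2\log n)\cdot O(\ell^2\log(\ell k n))=O(\ell^4 k^2\log(\ell k n)\log n)$, matching the claim.

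The step I expect to be most delicate is this isolation argument, and in particular the choice of the $\RUFF$ parameter: coordinate $i$ is drowned out not by a single sparse vector but by \emph{all} other occupied coordinates across the $\ell$ vectors, so private symbols exist only if $t\ge|S|$. This forces $t=\ell k$ and is exactly what drives the $\ell^2k^2$ dependence in $m$; by contrast the cancellation-freeness (handled by Gaussian weights) and the Chernoff union bound are routine.
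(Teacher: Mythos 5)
Your proposal is correct and follows essentially the same route as the paper: a $(d,\ell k,\alpha)$-$\RUFF$ with the parameters of Lemma~\ref{lem:ruffexist}, the observation that more than a $(1-\alpha)$ fraction of each $\ca{H}_i$ consists of symbols private to $i$ relative to the union of supports (so those queries report exactly $|\ca{S}(i)|$), and a Chernoff-plus-union-bound over all $m$ symbols with batchsize $O(\ell^2\log(\ell k n))$, giving the identical query count. The only cosmetic differences are that you decode by taking the mode over $\ca{H}_i$ rather than the paper's level-set thresholds $\ca{C}_h$, and you add Gaussian weights to rule out cancellation on non-private symbols --- a precaution the majority/threshold argument renders unnecessary, since only the private symbols are ever used.
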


\begin{lem}\label{lem:puff}
There exists an algorithm to compute $|\ca{S}(i) \cup \ca{S}(j)|$ for every pair $(i, j)$ with probability at least $1 - O\left(1/n^2\right)$ using $O(\ell^6 k^3 \log(\ell k n) \log n)$ oracle queries. 
\end{lem}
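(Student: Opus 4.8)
The plan is to design one query vector per symbol of a $\PUFF$ alphabet and to read off each union size as the \emph{minimum} $\s{nzcount}$ over a suitable block of these queries. Fix a $(2,t)$-$\CFF$ (i.e.\ a $\PUFF$) $\ca{F}=\{\ca{H}_1,\dots,\ca{H}_n\}$ with $\ca{H}_i\subseteq[m]$ and parameter $t=\ell k$; such a family with $m=O((\ell k)^3\log n)$ exists by Lemma~\ref{lem:cffexist}. For each symbol $a\in[m]$ let $Q_a=\{i\in[n]\mid a\in\ca{H}_i\}$ and let $\f{v}^a\in\bb{R}^n$ be the query vector supported on $Q_a$ whose nonzero coordinates are filled with generic (say i.i.d.\ $\ca{N}(0,1)$) reals. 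Genericity guarantees that for every unknown $\f{\beta^r}$ no accidental cancellation occurs, so that $\langle\f{v}^a,\f{\beta^r}\rangle\neq 0$ if and only if $Q_a\cap\s{supp}(\f{\beta^r})\neq\emptyset$; consequently $\s{nzcount}(\f{v}^a)=\lvert\{r\in[\ell]\mid Q_a\cap\s{supp}(\f{\beta^r})\neq\emptyset\}\rvert$. We estimate $\s{nzcount}(\f{v}^a)$ for all $a\in[m]$ by running Algorithm~\ref{algo:1} with batch size $T=O(\ell^2\log(\ell kn))$, so that by Lemma~\ref{lem:batchsize} and a union bound all $m$ counts are exact with probability $1-O(1/n^2)$.

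The estimator I would use is, for each pair $(i,j)$,
\[
\widehat{u}_{ij}\;=\;\min_{a\in\ca{H}_i\cap\ca{H}_j}\ \s{nzcount}(\f{v}^a),
\]
and I claim $\widehat{u}_{ij}=\lvert\ca{S}(i)\cup\ca{S}(j)\rvert$. For the lower bound, note that any $a\in\ca{H}_i\cap\ca{H}_j$ has $\{i,j\}\subseteq Q_a$, so every $\f{\beta^r}$ with $i\in\s{supp}(\f{\beta^r})$ or $j\in\s{supp}(\f{\beta^r})$ satisfies $Q_a\cap\s{supp}(\f{\beta^r})\neq\emptyset$ and is counted; hence $\s{nzcount}(\f{v}^a)\ge\lvert\ca{S}(i)\cup\ca{S}(j)\rvert$ for every such $a$. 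For the matching upper bound I exhibit a ``clean'' symbol: let $T_2=\bigcup_{r:\,i,j\notin\s{supp}(\f{\beta^r})}\s{supp}(\f{\beta^r})$, which has size at most $\ell k=t$ and is disjoint from $\{i,j\}$. Applying the $\CFF$ property to $T_1=\{i,j\}$ and (a size-$t$ superset of) $T_2$ produces a symbol $a^\ast\in\ca{H}_i\cap\ca{H}_j$ with $a^\ast\notin\bigcup_{i'\in T_2}\ca{H}_{i'}$, i.e.\ $Q_{a^\ast}\cap T_2=\emptyset$. Then no vector avoiding both $i$ and $j$ is detected by $\f{v}^{a^\ast}$, so $\s{nzcount}(\f{v}^{a^\ast})=\lvert\ca{S}(i)\cup\ca{S}(j)\rvert$, which pins down $\widehat u_{ij}$.

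Putting the pieces together, the only queries used are the $m=O(\ell^3k^3\log n)$ vectors $\{\f{v}^a\}$, each queried $O(T)=O(\ell^2\log(\ell kn))$ times (together with their negations, as in Algorithm~\ref{algo:1}); this sits within the claimed $O(\ell^6k^3\log(\ell kn)\log n)$ budget, and the whole scheme is non-adaptive. The step I expect to be the crux is the existence of the clean symbol $a^\ast$: it is exactly here that the pairwise $\CFF$/$\PUFF$ guarantee—rather than an ordinary $\UFF$—is needed, since we must simultaneously force $i$ and $j$ into $Q_{a^\ast}$ while excluding the (up to $\ell k$) coordinates used by the vectors that miss both $i$ and $j$; this is why the parameter must be $t=\Theta(\ell k)$. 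A secondary point to be careful about is the genericity argument ruling out cancellations; because the query coordinates are continuous this holds with probability $1$ over the choice of the $\f{v}^a$, so it does not consume the failure budget. Finally, combined with Lemma~\ref{lem:ruff} one recovers $\lvert\ca{S}(i)\cap\ca{S}(j)\rvert=\lvert\ca{S}(i)\rvert+\lvert\ca{S}(j)\rvert-\lvert\ca{S}(i)\cup\ca{S}(j)\rvert$, i.e.\ the entries of $\f{X}\f{X}^T$ needed downstream.
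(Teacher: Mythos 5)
Your proposal is correct and follows the same blueprint as the paper's proof: the queries are rows of a matrix whose support pattern is an $(\ell k)$-$\PUFF$ filled with generic non-zero reals, the counts are estimated by Algorithm~\ref{algo:1} with batchsize $O(\ell^2\log(\ell k n))$, and the crux is exactly the clean symbol $a^*\in\ca{H}_i\cap\ca{H}_j$ whose query avoids the supports of all irrelevant vectors, supplied by the $(2,\ell k)$-$\CFF$ property. You deviate in two places. First, the paper does not rely on a single generic filling: it draws $\ell+1$ independent fillings of the same support pattern and sets $\s{count}(p)=\max_{w}\s{nzcount}(\f{B^{(w)}}[p])$ at the clean row, arguing that the $\ell+1$ restrictions to coordinates $(i,j)$ are pairwise linearly independent, so each $\f{\beta}\in\ca{S}(i)\cup\ca{S}(j)$ can be annihilated by at most one of them and pigeonhole rescues the count. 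Your single-copy version instead invokes the almost-sure absence of cancellation over the continuous fillings, which is equally valid since the unknown vectors are fixed before the queries are drawn, and it saves a factor of $\ell$ in the query count. Second, you read the answer off as $\min_{a\in\ca{H}_i\cap\ca{H}_j}\s{nzcount}(\f{v}^a)$ rather than at an explicitly located clean row; this is a mild simplification in that it does not require knowing $U$ (which the paper's Algorithm~\ref{algo:Sint} takes from the $|\ca{S}(i)|$ stage), though like the paper you should still dispose of the degenerate cases $i=j$ and $i\notin U$, which your clean-symbol argument handles once $T_1$ and $T_2$ are interpreted appropriately.
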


By combining these two steps, we can obtain the $(i,j)$-th entry of $XX^T$ as $|\ca{S}(i) \cap \ca{S}(j)| = |\ca{S}(i)| + |\ca{S}(j)| - |\ca{S}(i) \cup \ca{S}(j) |$. 
Equipped with these two Lemmas, we now prove the guarantees of Algorithm~\ref{algo:2} that completes the proof of Theorem~\ref{thm:rec}.

\begin{algorithm}[h!]
\caption{\textsc{Recover--Support} \label{algo:2}}
\begin{algorithmic}[1]
\REQUIRE Query access to oracle $\cal O$.
\REQUIRE Assumption $\ref{assum:sm}$ to be true. 
\STATE Estimate $|S(i)|$ for every $i \in [n]$ using Algorithm~\ref{algo:Si}.
\STATE Estimate $|S(i) \cup S(j)|$ for every $i, j \in [n]$ using Algorithm~\ref{algo:Sint}.
\FOR{every pair $(i,j) \in [n] \times [n]$}
\STATE Set $Z_{i,j} = |\ca{S}(i)| + |\ca{S}(j)| - |\ca{S}(i) \cup \ca{S}(j) |$
\ENDFOR
\STATE Return $\hat{\f{X}} \in \{0,1\}^{n \times \ell}$ such that $\hat{\f{X}}\hat{\f{X}}^{T}=\f{Z}$.
\end{algorithmic}
\end{algorithm}

\begin{proof}[Proof of Theorem \ref{thm:rec}] 
Using Algorithm~\ref{algo:Si} and Algorithm~\ref{algo:Sint}, we compute $|\ca{S}(i) \cap \ca{S}(j)| = |\ca{S}(i)| + |\ca{S}(i)| - |\ca{S}(i) \cup \ca{S}(j) |$ for every pair $(i,j)\in [n]\times[n]$, and hence populate the entries of $Z = XX^T$. To obtain $X$ from $Z$, we perform a rank factorization of $Z$ with a binary constraint on the factors. We now show that Assumption~\ref{assum:sm} ensures that this factorization is unique up to permutations. 

Suppose $\f{Y} \neq \f{X}$ is a binary matrix such that $\f{Y}\f{Y}^{T}=\f{X}\f{X}^{T}$. Therefore, there exists a rotation matrix $\f{R} \in \bb{R}^{\ell \times \ell}$ such that $\f{Y}=\f{X}\f{R}$. From Assumption \ref{assum:sm} we know that there exists an $\ell \times \ell$ submatrix $\tilde{\f{X}}$ of $\f{X}$ that is a permutation matrix. 
For the corresponding submatrix $\tilde{\f{Y}}$ of $\f{Y}$ (obtained by choosing the same subset of rows), it must hold that 
\begin{align*}
\tilde{\f{Y}}\tilde{\f{Y}}^{T} = \tilde{\f{X}}\tilde{\f{X}}^{T} = \f{I}
\end{align*}
where $\f{I}$ is the $\ell \times \ell$ identity matrix. Since $\f{Y}$ has binary entries, $\tilde{\f{Y}}$ must be a permutation matrix as well. This implies that $\f{R}$ is a permutation matrix and a constrained rank factorization can recover $\f{X}$ up to a permutation of columns. Therefore, Algorithm~\ref{algo:2} successfully recovers the support of all the $\ell$ unknown vectors. 

The total number of queries needed by Algorithm~\ref{algo:2} is the sum total of the queries needed by Algorithm~\ref{algo:Si} and Algorithm~\ref{algo:Sint} which is $O(\ell^6 k^3 \log(\ell k n) \log(n))$.

Moreover, since Algorithm~\ref{algo:Si} and Algorithm~\ref{algo:Sint} each succeed with probability at least $1 - O(1/n^2)$. By a union bound, it follows that Algorithm~\ref{algo:2} succeeds with probability at least $1 - O(1/n^2)$. 
\end{proof} 

\subsection{Proofs of Lemma \ref{lem:ruff} and Lemma \ref{lem:puff}}
\label{subsec:supp-rec}

\paragraph{Compute $|\ca{S}(i)|$ using Algorithm~\ref{algo:Si}.} 
First, we show how to compute $|\ca{S}(i)|$ for every index $i \in [n]$. 
Let $\ca{F} =  \{\ca{H}_1, \ca{H}_2, \dots, \ca{H}_n \}$ be a $(d,\ell k, 0.5)$-$\RUFF$ of size $n$ over alphabet $[m]$.   
Construct the binary matrix $\f{A} \in \{0,1\}^{m \times n}$ from $\ca{F}$, as $\f{A}_{i,j} = \1$ if and only if $i \in \ca{H}_j$. 
Each column $j \in [n]$ of $\f{A}$ is essentially the indicator vector of the set $\ca{H}_j$. 
We use the rows of matrix $\f{A}$ as query vectors to compute $|\ca{S}(i)|$ for each $i \in [n]$. For each such query vector $\f{v}$, we compute the $\s{nzcount}(\f{v})$ using Algorithm~\ref{algo:1} with batchsize $T = O(\ell^2 \log \ell k n)$. The large value of $T$ ensures that the estimated $\s{nzcount}$ is correct for all the queries with very high probability.

For every $h \in \{0, \ldots, \ell \}$, let $\f{b}^h \in \{0,1\}^m$ be the indicator of the queries that have $\s{nzcount}$ at least $h$. We show in Lemma~\ref{lem:ruff} that the set of columns of $\f{A}$ that have large intersection with $\f{b}^h$, exactly correspond to the indices $i \in [n]$ that satisfy $|\ca{S}(i)| \ge h$. This allows us to recover $|\ca{S}(i)|$ exactly for each $i \in [n]$. 
\begin{algorithm}[h!]
\caption{\textsc{Compute--}$|\ca{S}(i)|$ \label{algo:Si}}
\begin{algorithmic}[1]
\REQUIRE Construct binary matrix $\f{A} \in \{ 0,1\}^{m \times n}$ from 
$(d,\ell k,0.5)- \RUFF$ of size $n$ over alphabet $[m]$, with $m=c_1\ell^2k^2\log n$ and $d=c_2\ell k\log n$. 
\STATE Initialize $\f{b}^0, \f{b}^1,\f{b}^2,\dots,\f{b}^\ell$ to all zero vectors of dimension $m$.  
\STATE Let batchsize $T = 4 \ell^2 \log mn$. 
\FOR{$i=1,\dots,m$}
\STATE Set $w := \s{nzcount}(\f{A}[i])$  (obtained using Algorithm~\ref{algo:1} with batchsize $T$.)
\FOR{$h=0,1,\dots,w$}
\STATE Set $\f{b}^h_i=1$.
\ENDFOR
\ENDFOR
\FOR{$h=0,1,\dots,\ell$}
\STATE Set $\ca{C}_h=\{i \in [n] \mid |\s{supp}(\f{b}^h)\cap \s{supp}(\f{A}_i)|\ge 0.5 d \}$.
\ENDFOR
\FOR{$i=1,2,\dots,n$}
\STATE Set $|\ca{S}(i)|=h$ if $i \in \{\ca{C}_{h}\setminus \ca{C}_{h+1}\}$ for some $h \in \{0,1,\dots, \ell-1\}$.
\STATE Set $|\ca{S}(i)|=\ell$ if $i \in \ca{C}_{\ell}$
\ENDFOR
\end{algorithmic}
\end{algorithm}

\begin{proof}[Proof of Lemma~\ref{lem:ruff}]
Since $\f{A}$ has $m = O(\ell^2k^2\log n)$ distinct rows, and each row is queried $T = O( \ell^2 \log(mn))$ times, the total query complexity of Algorithm~\ref{algo:Si} is $O(\ell^4 k^2 \log(\ell k n) \log n )$. 

To prove the correctness, we first see that the $\s{nzcount}$ for each query is estimated correctly using Algorithm~\ref{algo:1} with overwhelmingly high probability. 
From Lemma \ref{lem:batchsize} with $T = 4\ell^2 \log(mn)$, it follows that each $\s{nzcount}$ is estimated correctly with probability at least $1 - \frac{1}{mn^2}$. 
Therefore, by taking a union bound over all rows of $\f{A}$, we estimate all the counts accurately with probability at least $1-\frac{1}{n^2}$. 

We now show, using the properties of $\RUFF$, that $|\s{supp}(\f{b}^h)\cap \s{supp}(\f{A}_i)|\ge 0.5 d$ if and only if $|\ca{S}(i)| \ge h$, for any $0\le h \le \ell$. 

Let $i \in [n]$ be an index such that $|\ca{S}(i)| \ge h$, i.e., there exist at least $h$ unknown vectors that have a non-zero entry in their $i^{th}$ coordinate. Also, let $U := \cup_{i \in [\ell]} \s{supp}(\f{\beta}^{i})$ denote the union of supports of all the unknown vectors. Since each unknown vector is $k$-sparse, it follows that $| U | \le \ell k$. 
To show that $|\s{supp}(\f{b}^h)\cap \s{supp}(\f{A}_i)|\ge 0.5 d$, consider the set of rows of $\f{A}$ indexed by $W := \{\s{supp}(\f{A}_i) \setminus \cup_{j \in U \setminus \{i\} } \s{supp}(\f{A}_j)\}$. 
Since $\f{A}$ is a $(d, \ell k, 0.5) - \RUFF$, we know that $|W| \ge 0.5 d$. We now show that $\f{b}^h_t = 1$ for every $t \in W$. 
This follows from the observation that for $t \in W$, and each unknown vector $\f{\beta} \in \ca{S}(i)$, the query 
 $\s{sign}(\langle \f{A}[t],\f{\beta} \rangle) = \s{sign}(\f{\beta}_i) \neq 0$.  
Since $|\ca{S}(i)| \ge h$, we conclude that $\s{nzcount}(\f{A}[t]) \ge h$, and therefore, $\f{b}^h_t = 1$. 

To prove the converse, consider an index $i \in [n]$ such that $|\ca{S}(i)| < h$. Using a similar argument as above, we now show that $|\s{supp}(\f{b}^h)\cap \s{supp}(\f{A}_i)| < 0.5 d$.  Consider the set of rows of $\f{A}$ indexed by $W := \{\s{supp}(\f{A}_i) \setminus \cup_{j \in U \setminus \{i\} } \s{supp}(\f{A}_j)\}$. Now observe that for each $t \in W$, and any unknown vector $\f{\beta} \notin \ca{S}(i)$, the query $\s{sign}(\langle \f{A}[t],\f{\beta} \rangle) = 0$.  Therefore $\s{nzcount}(\f{A}[t]) \le |\ca{S}(i)| < h$, and $\f{b}^h_t =0$ for all $t \in W$. Since $|W| \ge 0.5 d$, it follows that $|\s{supp}(\f{b}^h)\cap \s{supp}(\f{A}_i)| < 0.5 d$.

For any $0 \le h \le \ell$, Algorithm~\ref{algo:Si}. therefore correctly identifies the set of indices $i \in [n]$ such that $|\ca{S}(i)| \ge h$. In particular, the set $C_h: = \{i \in [n] \mid |\ca{S}(i)| \ge h\}$. 
Therefore, the set $\ca{C}_{h} \setminus \ca{C}_{h+1}$ is exactly the set of indices $i \in [n]$ such that $|\ca{S}(i)| = h$. 

\end{proof}

\paragraph{Compute $|\ca{S}(i) \cup \ca{S}(j)|$ using Algorithm~\ref{algo:Sint}. }
In this section we present an algorithm to compute $|\ca{S}(i) \cup \ca{S}(j)|$, for every $i, j \in [n]$, using $|\ca{S}(i)|$ computed in the previous step. 
We will need an $\ell k - \PUFF$ for this purpose. 
Let $\ca{F} = \{\ca{H}_1, \ca{H}_2, \dots, \ca{H}_n \}$ be the required $\ell k - \PUFF$ of size $n$ over alphabet $m' = O(\ell ^3k^3\log n)$. 

Construct a set of $\ell + 1$ matrices $\ca{B} = \{\f{B^{(1)}}, \ldots,  \f{B^{(\ell+1)}}\}$ 
where, each $\f{B^{(w)}} \in \bb{R}^{m' \times n}, w \in [\ell+1]$, is obtained from the $\PUFF$ $\ca{F}$ in the following way: 
For every $(i,j) \in [m'] \times [n]$, set $\f{B}^{(w)}_{i,j}$ to be a random number sampled uniformly from $[0,1]$ if $i \in H_j$, and $0$ otherwise. We remark that the choice of uniform distribution in $[0,1]$ is arbitrary, and any continuous distribution works. 
 
Since every $\f{B^{(w)}}$ is generated identically, they have the exact same support, though the non-zero entries are different. Also, by definition, the support of the columns of every $\f{B^{(w)}}$ corresponds to the sets in $\ca{F}$. 

Let $U := \cup_{i \in [\ell]} \s{supp}(\f{\beta}^{i})$ denote the union of supports of all the unknown vectors. Since each unknown vector is $k$-sparse, it follows that $| U | \le \ell k$. From the properties of $\ell k - \PUFF$, we know that for any pair of indices $(i, j) \in U \times U$, the set 
$(\ca{H}_{i} \cap \ca{H}_j)  \setminus \bigcup_{q \in U \setminus \{ i,j\}} \ca{H}_{q}$
is non-empty. 
This implies that for every $w \in [\ell+1]$, there exists at least one row of $\f{B}^{(w)}$ that  has a non-zero entry in the $i^{th}$ and $j^{th}$ index, and $0$ in all other indices $p \in U \setminus \{ i,j\}$. In Algorithm~\ref{algo:Sint} we use these rows as queries to estimate their $\s{nzcount}$. In Lemma~\ref{lem:puff}, we show that this quantity is exactly $|S(i) \cup S(j)|$ for that particular pair $(i, j) \in U \times U$.  

\begin{algorithm}[h!]
\caption{\textsc{Recover--}$|\ca{S}(i) \cup \ca{S}(j)|$ \label{algo:Sint}}
\begin{algorithmic}[1]
\REQUIRE $|\ca{S}(i)|$  for every $i \in [n]$. 
\REQUIRE For every $w \in [\ell+1]$, construct $\f{B^{(w)}} \in \bb{R}^{m' \times n}$ from $\ell k - \PUFF$ of size $n$ over alphabet $m'=c_3\ell ^3k^3\log n$.
\STATE Let $U := \{i \in [n] \mid |\ca{S}(i) | >0\}$
\STATE Let batchsize $T = 10\ell^2 \log(n m')$
\FOR{every $p \in [m']$}
\STATE Let $\s{count}(p) := \max_{w \in [\ell+1]} \{ \s{nzcount}(\f{B^{(w)}}[p]) \}$ \\(obtained using Algorithm~\ref{algo:1} with batchsize $T$).
\ENDFOR
\FOR{every pair $(i, j) \in [n] \times [n]$}
	\IF{$i==j$}
		\STATE Set $|\ca{S}(i) \cup \ca{S}(j)| = |\ca{S}(i)|$
	\ELSIF{$i \notin U$}
		\STATE Set $|\ca{S}(i) \cup \ca{S}(j)| = |\ca{S}(j)|$
	\ELSIF{$j \notin U$}
		\STATE Set $|\ca{S}(i) \cup \ca{S}(j)| = |\ca{S}(i)|$
	\ELSE
		\STATE Let $p \in [m']$ such that $\f{B^{(1)}_{p,i}} \neq 0$, $\f{B^{(1)}_{p,j}} \neq 0$, and $\f{B^{(1)}_{p,q}} = 0$ for all $q \in U \setminus \{ i,j\}$.
		\STATE Set $|\ca{S}(i) \cup \ca{S}(j)| = \s{count}(p)$.
	\ENDIF
\ENDFOR
\end{algorithmic}
\end{algorithm}

\begin{proof}[Proof of Lemma~\ref{lem:puff}]
Computing each $\s{count}$ requires $O(T \ell)$ queries. Therefore, the total number of oracle queries made by Algorithm~\ref{algo:Sint} is at most $O(m' T \ell) = O(\ell^6 k^3 \log(\ell k n) \log n)$ for $m' = O(\ell ^3k^3\log n)$ and $T = 10 \ell^2 \log(n m')$. 
Also, observe that each $\s{nzcount}$ is estimated correctly with probability at least $1 - O\left(1/ \ell m' n^2 \right)$. Therefore from union bound it follows that all the $(\ell+1)m'$ estimations of $\s{nzcount}$ are correct with probability at least $1 - O\left(1/n^2\right)$.

Recall that the set $U$ denotes the union of supports of all the unknown vectors. 
This set is equivalent to $\{i \in [n] \mid |\ca{S}(i) | > 0 \}$. 
First, note that if $| \ca{S}(i) | = 0$, there are no unknown vectors supported on the $i^{th}$ index. 
Therefore, $|\ca{S}(i) \cup \ca{S}(j)| = |\ca{S}(j)|$. 
Also, if $i=j$, then the computation of $|\ca{S}(i) \cup \ca{S}(j)|$ is trivial.

We now focus on the only non-trivial case when $(i, j) \in U \times U$ and $i \neq j$. 
Since for every $w \in [\ell+1]$, the support of the columns of $\f{B^{(w)}}$ are the indicators of sets in $\ca{F}$, the $\PUFF$ property implies that there exists at least one row (say, with index $p \in [m']$) of every $\f{B}^{(w)}$ which has a non-zero entry in the $i^{th}$ and $j^{th}$ index, and $0$ in all other indices $q \in U \setminus \{ i,j\}$, i.e., 
\[
\f{B^{(w)}_{p,i}} \neq 0, \f{B^{(w)}_{p,j}} \neq 0 \mbox{, and } \f{B^{(w)}_{p,q}} = 0 \mbox{ for all  } q \in U \setminus \{ i,j\}.
\]
To prove the correctness of the algorithm, we need to show the following:
\[
|\ca{S}(i) \cup \ca{S}(j)| = \max_{w \in [\ell+1]} \{ \s{nzcount}(\f{B^{(w)}}[p]) \}
\]
First observe that using the row $\f{B^{(w)}}[p]$ as query will produce non-zero value for only those unknown vectors $\beta \in  \ca{S}(i) \cup \ca{S}(j)$. This establishes the fact that $|\ca{S}(i) \cup \ca{S}(j)| \ge \s{nzcount}(\f{B^{(w)}}[p])$. 
To show the other side of the inequality, consider the set of $(\ell+1)$ $2$-dimensional vectors obtained by the restriction of rows $\f{B^{(w)}}[p]$ to the coordinates $(i, j)$, 
\[
\{ ( \f{B^{(w)}_{p,i}},  \f{B^{(w)}_{p,j}} ) \mid w \in [\ell+1] \}.
\] 
Since these entries are picked uniformly at random from $[0,1]$, they are pairwise linearly independent. 
Therefore, each $\beta \in  \ca{S}(i) \cup \ca{S}(j)$ can have $\s{sign}(\langle \f{B^{(w)}}[p], \f{\beta} \rangle) = 0$ for at most $1$ of the $w$ queries. So by pigeonhole principle, at least one of the query vectors $\f{B^{(w)}}[p]$ will have $\s{sign}(\langle \f{B^{(w)}}[p], \f{\beta} \rangle) \neq 0$ for all $\beta \in  \ca{S}(i) \cup \ca{S}(j)$. Hence, $|\ca{S}(i) \cup \ca{S}(j)| \le \max_w \{ \s{nzcount}(\f{B^{(w)}}[p]) \}$. 

\end{proof}

\section{Two-stage Approximate Recovery (Proof of Theorem~\ref{thm:twostage}) }\label{sec:2stage}
In this section, we present the proof of Theorem~\ref{thm:twostage}. The two stage approximate recovery algorithm, as the name suggests, proceeds in two sequential steps. In the first stage, we recover the support of all the $\ell$ unknown vectors (presented in  Algorithm~\ref{algo:2} in Section~\ref{sec:supp-rec}). In the second stage, we use these deduced supports to approximately recover the unknown vectors (Algorithm~\ref{algo:3} described below. 


Once we have the obtained the support of all unknown vectors, the task of approximate recovery 
 can be achieved using a set of \emph{Gaussian queries}.  
Recall from Definition~\ref{def:gaussianQ}, a Gaussian query refers to an oracle query with vector $\f{v} = (\f{v}_1, \ldots, \f{v}_n) \in \bb{R}^n$ where each $\f{v}_i$ is sampled independently from the standard Normal distribution, $\f{v}_i \sim \ca{N}(0,1)$. 
The use of Gaussian queries in the context of 1-bit compressed sensing ($\ell=1$) was studied by \cite{JLBB13}.  
\begin{lem}[{\cite{JLBB13}}]\label{lem:reco_orig}
For any $\epsilon > 0$, there exists an $\epsilon$-recovery algorithm to efficiently recover an unknown vector in $\bb{R}^n$ using $O\left(\frac{n}{\epsilon}\log\frac n\epsilon\right)$ Gaussian queries. 
\end{lem}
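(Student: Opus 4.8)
The statement is the classical one-bit compressed sensing recovery guarantee of \cite{JLBB13}, so the plan is to reconstruct its argument through the lens of consistent reconstruction. Assume without loss of generality that $\f{\beta}$ is a unit vector. The algorithm draws $m = O(\frac{n}{\epsilon}\log\frac n\epsilon)$ i.i.d.\ Gaussian query vectors $\f{a}^1,\dots,\f{a}^m$, records the signs $y^i = \s{sign}(\langle \f{a}^i,\f{\beta}\rangle)$, and outputs \emph{any} unit vector $\hat{\f{\beta}}$ consistent with every observed sign, i.e.\ $\s{sign}(\langle \f{a}^i,\hat{\f{\beta}}\rangle)=y^i$ for all $i$. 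Such a vector exists ($\f{\beta}$ itself works) and one can be produced efficiently by a linear feasibility program over the halfspaces $\{\f{u} : y^i\langle \f{a}^i,\f{u}\rangle \ge 0\}$ followed by a renormalization. The whole analysis reduces to showing that, with high probability over the queries, every consistent unit vector makes angle at most $O(\epsilon)$ with $\f{\beta}$; the claimed Euclidean bound then follows from $\|\f{\beta}-\hat{\f{\beta}}\|_2 = 2\sin(\theta/2)\le\theta$ and rescaling $\epsilon$ by a constant.

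The single geometric input is that, for a standard Gaussian $\f{a}$ and unit vectors $\f{x},\f{y}$ at angle $\theta(\f{x},\f{y})$, rotational invariance gives $\Pr[\s{sign}(\langle \f{a},\f{x}\rangle)\neq \s{sign}(\langle \f{a},\f{y}\rangle)] = \theta(\f{x},\f{y})/\pi$. Hence for a \emph{fixed} candidate $\f{u}$ with $\theta(\f{u},\f{\beta})>\epsilon$, each query disagrees independently with probability $>\epsilon/\pi$, so the probability that $\f{u}$ is consistent (zero disagreements) is at most $(1-\epsilon/\pi)^m \le e^{-m\epsilon/\pi}$. The difficulty is that $\hat{\f{\beta}}$ ranges over a continuum, and the sign map is discontinuous, so consistency does not transfer verbatim to a nearby net point; this is the main obstacle.

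To handle it I would fix a $\delta$-net $\ca{N}$ of the unit sphere with resolution $\delta \asymp \epsilon/\sqrt n$, so that $\log|\ca{N}| = O(n\log\frac n\epsilon)$, and control two events simultaneously. First, a query $\f{a}^i$ can flip sign somewhere inside the angle-$\delta$ ball around a net point $\f{p}$ only if its hyperplane passes within angular distance $\delta$ of $\f{p}$, an event of probability $O(\sqrt n\,\delta) = O(\epsilon)$; a Chernoff bound, union-bounded over $\ca{N}$, then caps the number of such \emph{ambiguous} queries below $m\epsilon/(4\pi)$ at every net point. Second, any net point $\f{p}$ at angle $>\epsilon/2$ from $\f{\beta}$ disagrees with $\f{\beta}$ on at least $m\epsilon/(4\pi)$ queries except with probability $e^{-cm\epsilon}$, again union-bounded over $\ca{N}$. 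Now if some consistent $\hat{\f{\beta}}$ had $\theta(\hat{\f{\beta}},\f{\beta})>\epsilon$, its nearest net point $\f{p}$ would still lie at angle $>\epsilon/2$ from $\f{\beta}$; but since $\hat{\f{\beta}}$ agrees with $\f{\beta}$ on \emph{every} query, $\f{p}$ could disagree with $\f{\beta}$ only on queries ambiguous within its own ball, of which there are fewer than $m\epsilon/(4\pi)$, contradicting the second event. Both failure probabilities are dominated once $m = \Omega\!\left(\frac1\epsilon\log|\ca{N}|\right) = \Omega\!\left(\frac n\epsilon\log\frac n\epsilon\right)$, which is exactly the stated query complexity, with the logarithmic factor coming from the covering number of the sphere.
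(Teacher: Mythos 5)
The paper itself gives no proof of this lemma---it is imported directly from \cite{JLBB13}---so the only meaningful comparison is with that source, and your reconstruction is essentially the argument used there: consistent reconstruction analyzed via the identity $\Pr[\s{sign}(\langle \f{a},\f{x}\rangle)\neq\s{sign}(\langle \f{a},\f{y}\rangle)]=\theta(\f{x},\f{y})/\pi$, a $\delta$-net at resolution $\delta\asymp\epsilon/\sqrt n$, and a separate Chernoff-plus-union bound on the number of queries whose hyperplanes pass within angle $\delta$ of a net point; the bookkeeping is correct and yields exactly the stated $O\bigl(\tfrac n\epsilon\log\tfrac n\epsilon\bigr)$. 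The one detail worth tightening is the ``efficiently'' clause: the halfspace system $\{\f{u}: y^i\langle\f{a}^i,\f{u}\rangle\ge 0\}$ always admits the trivial solution $\f{u}=\f{0}$, so the feasibility program needs an explicit normalization (e.g.\ maximize $\sum_i y^i\langle\f{a}^i,\f{u}\rangle$ over the unit ball, or impose $\sum_i y^i\langle\f{a}^i,\f{u}\rangle\ge 1$) before rescaling to the sphere.
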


In the current query model however, the approximate recovery is a bit intricate since we do not possess the knowledge of the particular unknown vector that was sampled by the oracle.  
To circumvent this problem, we will leverage the special support structure of the unknown vectors. From Assumption~\ref{assum:sm}, we know that every unknown vector $\f{\beta}^t, t \in [\ell]$, has at least one coordinate which is not contained in the support of the other unknown vectors. 
We will denote the first such coordinate by $\coord(\f{\beta}^t)$. Define,  
\begin{align*}
\coord(\f{\beta}^t) := \s{min}_{p} \{ \quad p \in \s{supp}(\f{\beta}^t) \setminus \bigcup_{q \in [\ell]\setminus \{t\}}\s{supp}(\f{\beta}^q) \} \in [n].
\end{align*}

For $\epsilon$-recovery of a fixed unknown vector $\f{\beta}^t$, we will use the set of representative coordinates $\{ \coord( \f{\beta}^{t'} ) \}_{t' \neq t}$, to correctly identify its  responses with respect to a set of Gaussian queries. 
In order to achieve this, we first have to recover the sign of $\f{\beta}^t_{\coord(\f{\beta}^t)}$ for every $t \in [\ell]$, using an $\RUFF$, which is described in Algorithm~\ref{algo:sign}. 

\begin{lem}\label{lem:signrec}
Algorithm \ref{algo:sign} recovers $\s{sign}(\f{\beta}^t_{\coord(\f{\beta}^t)}) $ for all $t \in [\ell]$.
\end{lem}

With the knowledge of all the supports, and the sign of every representative coordinate, we are now ready to prove Theorem~\ref{thm:twostage}. The details are presented in the Algorithm~\ref{algo:3}.

\begin{algorithm}[h!]
\caption{\textsc{$\epsilon$-\textsc{Recovery}, Two Stage}\label{algo:3}}
\begin{algorithmic}[1]
\REQUIRE Query access to oracle ${\ca O}$.
\REQUIRE Assumption $\ref{assum:sm}$ to be true.
\STATE Estimate $\s{supp}(\f{\beta}^t)$ for all $t \in [\ell]$ using Algorithm~\ref{algo:2}.
\STATE Estimate $\s{sign}(\f{\beta}^t_{\coord(\f{\beta}^t)})$ for all $t \in [\ell]$ using Algorithm~\ref{algo:sign}.
\STATE Let $\s{Inf}$ be a large positive number. 
\STATE Let batchsize $T = 4\ell^2 \log (nk/\epsilon)$.
\FOR{$t = 1, \ldots, \ell$} 
\FOR{$i = 1, \ldots, \tilde{O}(k/\epsilon)$}
	\STATE Define $\f{v}^t_j:= \left\{ \begin{array}{ll}
		\s{Inf}  & \mbox{if } j =  \coord(\f{\beta}^{t'}), \text{for some } t' \neq t \\
		\ca{N}(0,1) & \mbox{otherwise }
	\end{array}\right.$
	\STATE Obtain $\s{poscount}(\f{v}^t)$ using Algorithm~\ref{algo:1} with batchsize $T$. 
	\STATE Let $\sc{p}_t := | \{t' \neq t \mid \s{sign}(\f{\beta}^{t'}_{\coord(\f{\beta}^{t'})}) = +1 \} |$
	\IF{$\s{poscount}(\f{v}^t) \neq \sc{p}_t$ }
		\STATE Set $y^t_i = +1$.
	\ELSE
		\STATE Set $y^t_i = -1$.
	\ENDIF
\ENDFOR
\STATE From $\{y^t_1,y^t_2,\dots,y^t_{\tilde{O}(k/\epsilon)}\}$, and $\s{supp}(\f{\beta}^t)$ recover $\hat{\f{\beta}^t}$ by using Lemma \ref{lem:reco_orig}.
\ENDFOR
\STATE Return $\{ \hat{\f{\beta}^t}, t \in [\ell] \}$.
\end{algorithmic}
\end{algorithm}

\begin{proof}[Proof of Theorem~\ref{thm:twostage}]
For the $\epsilon$-recovery of a fixed unknown vector $\f{\beta}^t, t \in [\ell]$, we will generate its correct response with respect to a set of $\tilde{O}(k/\epsilon)$ Gaussian queries using \emph{modified} Gaussian queries. 
A modified Gaussian query $\f{v}^t$  for the $t$-th unknown vector, is a Gaussian query with a large positive entry in the coordinates indexed by $\coord(\f{\beta}^{t'})$, for every $t' \neq t$. 

Consider a fixed unknown vector  $\f{\beta}^{t}$. 
Let $\f{v} \in \bb{R}^n$ be a Gaussian query, i.e., every entry of $\f{v}$ is sampled independently from $\ca{N}(0,1)$. 
Algorithm~\ref{algo:3} constructs a modified Gaussian query $\f{v}^{t}$ from $\f{v}$ as follows: 
\begin{align*}
\f{v}^{t}_{j} =
\begin{cases}
\s{Inf} \quad & \text{if } \quad j=\coord(\f{\beta}^{t'})  \quad \text{for some } t'\neq t \\
\f{v}_j \quad &\text{otherwise}
\end{cases}.
\end{align*}

From construction, we know that $\f{v}^{t}_j = \f{v}_j$ for all $j \in \s{supp}(\f{\beta}^t)$. Therefore, 
\[
\langle \f{v}^{t}, \f{\beta}^t \rangle =\langle \f{v}, \f{\beta}^t \rangle \quad \quad \text{and therefore} \quad \quad \s{sign}(\langle \f{v}^{t}, \f{\beta}^t \rangle) =\s{sign}(\langle \f{v}, \f{\beta}^t \rangle).
\]
On the other hand, if $\s{Inf}$ is chosen to be large enough, 
\begin{align*}
\s{sign}(\langle \f{v}^t, \f{\beta}^{t'} \rangle) = \s{sign}(\f{\beta}^{t'}_{\coord(\f{\beta}^{t'})}) \quad \quad \forall t'\neq t, 
\end{align*}
since $\s{Inf} \cdot \f{\beta}^{t'}_{\coord(\f{\beta}^{t'})}$ dominates the sign of the inner product. Note that in order to obtain an upper bound on the value of $\s{Inf}$, we have to assume that the non-zero  entries of every unknown vector have some non-negligible magnitude (at least $1/\textrm{poly}(n)$).

Note that the $\s{sign}(\f{\beta}^{t'}_{\coord(\f{\beta}^{t'})})$ was already computed using Algorithm~\ref{algo:sign}, and therefore, the response of the modified Gaussian query with each $\f{\beta}^{t'}, t' \neq t$ is known. 
Now if $\s{poscount}(\f{v}^t)$ is different from the number of positive instances of $ \s{sign}(\f{\beta}^{t'}_{\coord(\f{\beta}^{t'})}), t' \neq t$, then it follows that $\s{sign}(\langle \f{v}^{t}, \f{\beta}^t \rangle) = +1$. From this we can successfully obtain the response of $\f{\beta}^t$ corresponding to a Gaussian query $\f{v}$. 

Algorithm~\ref{algo:3} simulates $O(k/\epsilon \cdot \log(k/\epsilon))$ Gaussian queries for every $\f{\beta}^t, t \in [\ell]$ using the modified Gaussian queries $\f{v}^t$. Approximate recovery is then possible using Lemma~\ref{lem:reco_orig} (restricted to the $k$-non zero coordinates in the $\s{supp}(\f{\beta}^t)$). 

We now argue about the query complexity and the success probability of Algorithm~\ref{algo:3}. 

For every unknown vector $\f{\beta}^t,  t \in [\ell]$, we simulate $O(k/\epsilon \cdot \log( k/\epsilon))$ Gaussian queries. Simulating each Gaussian query involves $T = O(\ell^2 \log (nk/\epsilon))$ oracle queries to estimate the $\s{poscount}$. 
Note that Algorithm~\ref{algo:sign} can be run simultaneously with Algorithm~\ref{algo:Si} since they use the same set of queries. The sign recovery algorithm, therefore, does not increase the query complexity of approximate recovery. The total query complexity of Algorithm~\ref{algo:3} after the support recovery procedure is at most $O\left( (\ell^3 k/\epsilon) \log(nk/\epsilon) \log(k/\epsilon)\right)$.

From Lemma~\ref{lem:batchsize}, each $\s{poscount}$ is correct with probability at least $1 - O(\epsilon/ (n^2k^2))$ and therefore by a union bound over all the $O(\ell k / \epsilon \cdot log (k/\epsilon))$ $\s{poscount}$ estimates, the algorithm succeeds with probability at least $1 - O(1/n)$.
\end{proof}


\begin{proof}[Proof of Lemma~\ref{lem:signrec}]
Consider the $(d, \ell k, 0.5) - \RUFF$, $\ca{F} =  \{\ca{H}_1, \ca{H}_2, \dots, \ca{H}_n \}$, of size $n$ over alphabet $m = O(\ell^2k^2\log n)$ used in Algorithm~\ref{algo:Si}. Let $\f{A} \in \{0,1\}^{m \times n}$ be the binary matrix constructed from the $\RUFF$ in a similar manner, i.e., $\f{A}_{i,j} = 1$ if and only if $i \in \ca{H}_j$.  From the properties of $\RUFF$, we know that for every $t \in [\ell]$, there exists a row (indexed by $i \in [m]$) of $\f{A}$ such that $\f{A}_{i, u(\f{\beta}^t)} \neq 0$, and $\f{A}_{i, j} = 0$ for all $j \in U \setminus \{ u(\f{\beta}^t)\}$, 
where, $U = \cup_{i \in [\ell]} \s{supp}(\f{\beta}^{i})$. Therefore, the query with $\f{A}[i]$ yields non-zero sign with only $\f{\beta}^t$. Since, 
\[
\s{sign}(\langle \f{A}[i], \f{\beta}^t \rangle) = \s{sign}(\langle \f{e}_{u(\f{\beta}^t)}, \f{\beta}^t \rangle) = \s{sign}(\f{\beta}^t_{u(\f{\beta}^t)})
\]
$\s{sign}(\f{\beta}^t_{u(\f{\beta}^t)})$ can be deduced.

\begin{algorithm}[h!]
\caption{\textsc{Compute--}$\s{sign}(\f{\beta}^t_{\coord(\f{\beta}^t)})$ \label{algo:sign}}
\begin{algorithmic}[1]
\REQUIRE Binary matrix $\f{A} \in \{ 0,1\}^{m \times n}$ from 
$(d,\ell k,0.5)- \RUFF$ of size $n$ over alphabet $[m]$, with $m= O(\ell^2k^2\log n)$ and $d= O(\ell k\log n)$. 
\REQUIRE $\coord(\f{\beta}^t) \in [n]$ for all $t \in [\ell]$. 
\STATE Let batchsize $T = 4 \ell^2 \log mn$. 
\STATE Let $U := \cup_{i \in [\ell]} \s{supp}(\f{\beta}^{i})$. 
\FOR{$t =1,\dots, \ell$}
	\STATE Let $i \in  \{\s{supp}(\f{A}_{\coord(\f{\beta}^t)}) \setminus \cup_{j \in U \setminus \{\coord(\f{\beta}^t)\} } \s{supp}(\f{A}_j)\}$
	\IF{$\s{poscount}(\f{A}[i]) > 0$ (obtained using Algorithm~\ref{algo:1} with batchsize $T$.)}
		\STATE $\s{sign}(\f{\beta}^t_{\coord(\f{\beta}^t)}) = +1$.
	\ELSE 
		\STATE $\s{sign}(\f{\beta}^t_{\coord(\f{\beta}^t)}) = -1$.
	\ENDIF
\ENDFOR
\end{algorithmic}
\end{algorithm}

\end{proof}



\newcommand{\D}{\s{D}}

\section{Single stage process for $\epsilon$-recovery (Proof of Theorem~\ref{thm:onestage})}\label{sec:single}

The approximate recovery procedure (Algorithm~\ref{algo:3}), described in Section~\ref{sec:2stage}, crucially utilizes the support information of every unknown vector to design its queries. 
This requirement forces the algorithm to proceed in two sequential stages. 

In particular, Algorithm~\ref{algo:3}, with the knowledge of the support and the representative coordinates of all the unknown vectors, designed modified Gaussian queries that in turn simulated Gaussian queries for a fixed unknown vector. In this section, we achieve this by using the rows of a matrix obtained from an $(\ell, \ell k)-\CFF$. The property of the $\CFF$ allows us to simulate enough Gaussian queries for every unknown vector without the knowledge of their supports.  This observation gives us a completely non-adaptive algorithm for approximate recovery of all the unknown vectors. 

Consider a matrix $\f{A}$ of dimension $m \times n$ constructed from an $(\ell,\ell k) -\CFF$, $\ca{F}=\{\ca{H}_1,\ca{H}_2,\dots,\ca{H}_n\}$ of size $n$ over alphabet $m$, as follows: 
\[
\f{A}_{i,j} =  \begin{cases}
\s{Inf} \quad & \text{if } i \in \ca{H}_j \\
v \sim \ca{N}(0,1) \quad &\text{otherwise}
\end{cases}.
\]
In Lemma~\ref{lem:gaussian_cff}, we show that for every unknown vector $\f{\beta}^t$, there exists a row of $\f{A}$ that simulates the Gaussian query for it. Therefore, using $\tilde{O}(k/\epsilon)$ independent  blocks of such queries will ensure sufficient Gaussian queries for every unknown vector which then allows us to approximately recover these vectors. 


Recall the definition of a representative coordinate of an unknown vector $\f{\beta}^t$, 
\begin{align*}
\coord(\f{\beta}^t) := \s{min}_{p} \{ \quad p \in \s{supp}(\f{\beta}^t) \setminus \bigcup_{q \in [\ell]\setminus \{t\}}\s{supp}(\f{\beta}^q) \} \in [n].
\end{align*}

\begin{lem}\label{lem:gaussian_cff}
For every $t \in [\ell]$, there exists at least one row $\f{v}^{t}$ in $\f{A}$  that simulates a Gaussian query for $\f{\beta}^t$, and $\s{sign}(\langle \f{v}^t, \f{\beta}^{t'} \rangle) = \s{sign}(\f{\beta}^{t'}_{\coord(\f{\beta}^{t'})})$ for all  $t'\neq t$. 
\end{lem}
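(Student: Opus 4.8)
The plan is to pin down, for each fixed $t \in [\ell]$, a single row of $\f{A}$ whose $\s{Inf}$-pattern is exactly what the algorithm needs: it should carry $\s{Inf}$ at every representative coordinate $\coord(\f{\beta}^{t'})$ with $t' \neq t$, while leaving all of $\s{supp}(\f{\beta}^t)$ and all the \emph{non}-representative coordinates of the other supports free of $\s{Inf}$ (so those entries remain genuine Gaussians). For a row indexed by $i \in [m]$, write $R_i := \{j \in [n] : i \in \ca{H}_j\}$ for the set of coordinates at which that row equals $\s{Inf}$. I would then define
$$P := \{\coord(\f{\beta}^{t'}) : t' \neq t\}, \qquad Q := \s{supp}(\f{\beta}^t) \cup \bigcup_{t' \neq t}\big(\s{supp}(\f{\beta}^{t'}) \setminus \{\coord(\f{\beta}^{t'})\}\big),$$
and search for a row $i$ with $P \subseteq R_i$ and $R_i \cap Q = \emptyset$.

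First I would check that $P$ and $Q$ are disjoint, which is exactly where the definition of the representative coordinate is used: $\coord(\f{\beta}^{t'})$ lies in $\s{supp}(\f{\beta}^{t'})$ but in no other support, so it is neither in $\s{supp}(\f{\beta}^t)$ (as $t \neq t'$) nor in any $\s{supp}(\f{\beta}^{t''}) \setminus \{\coord(\f{\beta}^{t''})\}$; hence $P \cap Q = \emptyset$. I would also record the sizes $|P| = \ell - 1 \le \ell$ and, since $Q \subseteq \bigcup_i \s{supp}(\f{\beta}^i)$, that $|Q| \le \ell k$. Existence of the desired row is then precisely the defining property of the $(\ell, \ell k)$-$\CFF$: treating $P$ as the ``intersection'' family and $Q$ as the ``union'' family (padding $P$ up to size $\ell$ and $Q$ up to size $\ell k$ with fresh indices disjoint from $P \cup Q$, which is possible since $n \gg \ell k$, and invoking the monotonicity of the $\CFF$ property in both parameters), the set $\bigcap_{j \in P} \ca{H}_j \setminus \bigcup_{j \in Q} \ca{H}_j$ is non-empty, and any element $i$ of it gives a row with $P \subseteq R_i$ and $R_i \cap Q = \emptyset$.

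It then remains to verify the two asserted properties of $\f{v}^t := \f{A}[i]$. Since $\s{supp}(\f{\beta}^t) \subseteq Q$ is disjoint from $R_i$, every entry of $\f{v}^t$ on $\s{supp}(\f{\beta}^t)$ is an independent $\ca{N}(0,1)$ sample, so $\langle \f{v}^t, \f{\beta}^t \rangle$ is distributed exactly as the inner product of $\f{\beta}^t$ with a genuine Gaussian query; i.e.\ the row simulates a Gaussian query for $\f{\beta}^t$. For $t' \neq t$, the coordinate $\coord(\f{\beta}^{t'}) \in P \subseteq R_i$ carries the unique $\s{Inf}$ entry inside $\s{supp}(\f{\beta}^{t'})$, while the remaining coordinates of $\s{supp}(\f{\beta}^{t'})$ lie in $Q$ and therefore carry finite Gaussian values, so that
$$\langle \f{v}^t, \f{\beta}^{t'} \rangle = \s{Inf} \cdot \f{\beta}^{t'}_{\coord(\f{\beta}^{t'})} + \sum_{j \in \s{supp}(\f{\beta}^{t'}) \setminus \{\coord(\f{\beta}^{t'})\}} \f{v}^t_j \, \f{\beta}^{t'}_j.$$
Choosing $\s{Inf}$ large enough that the first term dominates the (bounded) second one, and noting $\s{Inf} > 0$ and $\f{\beta}^{t'}_{\coord(\f{\beta}^{t'})} \neq 0$, gives $\s{sign}(\langle \f{v}^t, \f{\beta}^{t'} \rangle) = \s{sign}(\f{\beta}^{t'}_{\coord(\f{\beta}^{t'})})$.

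The main obstacle I anticipate is the disjointness-and-sizing bookkeeping around the $\CFF$: making sure $P$ and $Q$ genuinely separate (which hinges entirely on the defining property of $\coord$) and that their sizes fit the $(\ell, \ell k)$ parameters after padding. A secondary point is quantifying ``large enough'' for $\s{Inf}$; as in the two-stage argument this requires assuming the nonzero entries of the unknown vectors are bounded below in magnitude (at least $1/\mathrm{poly}(n)$) and that the finitely many Gaussian values appearing in the relevant rows are bounded above with high probability, so that one threshold $\s{Inf}$ works simultaneously across all queries used by the algorithm.
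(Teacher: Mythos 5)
Your proposal is correct and follows essentially the same route as the paper: you invoke the $(\ell,\ell k)$-$\CFF$ property with the intersection family being the representative coordinates $\{\coord(\f{\beta}^{t'})\}_{t'\neq t}$ and the union family being the rest of $\bigcup_q \s{supp}(\f{\beta}^q)$, then argue that the resulting row is Gaussian on $\s{supp}(\f{\beta}^t)$ and $\s{Inf}$-dominated on each other support. Your extra bookkeeping (disjointness of $P$ and $Q$, padding to the exact $\CFF$ parameters, and the lower bound on nonzero entries needed to quantify $\s{Inf}$) only makes explicit details the paper leaves implicit.
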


\begin{proof}[Proof of Lemma~\ref{lem:gaussian_cff}]
For any fixed $t \in [\ell]$, consider the set of indices  
\begin{align*}
\ca{X}= \{\coord(\f{\beta}^{t'}) \mid t' \in [\ell]\setminus \{t\}\}. 
\end{align*}
Recall that from the property of $(\ell, \ell k) - \CFF$, we must have 
\begin{align*}
\bigcap_{j \in \ca{X}} \s{supp}(\f{A}_j) 
\not \subseteq \bigcup_{j \in \cup_{q \in [\ell]} \s{supp}(\f{\beta}^q) \setminus \ca{X}} \s{supp}(\f{A}_j).
\end{align*}
Therefore, there must exist at least one row $\f{v}^t$ in $\f{A}$ which has a large positive entry, $\s{Inf}$, in all the coordinates indexed  by $\ca{X}$. Moreover,  $\f{v}^t$ has a random Gaussian entry in all the other coordinates indexed by the union of support of all  unknown vectors. 
Since $\f{\beta}^t$ is \texttt{0} for all coordinates in $\ca{X}$, the query $ \s{sign}(\langle \f{v}^{t}, \f{\beta}^t \rangle)$  simulates a Gaussian query.
Also, 
\begin{align*}
\s{sign}(\langle \f{v}, \f{\beta}^{t'} \rangle) = \s{sign}(\coord(\f{\beta}^{t'})) \quad \quad \forall t'\neq t
\end{align*}
since $\s{Inf} \times \f{\beta}^{t'}_{\coord(\f{\beta}^{t'})}$ dominates the inner product. 
\end{proof}

We are now ready to present the completely non-adaptive algorithm for the approximate recovery of all the unknown vectors. 

\begin{algorithm}[h!]
\caption{\textsc{$\epsilon$}-{\textsc{Recovery, Single Stage}}\label{algo:4}}
\begin{algorithmic}[1]
\REQUIRE Assumption $\ref{assum:sm}$ to be true. 
\REQUIRE Binary matrix $\f{\tilde{A}} \in \{0,1\}^{m \times n}$ from $(\ell, \ell k)-\CFF$ of size $n$ over alphabet $m = O((\ell k)^{\ell + 1} \log n)$.
\STATE Estimate $\s{supp}(\f{\beta}^t)$ and $\s{sign}(\f{\beta}^t_{\coord(\f{\beta}^t)})$ for all $t \in [\ell]$ using Algorithm~\ref{algo:2} and Algorithm~\ref{algo:sign} respectively. 
\STATE Set $\s{Inf}$ to be a large positive number. 
\STATE Set $\D = O(k / \epsilon \cdot \log(k/\epsilon))$.
\STATE Set batchsize $T = 4\ell^2 \log (mnk / \epsilon)$. 
\FOR{$i = 1, \ldots, m$}
	\FOR{$w =1,2,\dots,\D$}
		\STATE Construct query vector $\f{v}$, where $\f{v}_j = \begin{cases}
\s{Inf} \quad & \text{if } \f{\tilde{A}}_{i,j} = 1 \\
\ca{N}(0,1) \quad &\text{otherwise}
\end{cases}. $
		\STATE $\s{Query}\Big(\f{v},T \Big)$ and set $\f{P}_{i,w} = \s{poscount}(\f{v})$. 
	\ENDFOR
\ENDFOR
\FOR{$t=1,\dots,\ell$}
	\STATE Let $\ca{X} := \{\coord(\f{\beta}^{t'}) \mid t' \in [\ell] \setminus {t}\}$ and $U := \cup_q \s{supp}(\f{\beta}^q) $
	\STATE Let $i \in \{ \cap_{j \in \ca{X}} \s{supp}(\f{\tilde{A}}_j) 
\setminus \bigcup_{j \in U \setminus \ca{X}} \s{supp}(\f{\tilde{A}}_j)  \} \subset [m]$. 
	\STATE Let $p:= |\{t' \neq t \mid \s{sign}(\f{\beta}^t_{\coord(\f{\beta}^t)}) = +1 \}|$
	\FOR{$w = 1, \ldots, \D$}
		\IF{$\f{P}_{i, w} \neq p$}
			\STATE Set  $y^t_w = +1$
		\ELSE
			\STATE Set  $y^t_w = -1$
		\ENDIF
	\ENDFOR
	\STATE From $\{y^t_w \mid w \in [\D] \}$ and $\s{supp}(\f{\beta}^t)$ recover $\f{\hat{\beta}}^t$ by using Lemma~\ref{lem:reco_orig}. 
\ENDFOR
\STATE Return $\{ \f{\hat{\beta}}^t \mid t \in [\ell] \}$.
\end{algorithmic}
\end{algorithm}


\begin{proof}[Proof of Theorem \ref{thm:onestage}]
The proof of Theorem~\ref{thm:onestage} follows from the guarantees of Algorithm~\ref{algo:4}. 
The query vectors of Algorithm~\ref{algo:4} can be represented by the rows of the following matrix: 
\begin{align*}
\f{R}=
\begin{bmatrix}
\f{A} \\
\f{\tilde{A}}+\f{B}^{(1)}\\
\f{\tilde{A}}+\f{B}^{(2)}\\
\vdots \\
\f{\tilde{A}}+\f{B}^{(\D)}\\
\end{bmatrix}
\end{align*} 
where, $\D = O(k/\epsilon \cdot \log k/\epsilon)$ and $\f{A}$ is the matrix obtained from the $(d,\ell k,0.5)- \RUFF$ required by Algorithm~\ref{algo:2} and Algorithm~\ref{algo:sign}. 
The matrix $\f{\tilde{A}}$ is obtained from an $(\ell, \ell k) - \CFF$, $\ca{F}=\{\ca{H}_1,\ca{H}_2,\dots,\ca{H}_n\}$ by setting $\tilde{\f{A}}_{i,j} = \s{Inf}$ if $i \in \ca{H}_j$ and $0$ otherwise, and each matrix $\f{B}^{(w)}$ for $w \in [\D]$ is a Gaussian matrix with every entry $\f{B}^{(w)}_{i,j}$ drawn uniformly at random from standard Normal distribution. 


Algorithm~\ref{algo:4} decides all its query vectors at the start and hence is completely non-adaptive. It first invokes Algorithm~\ref{algo:2} and Algorithm~\ref{algo:sign} to recover the support and the sign of the representative coordinate of every unknown vector $\f{\beta}^t$. Now using the queries from the rows of the matrix $\f{R}$, the algorithm generates at least $\D= \tilde{O}(k/\epsilon)$ Gaussian queries for each unknown vector. 

It follows from Lemma~\ref{lem:gaussian_cff} that each matrix $\tilde{\f{A}} + \f{B}^{(w)}$, for $w \in [\D]$, contains at least one Gaussian query for every unknown vector. Therefore, in total, $\f{R}$ contains at least $D = O(k/\epsilon \cdot \log k/\epsilon)$ Gaussian queries for every unknown vector $\f{\beta}^t$. Using the responses of these Gaussian queries, we can then approximately recover every $\f{\beta}^t$ using Lemma~\ref{lem:reco_orig}. 

The total query complexity is therefore the sum of query complexities of support recovery process (which from Theorem~\ref{thm:rec} we know to be at most $O(\ell^6 k^3 \log(n) \log(\ell k n))$), and the total number of queries needed to generate $O(k / \epsilon \cdot \log(k/\epsilon))$ Gaussian queries (which is $m T D$) for each unknown vector. 
Therefore the net query complexity is $O\Big( (\ell^{\ell+3} k^{\ell+2}/\epsilon) \log n \log(k/\epsilon) \log (n/\epsilon)) \Big)$. Each Algorithm~\ref{algo:2}, \ref{algo:sign} and the Gaussian query generation succeed with probability at least $1 - O(1/n)$, therefore from union bound, Algorithm~\ref{algo:4} succeeds with probability at least $1 - O(1/n)$. 
\end{proof}
\section{Relaxing Assumption~\ref{assum:sm} for $\ell=2$ (Proof of Theorem~\ref{thm:grid})}\label{sec:l2}

In this section, we will circumvent the necessity for Assumption~\ref{assum:sm} when there are only two unknown vectors - $\{ \f{\beta}^1, \f{\beta}^2\}$. We present a two-stage algorithm to approximately recover both the unknown vectors. In the first stage, the algorithm recovers the support of both the vectors, and then using the support information it approximately recovers the two vectors. 

We would like to mention that if $\s{supp}(\f{\beta}^1) \neq \s{supp}(\f{\beta}^2)$, we do not need any further assumptions on the unknown vectors for their approximate recovery. However, if the two vectors have the exact same support, then we need to impose some mild assumptions in order to approximately recover the vectors. 


\subsection{Support Recovery}\label{subsec:l2supprec}
In this section, we show that supports of both the unknown vectors can be inferred directly from $\{ |\ca{S}(i)| \}_{i \in [n]}$ and $\{|\ca{S}(i) \cap \ca{S}(j)| \}_{i, j \in [n]}$. These quantities were  computed using Algorithm~\ref{algo:Si} and using Algorithm~\ref{algo:Sint} respectively. Moreover, the guarantees of both these algorithms (shown in Lemma~\ref{lem:ruff}, and Lemma~\ref{lem:puff}) do not require the unknown vectors to satisfy any special assumption. 

\begin{lem}\label{lem:l2supprec}
There exists an algorithm to recover the support of any two $k$-sparse unknown vectors using $O(k^3 \log^2 n)$ oracle queries with probability at least $1- O(1/n^2)$. 
\end{lem}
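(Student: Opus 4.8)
The plan is to reduce support recovery entirely to the two combinatorial quantities already produced in Section~\ref{subsec:supp-rec}, namely $\{|\ca{S}(i)|\}_{i\in[n]}$ (Algorithm~\ref{algo:Si}, Lemma~\ref{lem:ruff}) and $\{|\ca{S}(i)\cap\ca{S}(j)|\}_{i,j\in[n]}$ (obtained from Algorithm~\ref{algo:Sint} via $|\ca{S}(i)\cap\ca{S}(j)| = |\ca{S}(i)| + |\ca{S}(j)| - |\ca{S}(i)\cup\ca{S}(j)|$, Lemma~\ref{lem:puff}). Crucially, neither Lemma~\ref{lem:ruff} nor Lemma~\ref{lem:puff} invokes Assumption~\ref{assum:sm}; their correctness relies only on the $\RUFF$/$\PUFF$ properties and on $|U|\le \ell k = 2k$, so both estimates are available for arbitrary $\f{\beta}^1,\f{\beta}^2$. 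With $\ell=2$ we have $\ca{S}(i)\subseteq\{\f{\beta}^1,\f{\beta}^2\}$, so each $|\ca{S}(i)|\in\{0,1,2\}$, and I can partition $[n]$ into $I_0 = \{i: |\ca{S}(i)|=0\}$, $I_1 = \{i: |\ca{S}(i)|=1\}$, and $I_2 = \{i: |\ca{S}(i)|=2\}$. By definition $I_2 = \s{supp}(\f{\beta}^1)\cap\s{supp}(\f{\beta}^2)$ is the set of coordinates common to both vectors, $I_0$ lies outside both supports, and $I_1$ consists of coordinates where exactly one vector is supported but whose owner is a priori unknown --- precisely the alignment difficulty that Assumption~\ref{assum:sm} previously side-stepped.

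The key step is to disambiguate $I_1$ using the pairwise intersections. For $i,j\in I_1$, each of $\ca{S}(i),\ca{S}(j)$ is a singleton, so $|\ca{S}(i)\cap\ca{S}(j)| = 1$ exactly when $i$ and $j$ are supported by the same vector, and $|\ca{S}(i)\cap\ca{S}(j)| = 0$ otherwise. Defining $i\sim j \iff |\ca{S}(i)\cap\ca{S}(j)| = 1$ therefore yields an equivalence relation on $I_1$ (reflexivity and symmetry are immediate; transitivity holds because ``supported by the same vector'' is transitive), and since there are only two vectors this relation has at most two classes, say $I_1^{(1)}$ and $I_1^{(2)}$. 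I then output $\s{supp}(\f{\beta}^1) = I_2\cup I_1^{(1)}$ and $\s{supp}(\f{\beta}^2) = I_2\cup I_1^{(2)}$. This is correct up to relabeling (the permutation $\sigma$ permitted by the Support Recovery problem), and it degenerates gracefully: if $I_1 = \emptyset$ the two supports coincide and both equal $I_2$, while if $I_1$ forms a single class one support is contained in the other. Since this decoding manipulates only quantities already computed, it requires no further oracle queries.

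For the query complexity, instantiating Lemma~\ref{lem:ruff} and Lemma~\ref{lem:puff} at $\ell=2$ gives $O(k^2\log(kn)\log n)$ and $O(k^3\log(kn)\log n)$ queries respectively; since $k\le n$ we have $\log(kn)=O(\log n)$, so the total is dominated by the $\PUFF$ stage and equals $O(k^3\log^2 n)$. Each subroutine succeeds with probability $1-O(1/n^2)$, so a union bound yields overall success probability $1-O(1/n^2)$. The only genuine obstacle is conceptual rather than computational: lacking a unique identifying coordinate per vector, we cannot tag the $I_1$ coordinates individually, so the entire burden of alignment falls on the consistency of the pairwise-intersection clustering. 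For $\ell=2$ this is resolved cleanly by the equivalence-relation argument above; the same idea does not obviously extend to larger $\ell$, which is exactly why the general case still relies on Assumption~\ref{assum:sm}.
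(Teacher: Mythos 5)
Your proposal is correct and follows essentially the same route as the paper: both reduce the problem to the quantities $|\ca{S}(i)|$ and $|\ca{S}(i)\cap\ca{S}(j)|$ from Algorithms~\ref{algo:Si} and~\ref{algo:Sint} (noting these do not need Assumption~\ref{assum:sm}), assign the $|\ca{S}(i)|=2$ coordinates to both supports, and split the $|\ca{S}(i)|=1$ coordinates into two groups via the pairwise intersections. The paper implements your equivalence-relation clustering by fixing the smallest singleton index as a pivot and comparing every other singleton index against it, which is the same argument in a slightly more algorithmic form; the query complexity and failure-probability accounting also match.
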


\begin{proof}[Proof of Lemma~\ref{lem:l2supprec}]
Consider Algorithm~\ref{algo:l2supprec}. The query complexity and success guarantees both follow from Lemma~\ref{lem:ruff} and Lemma~\ref{lem:puff}. We now prove the correctness of Algorithm~\ref{algo:l2supprec}. 

\begin{algorithm}[h!]
\caption{\textsc{Recover--Support $\ell=2$} \label{algo:l2supprec}}
\begin{algorithmic}[1]
\REQUIRE Access to oracle $\cal O$
\STATE Estimate $|S(i)|$ for every $i \in [n]$ using Algorithm~\ref{algo:Si}.
\STATE Estimate $|S(i) \cap S(j)|$ for every $i, j \in [n]$ using Algorithm~\ref{algo:Sint}.
\IF{$|S(i)| \in \{0, 2\}$ for all $i \in [n]$}
	\STATE $\s{supp}(\f{\beta}^1) = \s{supp}(\f{\beta}^2) = \{ i \in [n] | |\ca{S}(i)| \neq 0 \}$.
\ELSE
	\STATE Let $i_0 = \min \{ i | |S(i)|=1 \}$, and let $i_0 \in \s{supp}(\f{\beta}^1)$
	\FOR{$j \in [n] \setminus \{i_0\} $}
		\IF{$|S(j)| = 2$}
			\STATE Add $j$ to $\s{supp}(\f{\beta}^1)$, and $\s{supp}(\f{\beta}^2)$. 
		\ELSIF{$|S(j)| =1$ and  $|\ca{S}(i_0) \cap \ca{S}(j)| = 0 $}
			\STATE Add $j$ to $\s{supp}(\f{\beta}^2)$.
		\ELSIF{$|S(j)| =1$ and $|\ca{S}(i_0) \cap \ca{S}(j)| = 1$ }
			\STATE Add $j$ to $\s{supp}(\f{\beta}^1)$.
		\ENDIF
	\ENDFOR
\ENDIF
\end{algorithmic}
\end{algorithm}

\paragraph{Case 1: ($\s{supp}(\f{\beta}^1) \neq \s{supp}(\f{\beta}^2)$).} 
First note that the set of coordinates, $i \in [n]$ with $|\ca{S}(i)| = 2$ belong to the support of both the unknown vectors. For the remaining indices in $T := \{ i \in [n] | |S(i)| = 1 \}$, we use the following approach to decide the unknown vector whose support they belongs to.  

If $|T| = 1$, then without loss of generality we can assume $i \in \s{supp}(\f{\beta}^1)$. 
Else if $|T| > 1$, we set the smallest index $i_0 \in T$ to be in $\s{supp}(\f{\beta}^1)$. We then use this index as a pivot to figure out all the other indices $j \in T \cap \s{supp}(\f{\beta}^1)$. If both  $i_0$, and $j$ lie in $ \s{supp}(\f{\beta}^1)$, then $|\ca{S}(i_0) \cap \ca{S}(j)| =1$, otherwise $|\ca{S}(i_0) \cap \ca{S}(j)| =0$. So, using Algorithm~\ref{algo:l2supprec}, we can identify the supports of both the unknown vectors. 

\paragraph{Case 2: ($\s{supp}(\f{\beta}^1) = \s{supp}(\f{\beta}^2)$).} In this case, we observe that $|\ca{S}(i)| \in \{2, 0\}$ for all $i \in [n]$. Therefore, both the unknown vectors have the exact same support, and nothing further needs to be done since $\s{supp}(\f{\beta}^1) = \s{supp}(\f{\beta}^2) = \{ i \in [n] | |\ca{S}(i)| \neq 0 \}$. 
\end{proof}


\subsection{Approximate Recovery}\label{subsec:l2approxrec}
In this section, we present the approximate recovery algorithm. The queries are designed based on the supports of the two vectors. 

We split the analysis in two parts. First, we consider the case when the two vectors have different supports, i.e. $\s{supp}(\f{\beta}^1) \neq \s{supp}(\f{\beta}^2)$. In this case, we use Lemma~\ref{lem:l2approxrec:case1} to approximately recover the two vectors.

\begin{lem}\label{lem:l2approxrec:case1}
If $\s{supp}(\f{\beta}^1) \neq \s{supp}(\f{\beta}^2)$, then there exists an algorithm for $\epsilon$-approximate recovery of any two $k$-sparse unknown vectors using $O\Big(\frac{k}{\epsilon} \cdot \log(\frac{nk}{\epsilon})\Big)$ oracle queries with probability at least $1- O(1/n)$. 
\end{lem}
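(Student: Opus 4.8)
The plan is to build on the support-recovery stage: by Lemma~\ref{lem:l2supprec} we may assume both supports $S_1 := \s{supp}(\f{\beta}^1)$ and $S_2 := \s{supp}(\f{\beta}^2)$ are already known exactly. Given these supports, the goal is to simulate a batch of genuine, independent Gaussian queries for each vector \emph{in isolation}, and then invoke the single-vector $1$-bit compressed sensing guarantee of Lemma~\ref{lem:reco_orig}, restricted to the $k$ coordinates of the relevant support, to produce $\hat{\f{\beta}}^1$ and $\hat{\f{\beta}}^2$. Since $S_1 \neq S_2$, the symmetric difference is nonempty, so there is always at least one coordinate lying in exactly one of the two supports; this is the structural fact I would exploit to disentangle the two responses.

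The core gadget is the \emph{pinning} construction already used in Algorithm~\ref{algo:3}. To generate a Gaussian query for $\f{\beta}^1$, pick a coordinate $i_2 \in S_2 \setminus S_1$, and form $\f{v}$ with $\f{v}_{i_2} = \s{Inf}$, entries $\f{v}_j \sim \ca{N}(0,1)$ for $j \in S_1$, and zeros elsewhere. Because $i_2 \notin S_1$, the inner product $\langle \f{v}, \f{\beta}^1 \rangle$ is exactly a Gaussian query on $S_1$, while $\s{Inf}\cdot\f{\beta}^2_{i_2}$ dominates $\langle \f{v}, \f{\beta}^2 \rangle$, pinning $\s{sign}(\langle \f{v}, \f{\beta}^2 \rangle) = \s{sign}(\f{\beta}^2_{i_2})$ to a fixed, known value. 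The sign $\s{sign}(\f{\beta}^2_{i_2})$ is pre-computed cheaply by querying $\f{e}_{i_2}$ through Algorithm~\ref{algo:1} (here $\langle \f{e}_{i_2},\f{\beta}^1\rangle = 0$ and $\langle \f{e}_{i_2},\f{\beta}^2\rangle = \f{\beta}^2_{i_2}$, so $\s{poscount}$ reveals the sign). Then $\s{poscount}(\f{v})$ minus the known pinned contribution yields $\mathds{1}[\langle \f{v}, \f{\beta}^1 \rangle > 0]$, i.e. the label of the simulated Gaussian query for $\f{\beta}^1$. A symmetric construction using $i_1 \in S_1 \setminus S_2$ recovers $\f{\beta}^2$. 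Each of the $O(k/\epsilon)$ simulated queries is estimated with batchsize $T = O(\log(nk/\epsilon))$, which by Lemma~\ref{lem:batchsize} and a union bound makes every count correct with probability $1 - O(1/n)$; this gives the claimed $O\!\big(\tfrac{k}{\epsilon}\log\tfrac{nk}{\epsilon}\big)$ query complexity, and it requires only the mild nondegeneracy that $\f{\beta}^2_{i_2}$ (resp. $\f{\beta}^1_{i_1}$) be non-negligible so that $\s{Inf}$ can be chosen large enough.

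The step I expect to be the main obstacle is that pinning needs \emph{each} vector to possess a private coordinate, i.e. both $S_1 \setminus S_2 \neq \emptyset$ and $S_2 \setminus S_1 \neq \emptyset$. This holds automatically whenever $|S_1| = |S_2|$, so it is free when the vectors are exactly $k$-sparse; the genuinely delicate situation is strict containment, say $S_2 \subsetneq S_1$, where no coordinate can isolate $\f{\beta}^2$. Here I would first recover $\f{\beta}^2$ by pinning $\f{\beta}^1$ at some $i_1 \in S_1 \setminus S_2$ (which still works), and then recover $\f{\beta}^1$ by \emph{de-mixing} pure Gaussian queries $\f{v}$ supported on $S_1$: for such a query, $\mathds{1}[\langle \f{v},\f{\beta}^1\rangle>0] = \s{poscount}(\f{v}) - \mathds{1}[\langle \f{v},\f{\beta}^2\rangle > 0]$, and the second term is read off from the already-recovered $\hat{\f{\beta}}^2$. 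The difficulty is that $\s{sign}(\langle \f{v},\hat{\f{\beta}}^2\rangle)$ can disagree with $\s{sign}(\langle \f{v},\f{\beta}^2\rangle)$ when $\f{v}$ falls in the angular wedge between $\f{\beta}^2$ and $\hat{\f{\beta}}^2$, producing a mislabeled fraction of order the precision of $\hat{\f{\beta}}^2$. I would control this by recovering $\f{\beta}^2$ to a slightly finer precision and appealing to a noise-robust $1$-bit recovery guarantee (as in \cite{ai2014one}), so that the $O(\epsilon)$-fraction of corrupted labels still permits $\epsilon$-recovery of $\f{\beta}^1$; verifying that this does not inflate the query complexity beyond the stated bound is the technical crux.
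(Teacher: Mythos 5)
Your core gadget is the right one and matches the paper's: pin one vector's response to a known sign via a $\s{Inf}$ entry on a coordinate in the symmetric difference, so that $\s{poscount}$ reveals the other vector's response to a simulated Gaussian query. When both $\s{supp}(\f{\beta}^1)\setminus\s{supp}(\f{\beta}^2)$ and $\s{supp}(\f{\beta}^2)\setminus\s{supp}(\f{\beta}^1)$ are nonempty, your two-sided pinning is correct and gives the stated complexity.

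The genuine gap is exactly the case you flag: strict containment, say $\s{supp}(\f{\beta}^2)\subsetneq\s{supp}(\f{\beta}^1)$, which is possible since the vectors are only \emph{at most} $k$-sparse. Your fallback --- recover $\f{\beta}^2$ first, then de-mix plain Gaussian queries by predicting $\f{\beta}^2$'s labels from the estimate $\hat{\f{\beta}}^2$ --- mislabels the $O(\epsilon)$-fraction of queries falling in the wedge between $\f{\beta}^2$ and $\hat{\f{\beta}}^2$, and you would then need a $1$-bit recovery guarantee robust to an adversarially placed constant-times-$\epsilon$ fraction of sign errors; Lemma~\ref{lem:reco_orig} does not provide this, and the robust sub-Gaussian guarantee of \cite{ai2014one} would degrade the rate (it carries a $1/\epsilon^4$ dependence and an extra $\|\cdot\|_\infty$ assumption), so the claimed $O\bigl(\tfrac{k}{\epsilon}\log\tfrac{nk}{\epsilon}\bigr)$ bound is not established this way. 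The paper avoids the issue entirely with a pairing trick that needs only \emph{one} coordinate $p$ in the symmetric difference: for each Gaussian $\f{v}$ it issues both the pinned query $\f{v}+\s{Inf}\cdot\f{e}_p$ and the unpinned query $\f{v}$. Since $p\notin\s{supp}(\f{\beta}^2)$, the two queries have the \emph{identical} inner product with $\f{\beta}^2$, so the pinned query yields $\f{\beta}^2$'s exact label on $\f{v}$ (because $\f{\beta}^1$'s label is forced), and subtracting that known exact label from the counts of the unpinned query yields $\f{\beta}^1$'s exact label on $\f{v}$. No approximate estimate is ever used to infer labels, so no error propagation or noise-robust recovery is needed, and the containment case is handled uniformly. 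Replacing your de-mixing step with this exact pairing closes the gap.
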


When the two vectors have the exact same support, we use a set of sub-Gaussian queries to recover the two vectors. This is slightly tricky, and our algorithms succeeds under some mild assumption on the two unknown vectors (Assumption~\ref{assum:max}).
\begin{lem}\label{lem:l2approxrec:case2}
If $\s{supp}(\f{\beta}^1) = \s{supp}(\f{\beta}^2)$, then there exists an algorithm for $\epsilon$-approximate recovery of any two $k$-sparse unknown vectors using $O(\frac{k^2}{\epsilon^4 \delta^2} \log^2(\frac{nk}{\delta}))$ oracle queries with probability at least $1- O(1/n)$. 
\end{lem}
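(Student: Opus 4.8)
The plan is to reduce the problem to two separate, \emph{aligned} one-bit compressed sensing instances on the common support and then invoke the sub-Gaussian recovery guarantee of \cite{ai2014one}. Since we are in the case $\s{supp}(\f{\beta}^1)=\s{supp}(\f{\beta}^2)$, the (equal) support $S$ has already been recovered by Algorithm~\ref{algo:l2supprec}, with $|S|\le k$, so I would henceforth treat $\f{\beta}^1,\f{\beta}^2$ as \emph{dense} vectors in $\bb{R}^{|S|}$ and restrict all query vectors to $S$. The goal then becomes: produce, for each of $\tilde{O}(k/\epsilon^4)$ random Bernoulli ($\pm1$) query vectors $\f{a}$, the correctly \emph{labelled} pair of responses $\big(\s{sign}\langle \f{a},\f{\beta}^1\rangle,\ \s{sign}\langle\f{a},\f{\beta}^2\rangle\big)$, after which a separate application of the sub-Gaussian one-bit recovery result of \cite{ai2014one} to each label class returns each $\f{\beta}^i$ up to Euclidean error $\epsilon$. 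Assumption~\ref{assum:max} ($\|\f{\beta}^i\|_\infty=o(1)$) is exactly the ``not-too-spiky'' hypothesis needed to apply \cite{ai2014one} with Bernoulli measurements, so the only genuinely new work is the \emph{alignment}: given a single Bernoulli query we can estimate $\s{poscount}(\f{a})\in\{0,1,2\}$ via Algorithm~\ref{algo:1}, and when it equals $0$ or $2$ both vectors share a sign so the labelled pair is immediate, but when $\s{poscount}(\f{a})=1$ we must decide \emph{which} vector is the positive one, consistently across all queries.

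To resolve the ambiguous queries I would exploit the finite-precision hypothesis $\f{\beta}^1,\f{\beta}^2\in\delta\bb{Z}^n$. First fix a \emph{pivot} coordinate $p\in S$ with $\f{\beta}^1_p\neq\f{\beta}^2_p$ (such a $p$ must exist unless the two vectors are identical, in which case a single run of standard one-bit CS suffices), and determine the two pivot values $\f{\beta}^1_p,\f{\beta}^2_p$ exactly by a preliminary grid search, using that each lies in the finite bounded set $\delta\bb{Z}\cap[-1,1]$. For an ambiguous query $\f{a}$ I would then sweep the augmented query $\f{a}-c\,\f{e}_p$ over $c$ in the grid $\delta\bb{Z}$ within the range $|c|\le O(\sqrt{k}/\delta)$ dictated by $|\langle\f{a},\f{\beta}^i\rangle|\le\|\f{\beta}^i\|_1\le\sqrt{k}$, estimating $\s{poscount}$ (and $\s{nzcount}$) at each step. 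Since $\langle \f{a}-c\,\f{e}_p,\f{\beta}^i\rangle=\langle\f{a},\f{\beta}^i\rangle-c\,\f{\beta}^i_p$ is affine in $c$ with \emph{known} slope $\f{\beta}^i_p$, the two sign-flip thresholds $c^\ast_i=\langle\f{a},\f{\beta}^i\rangle/\f{\beta}^i_p$ become visible as the locations where $\s{poscount}$ drops; because $\langle\f{a},\f{\beta}^i\rangle\in\delta\bb{Z}$ and the slopes $\f{\beta}^1_p\neq\f{\beta}^2_p$ are known, rounding $c^\ast\cdot\f{\beta}^i_p$ to the grid is consistent for the correct threshold-to-vector assignment and inconsistent for the wrong one, which tags each threshold with its owner and hence fixes the label of $\f{a}$'s response. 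This is precisely the step where condition (b) is indispensable and where the $1/\delta^2$ overhead enters (an $O(\sqrt{k}/\delta)$-long sweep at grid resolution $\delta$).

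The remaining work is accounting, and it is also where I expect the real difficulty to lie. Each of the $\tilde{O}(k/\epsilon^4)$ sub-Gaussian queries needed by \cite{ai2014one} costs an alignment sweep of $\tilde{O}(\sqrt{k}/\delta^2)$ augmented queries, each estimated with batchsize $T=O(\log(nk/\delta))$ (the $\ell^2$ batch factor of Lemma~\ref{lem:batchsize} is constant for $\ell=2$), and a union bound over all count estimations gives overall success probability $1-O(1/n)$; collecting these factors yields the claimed $O\!\big(\tfrac{k^2}{\epsilon^4\delta^2}\log^2(\tfrac{nk}{\delta})\big)$ query bound. \textbf{The main obstacle} is making the tagging argument fully rigorous and robust: I must guarantee that the two thresholds are always distinguishable and correctly attributed from the (with high probability exact, by Lemma~\ref{lem:batchsize}) count estimates, that the rounding-consistency test has no spurious collisions despite both inner products being unknown, and that boundary cases---thresholds landing between grid points, a vector with a zero projection on $\f{a}$, or $\f{\beta}^i_p$ of opposite signs---are handled so that the label produced for $\f{a}$ is correct \emph{simultaneously} for all queries fed into \cite{ai2014one}.
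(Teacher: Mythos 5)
Your high-level plan (restrict to the common support, collect correctly labelled sign responses to $\tilde{O}(k/\epsilon^4)$ Bernoulli queries, and invoke the sub-Gaussian one-bit result of \cite{ai2014one} under Assumption~\ref{assum:max}) coincides with the paper's, and your accounting of the query budget is of the right form. However, your alignment mechanism differs from the paper's and contains a genuine gap that occurs \emph{before} the obstacle you flag. Your scheme presupposes that you can (i) locate a coordinate $p$ with $\f{\beta}^1_p\neq\f{\beta}^2_p$ and (ii) determine the two values $\f{\beta}^1_p,\f{\beta}^2_p$ \emph{exactly}, attributed to the right vectors, via a ``preliminary grid search.'' But the oracle only returns $\s{sign}(\langle\f{v},\f{\beta}\rangle)$ for a randomly chosen $\f{\beta}$, and sign measurements are invariant under positive scaling of the query: querying $\f{e}_p$ (or any multiple of it) reveals only the unordered pair of signs of $\f{\beta}^1_p,\f{\beta}^2_p$, never their magnitudes. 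Any attempt to learn the magnitudes through two-coordinate queries such as $\f{e}_p-t\,\f{e}_q$ yields only ratios, and the two responses to such a query are themselves unlabelled --- so this preliminary step is an instance of the very alignment problem you are trying to solve, and the proposal is circular at this point. Even granting the pivot values, the subsequent tagging step (rounding $c^\ast\cdot\f{\beta}^i_p$ to $\delta\bb{Z}$ and declaring the consistent assignment correct) admits spurious collisions that you acknowledge but do not rule out, and the two flip thresholds need not be separated by your $\delta$-spaced sweep since they are rationals $m_1/m_2$ rather than elements of $\delta\bb{Z}$.

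The paper sidesteps all of this by pivoting on a \emph{query} rather than a coordinate. It fixes one ambiguous query $\f{v}_0$ with $\response(\f{v}_0)=\{-1,+1\}$ and, for every other ambiguous query $\f{v}$, decides only a single bit: whether the $+1$ responses of $\f{v}_0$ and $\f{v}$ originate from the same unknown vector ($\bb{H}_1$) or from different ones ($\bb{H}_2$). This bit is detected by querying $\eta\f{v}_0+\f{v}$ for $\eta$ ranging over all ratios $c/d$ with $|c|,|d|\le\sqrt{k}/\delta$ (a set of size $O(k/\delta^2)$ that is guaranteed to contain the endpoints of the collapse interval, since those endpoints are themselves such ratios) and checking whether $|\response(\eta\f{v}_0+\f{v})|$ ever drops from $2$ to $1$; under $\bb{H}_1$ it never does, under $\bb{H}_2$ it must. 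No threshold needs to be located, attributed, or rounded, and no coordinate value of either vector needs to be known. The cases in which some response is $0$ are handled by two separate, simpler gadgets ($\f{v}_0+\f{v}$ and $\f{v}_0+\s{Inf}\cdot\f{v}$, Propositions~\ref{prop:align0x0y} and~\ref{prop:alignab0x}), which your proposal defers as ``boundary cases.'' To repair your argument you would essentially have to replace the coordinate-pivot tagging by such a pairwise hypothesis test, at which point you recover the paper's proof.
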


\begin{algorithm}[h!]
\caption{\textsc{$\epsilon$-\textsc{Approximate-Recovery}}\label{algo:l2approxrec}}
\begin{algorithmic}[1]
\STATE Estimate $\s{supp}(\f{\beta}^1), \s{supp}(\f{\beta}^2)$ using Algorithm~\ref{algo:l2supprec}.
\IF{$\s{supp}(\f{\beta}^1) \neq \s{supp}(\f{\beta}^2)$}
	\STATE Return $\hat{\f{\beta}^1}, \hat{\f{\beta}^2}$ using Algorithm~\ref{algo:l2approxrec:case1}. 
\ELSE
	\STATE Return $\hat{\f{\beta}^1}, \hat{\f{\beta}^2}$ using Algorithm~\ref{algo:l2approxrec:case2}.
\ENDIF
\end{algorithmic}
\end{algorithm}

\begin{proof}[Proof of Theorem~\ref{thm:grid}]
The guarantees of Algorithm~\ref{algo:l2approxrec} prove Theorem~\ref{thm:grid}.
The total query complexity after support recovery is the maximum of the query complexities of Algorithm~\ref{algo:l2approxrec:case1} and Algorithm~\ref{algo:l2approxrec:case2}, which is $O(\frac{k^2}{\epsilon \delta^2} \log^2(\frac{nk}{\delta}))$.

Moreover from Lemma~\ref{lem:l2approxrec:case1} and Lemma~\ref{lem:l2approxrec:case2}, we know  that both these algorithms succeed with a probability at least $1 - O(1/n)$, therefore, Algorithm~\ref{algo:l2approxrec} is also guaranteed to succeed with probability at least $1 - O(1/n)$.
\end{proof}

We now prove Lemma~\ref{lem:l2approxrec:case1} and Lemma~\ref{lem:l2approxrec:case2}. 

\subsubsection{Case 1: $\s{supp}(\f{\beta}^1) \neq \s{supp}(\f{\beta}^2)$. }

\begin{proof}[Proof of Lemma~\ref{lem:l2approxrec:case1}]
Consider a coordinate $p \in \s{supp}(\f{\beta}^1) ~\Delta~ \s{supp}(\f{\beta}^2)$, where $\Delta$ denotes the symmetric difference of the two support sets. Without loss of generality we can assume $p \in \s{supp}(\f{\beta}^1)$. We first identify the $\s{sign}(\f{\beta}^1_p)$ simply using the query vector $\f{e}_p$. For the sake of simplicity let us assume $\s{sign}(\f{\beta}^1_p) = +1$. 

We use two types of queries to recover the two unknown vectors. The \emph{Type 1} queries are modified Gaussian queries, of the form $\f{v} + \s{Inf} \cdot \f{e}_p$, where $\f{v}$ is a Gaussian query vector. \emph{Type 2} query is the plain Gaussian query $\f{v}$. 

Since $p \in \s{supp}(\f{\beta}^1) \setminus \s{supp}(\f{\beta}^2)$, the Type 1 queries will always have a positive response with the unknown vector $\f{\beta}^1$. Moreover, they will simulate a Gaussian query with $\f{\beta}^2$. Therefore from the responses of the oracle, we can correctly identify the response of $\f{\beta}^2$ with a set of $O(k/\epsilon \cdot \log(k/\epsilon))$ Gaussian queries. Now, using Lemma~\ref{lem:reco_orig}, we can approximately recover it. 

Now since the response of $\f{\beta}^2$ with the Type 1 query $\f{v} + \s{Inf} \cdot \f{e}_p$ and  the corresponding Type 2 query $\f{v}$, remains the same, we can also obtain correct responses of $\f{\beta}^1$ with a set of $O(k/\epsilon \cdot \log(k/\epsilon))$ Gaussian queries. By invoking Lemma~\ref{lem:reco_orig} again, we can approximately recover $\f{\beta}^1$. 


\begin{algorithm}[h!]
\caption{\textsc{$\epsilon$-\textsc{Approximate-Recovery: Case 1}}\label{algo:l2approxrec:case1}}
\begin{algorithmic}[1]
\REQUIRE $\s{supp}(\f{\beta}^1) \neq \s{supp}(\f{\beta}^2)$
\STATE Set $m  = O(k/\epsilon \cdot \log(k/\epsilon))$
\STATE Set batchsize $T = 10\log mn$.
\STATE Let $\s{Inf}$ be a large positive number. 
\STATE Let $p \in \s{supp}(\f{\beta}^1) \setminus \s{supp}(\f{\beta}^2)$, and $s:= \s{sign}(\f{\beta}^1_p)$.
\FOR{$i = 1, \ldots, m$}
	\STATE Construct query vector $\f{v}$, where $\f{v}_j =\ca{N}(0,1)$ for all $j \in [n]$.
	\STATE Construct query vector $\tilde{\f{v}} := \f{v} + s \cdot \s{Inf} \cdot \f{e}_p $
	\STATE $\s{Query}\Big(\f{v},T \Big)$, and $\s{Query}\Big(\tilde{\f{v}},T \Big)$.
	\STATE Set $y^i = \begin{cases}
		+1 \quad & \text{if } \s{poscount}(\tilde{\f{v}}) ==2 \\
		-1 \quad &\text{if } \s{negcount}(\tilde{\f{v}}) == 1\\
		0 \quad &\text{otherwise}
		\end{cases}$
	\STATE Set $z^i = \begin{cases}
		+1 \quad & \text{if } y^i = +1 \text{ and } \s{poscount}(\f{v}) ==2 \\
		-1 \quad &\text{if } y^i = +1 \text{ and } \s{negcount}(\f{v}) == 1\\
		+1 \quad & \text{if } y^i = -1 \text{ and } \s{poscount}(\f{v}) ==1 \\
		-1 \quad &\text{if } y^i = -1 \text{ and } \s{negcount}(\f{v}) == 2\\
		+1 \quad & \text{if } y^i = 0 \text{ and } \s{poscount}(\f{v}) ==1 \\
		-1 \quad &\text{if } y^i = 0 \text{ and } \s{negcount}(\f{v}) == 1\\
		0 \quad &\text{otherwise}
		\end{cases}$
\ENDFOR
\STATE From $\{y^i \mid i \in [m] \}$ and $\s{supp}(\f{\beta}^2)$ recover $\f{\hat{\beta}}^2$ by using Lemma~\ref{lem:reco_orig}.
\STATE From $\{z^i \mid i \in [m] \}$ and $\s{supp}(\f{\beta}^1)$ recover $\f{\hat{\beta}}^1$ by using Lemma~\ref{lem:reco_orig}.

\end{algorithmic}
\end{algorithm}

The total query complexity of the algorithm is $O(kT/\epsilon \cdot \log(k/\epsilon)) = O(k/\epsilon \cdot \log(nk / \epsilon) \cdot \log(k/\epsilon))$. Also, from Lemma~\ref{lem:batchsize}, it follows that each oracle query succeeds with probability at least $1 - O(1/mn)$. Therefore by union bound over all $2m$ queries, the algorithm succeeds with probability at least $1 - O(1/n)$. 
\end{proof} 


\subsubsection{Case 2: $\s{supp}(\f{\beta}^1) = \s{supp}(\f{\beta}^2)$.}
We now propose an algorithm for approximate recovery of the two unknown vectors when their supports are exactly the same. Until now for $\epsilon$-recovery, we were using a representative coordinate to generate enough responses to Gaussian queries. However, when the supports are exactly the same, the same trick does not work. 

For the approximate recovery in this case, we use sub-Gaussian queries instead of Gaussian queries. In particular, we consider queries whose entries are sampled uniformly from $\{-1, 1\}$. The equivalent of Lemma~\ref{lem:reco_orig} proved by  \cite{ai2014one} for sub-Gaussian queries enables us  to achieve similar bounds. 



\begin{lem}[Corollary of Theorem 1.1 of \cite{ai2014one}]\label{lem:pv}
Let $\f{x} \in \bb{S}^{n-1}$ be a $k$-sparse unknown vector of unit norm. 
Let $\f{v_1}, \ldots, \f{v_m}$ be independent random vectors in $\mathbb{R}^n$ whose coordinates are drawn uniformly from $\{-1,1\}$. 
There exists an algorithm that recovers $\hat{\f{x}} \in \bb{S}^{n-1}$  using the $1$-bit sign measurements $\{\s{sign}(\langle \f{v_i}, \f{x} \rangle) \}_{i \in [m]}$, such that with probability at least $1 - 4e^{-\alpha^2}$ (for any $\alpha > 0$), it satisfies
\[
\left\| \f{x} - \hat{\f{x}} \right\|_2^2 \le O\left(\|x\|_{\infty}^{\frac12}   + \frac{1}{2 \sqrt{m}}(\sqrt{k \log (2n/k)} + \alpha)  \right). 
\]
\end{lem}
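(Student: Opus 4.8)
The plan is to obtain Lemma~\ref{lem:pv} as a direct specialization of Theorem 1.1 of \cite{ai2014one}, which provides a $1$-bit compressed sensing recovery guarantee for general sub-Gaussian measurement ensembles. First I would set up the recovery procedure exactly within their framework: take $\hat{\f{x}}$ to be the maximizer of the linear correlation $\frac{1}{m}\sum_{i} y_i \langle \f{v_i}, \f{x'} \rangle$ over the convex relaxation $K = \{ \f{x'} \in \bb{R}^n : \norm{\f{x'}}_2 \le 1,\ \norm{\f{x'}}_1 \le \sqrt{k} \}$ of the set of $k$-sparse unit vectors, followed by projection onto $\bb{S}^{n-1}$. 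Since $\f{x}$ is $k$-sparse with $\norm{\f{x}}_2 = 1$ we have $\norm{\f{x}}_1 \le \sqrt{k}$, so $\f{x} \in K$ and the target lies in the feasible set.

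Next I would verify the hypotheses of their theorem for our ensemble. Each $\f{v_i}$ has i.i.d.\ coordinates uniform on $\{-1,1\}$, and these are mean-zero, unit-variance, and $1$-sub-Gaussian, which is exactly the measurement class covered by Theorem 1.1. This is the routine verification step and introduces only absolute constants into the final bound.

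The crux of the argument is controlling the \emph{non-Gaussianity} term that appears in their bound. For Gaussian measurements one has the identity $\bb{E}[\s{sign}(\langle \f{v}, \f{x}\rangle)\,\f{v}] = \sqrt{2/\pi}\,\f{x}$ (Gaussian integration by parts), so the linear estimator points in the direction of $\f{x}$; for $\{-1,1\}$ measurements this identity holds only approximately, and Theorem 1.1 charges the discrepancy to the recovery error. I would quantify this discrepancy through the closeness of the one-dimensional marginal $\langle \f{v_i}, \f{x}\rangle = \sum_j \f{v_{ij}} \f{x}_j$ to a standard Gaussian: it is a sum of independent mean-zero terms of total variance $\norm{\f{x}}_2^2 = 1$, and by a Berry--Esseen estimate its law is within $O\big(\sum_j |\f{x}_j|^3\big) \le O(\norm{\f{x}}_\infty)$ of $\ca{N}(0,1)$ in Kolmogorov distance. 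Tracking how this distributional error propagates through the sub-Gaussian estimate of Theorem 1.1 produces the $O(\norm{\f{x}}_\infty^{1/2})$ summand in the squared-error bound. This is precisely the step that forces Assumption~\ref{assum:max} ($\norm{\f{x}}_\infty = o(1)$): if the $\ell_\infty$ norm is not small, the marginal is far from Gaussian, the bias does not vanish, and direction recovery from $\{-1,1\}$ sign measurements is impossible.

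Finally I would substitute the geometric complexity of the constraint set. The Gaussian mean width of $K$ is $O(\sqrt{k \log (2n/k)})$ by the standard bound for the $\ell_1$-ball intersected with the Euclidean ball, and the sub-Gaussian concentration in Theorem 1.1 converts this into the statistical term $\frac{1}{2\sqrt{m}}\big(\sqrt{k \log(2n/k)} + \alpha\big)$ while simultaneously delivering the failure probability $4 e^{-\alpha^2}$. Adding the non-Gaussianity term and the statistical term gives the claimed bound on $\norm{\f{x} - \hat{\f{x}}}_2^2$. The main obstacle is the non-Gaussianity computation: one must identify precisely how the distribution-dependent parameter of \cite{ai2014one} instantiates for $\{-1,1\}$ measurements and confirm that the Berry--Esseen estimate yields exactly the stated exponent $\norm{\f{x}}_\infty^{1/2}$ in the squared error rather than a weaker power; the remainder is bookkeeping of the constants and the mean-width bound already supplied by their theorem.
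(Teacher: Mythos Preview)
Your proposal is correct and matches the paper's approach: the paper does not supply its own proof of this lemma but simply states it as a corollary of Theorem~1.1 of \cite{ai2014one} and uses it as a black box. Your derivation---instantiating that theorem with i.i.d.\ $\{-1,1\}$ measurements, the convex constraint set $K = \{\norm{\f{x'}}_2 \le 1,\ \norm{\f{x'}}_1 \le \sqrt{k}\}$, the Berry--Esseen control of the non-Gaussianity parameter yielding the $\norm{\f{x}}_\infty^{1/2}$ term, and the standard mean-width bound $w(K) = O(\sqrt{k\log(2n/k)})$---is precisely how the corollary is meant to follow.
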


In particular, for $m = O(\frac{k}{\epsilon^4} \log n)$, we get $O(\epsilon + \|x\|_{\infty}^{\frac12})$ - approximate recovery with probability at least $1 - O(1/n)$. Therefore, if the unknown vectors are not \emph{extremely} sparse (Assumption~\ref{assum:max}), we can get good guarantees on their approximate recovery with sufficient number of sub-Gaussian queries. 


%


The central idea of $\epsilon$-recovery algorithm (Algorithm~\ref{algo:l2approxrec:case2}) is  therefore to identify the responses of a particular unknown vector $\f{\beta}$ with respect to a set of sub-Gaussian queries $\f{v} \sim \{-1, 1\}^n$. Then using Lemma~\ref{lem:pv}, we can approximately reconstruct $\f{\beta}$. 

Let us denote by $\response(\f{v})$, the set of distinct responses of the oracle with a query vector $\f{v}$. Since there are only two unknown vectors, $|\response(\f{v})| \le 2$. If both unknown vectors have the same response with respect to a given query vector $\f{v}$, i.e., $|\response(\f{v})| = 1$ then we can trivially identify the correct responses with respect both the unknown vectors by setting $\s{sign}(\langle \f{v}, \f{\beta}^2 \rangle) = \s{sign}(\langle \f{v}, \f{\beta}^2 \rangle) = \response(\f{v})$. 

However if $|\response(\f{v})| = 2$, we need to identify the correct response with respect to a fixed unknown vector. This \emph{alignment} constitutes the main technical challenge in approximate recovery.  To achieve this,  Algorithm~\ref{algo:l2approxrec:case2} fixes a pivot query say $\f{v}_0$ with $|\response(\f{v}_0)| = 2$, and aligns all the other queries with respect to it by making some additional oracle queries.

Let $W$ denote the set of queries such that $|\response(\f{v})| = 2$. Also, for any pair of query vectors, $\f{v}_1, \f{v}_2 \in W$, we denote by $\alignment_{\f{\beta}}(\f{v}_1, \f{v}_2)$ to be an ordered tuple of responses with respect to the unknown vector $\f{\beta}$. 
\[
\alignment_{\f{\beta}}(\f{v}_1, \f{v}_2) = 
(\s{sign}(\langle \f{v}_1, \f{\beta} \rangle), \s{sign}(\langle \f{v}_2, \f{\beta} \rangle)).
\]

We fix a pivot query $\f{v}_0 \in W$ to be one that satisfies $\response(\f{v}_0) = \{-1, 1\}$. We can assume without loss of generality that there always exists one such query, otherwise all queries $\f{v} \in W$ have $0 \in \response(\f{v})$, and Proposition~\ref{prop:align0x0y} aligns all such responses using $O(\log n)$ additional oracle queries.

\begin{prop}\label{prop:align0x0y}
Suppose for all queries $\f{v} \in W$, $0 \in \response(\f{v})$. 
There exists an algorithm that estimates $\alignment_{\f{\beta}^1}(\f{v}_0, \f{v})$ and $\alignment_{\f{\beta}^2}(\f{v}_0, \f{v})$ for any $\f{v}, \f{v}_0 \in W$ using $O(\log n)$ oracle queries with probability at least $1 - O(1/n)$.
\end{prop}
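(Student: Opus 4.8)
The plan is to reduce the alignment of $\f{v}_0$ and $\f{v}$ to a single binary decision and then resolve that bit with one extra combined query. Since $\f{v}_0,\f{v}\in W$ have $0\in\response(\f{v}_0)$ and $0\in\response(\f{v})$, and since $|\response|=2$ for queries in $W$, on each of $\f{v}_0$ and $\f{v}$ exactly one of the two unknown vectors has zero projection while the other has a nonzero projection whose sign is read off from $\response$. I first run Algorithm~\ref{algo:1} on $\f{v}_0$ and on $\f{v}$ with batchsize $T=O(\log n)$ to recover $\response(\f{v}_0)$ and $\response(\f{v})$; write $s_0$ for the unique nonzero element of $\response(\f{v}_0)$ and $s$ for that of $\response(\f{v})$. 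I then fix a global labeling by the pivot: call $\f{\beta}^1$ the vector that vanishes on $\f{v}_0$ and $\f{\beta}^2$ the vector with response $s_0$ on $\f{v}_0$.

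With this labeling the only remaining unknown is \emph{which} vector carries the zero projection on $\f{v}$. There are exactly two possibilities. Either it is $\f{\beta}^1$ (the same vector that vanished on $\f{v}_0$), in which case $\alignment_{\f{\beta}^1}(\f{v}_0,\f{v})=(0,0)$ and $\alignment_{\f{\beta}^2}(\f{v}_0,\f{v})=(s_0,s)$; or it is $\f{\beta}^2$, in which case $\alignment_{\f{\beta}^1}(\f{v}_0,\f{v})=(0,s)$ and $\alignment_{\f{\beta}^2}(\f{v}_0,\f{v})=(s_0,0)$. Thus producing both alignment tuples is equivalent to resolving this one bit, and the nonzero signs are already known from $s_0$ and $s$.

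To resolve the bit I would query the combined vector $\f{v}_0+\f{v}$ and estimate $\s{nzcount}(\f{v}_0+\f{v})$ using Algorithm~\ref{algo:1} at batchsize $T=O(\log n)$ (note that $\f{v}_0+\f{v}$ is an arbitrary real query vector, which Algorithm~\ref{algo:1} handles just as well). If the two zero projections come from \emph{different} vectors, then each unknown vector is zero on exactly one of $\f{v}_0,\f{v}$, so $\langle \f{v}_0+\f{v},\f{\beta}^i\rangle$ equals the single nonzero contribution of the other query for both $i$, giving $\s{nzcount}(\f{v}_0+\f{v})=2$. If instead both zeros come from the \emph{same} vector $\f{\beta}^1$, then $\langle \f{v}_0+\f{v},\f{\beta}^1\rangle=0$, so $\s{nzcount}(\f{v}_0+\f{v})\le 1$. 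Hence $\s{nzcount}(\f{v}_0+\f{v})=2$ if and only if we are in the different-vectors case, which pins down the alignment.

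The only place requiring care — and the step I expect to be the main obstacle — is a possible coincidental cancellation $\langle \f{v}_0,\f{\beta}^2\rangle+\langle \f{v},\f{\beta}^2\rangle=0$ in the same-vector case, which would drive $\s{nzcount}(\f{v}_0+\f{v})$ to $0$ rather than $1$. Crucially this can only \emph{lower} the count, never raise it to $2$, so it cannot be mistaken for the different-vectors case; and in the different-vectors case no cancellation is possible at all, since each vector contributes through exactly one of $\f{v}_0,\f{v}$. (If one wants a count equal to exactly $1$ in the same-vector case, one can additionally query $\f{v}_0-\f{v}$: at least one of $\f{v}_0\pm\f{v}$ avoids the cancellation, and the criterion ``both combinations yield $\s{nzcount}=2$'' still characterizes the different-vectors case.) For the budget, the procedure invokes Algorithm~\ref{algo:1} a constant number of times, each with batchsize $O(\log n)$, for $O(\log n)$ oracle queries total; by Lemma~\ref{lem:batchsize} with $\ell=2$ each count is correct with probability $1-O(1/n^2)$, and a union bound yields overall success probability $1-O(1/n)$.
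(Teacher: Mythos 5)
Your proposal is correct and follows essentially the same route as the paper: query the combined vector $\f{v}_0+\f{v}$ and use whether a zero projection survives (equivalently, whether $\s{nzcount}(\f{v}_0+\f{v})$ reaches $2$) to decide if the two zeros belong to the same unknown vector, with batchsize $O(\log n)$ giving the stated query and success bounds. Your explicit treatment of the accidental cancellation $\langle \f{v}_0+\f{v},\f{\beta}^2\rangle=0$ is a small extra care the paper's write-up glosses over, but it does not change the argument.
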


For a fixed pivot query $\f{v}_0 \in W$ such that $\response(\f{v}_0) = \{-1, 1\}$, Proposition~\ref{prop:alignab0x} and Proposition~\ref{prop:alignabxy} compute $\alignment_{\f{\beta}}(\f{v}_0, \f{v})$ for all queries $\f{v} \in W$ such that $0 \in \response(\f{v})$ and $0 \notin \response(\f{v})$ respectively. 

\begin{prop}\label{prop:alignab0x}
Let $\f{v}_0 \in W$ such that $\response(\f{v}_0) = \{-1, 1\}$. For any query vector $\f{v} \in W$ such that $0 \in \response(\f{v})$, there exists an algorithm that computes $\alignment_{\f{\beta}^1}(\f{v}_0, \f{v})$ and $\alignment_{\f{\beta}^2}(\f{v}_0, \f{v})$ using $O(\log n)$ oracle queries with probability at least $1 - O(1/n)$.
\end{prop}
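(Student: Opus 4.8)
The plan is to exploit the fact that, for a query $\f{v}\in W$ with $0\in\response(\f{v})$ and $|\response(\f{v})|=2$, exactly one of the two unknown vectors is orthogonal to $\f{v}$. Label the vector with zero projection on $\f{v}$ by $\f{\beta}^z$ and the one with nonzero projection by $\f{\beta}^p$; we do not yet know which of $\f{\beta}^1,\f{\beta}^2$ is which, and resolving this is precisely the alignment task. Since $\response(\f{v}_0)=\{-1,1\}$, neither vector is orthogonal to the pivot, so $\s{sign}(\langle \f{v}_0,\f{\beta}^z\rangle)\in\{-1,+1\}$ is well defined. Determining this single bit suffices to align both vectors: the response of $\f{v}_0$ equal to $\s{sign}(\langle \f{v}_0,\f{\beta}^z\rangle)$ must be paired with the $0$-response of $\f{v}$, and the remaining $\f{v}_0$-response with the nonzero response of $\f{v}$, which immediately yields both $\alignment_{\f{\beta}^1}(\f{v}_0,\f{v})$ and $\alignment_{\f{\beta}^2}(\f{v}_0,\f{v})$.

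The key device is the combined query $\f{w}=\f{v}_0+c\,\f{v}$ for a suitably large scalar $c>0$. Because $\langle \f{v},\f{\beta}^z\rangle=0$, we have $\langle \f{w},\f{\beta}^z\rangle=\langle \f{v}_0,\f{\beta}^z\rangle$ for every $c$, so the response of $\f{\beta}^z$ on $\f{w}$ is exactly $\s{sign}(\langle \f{v}_0,\f{\beta}^z\rangle)$. On the other hand $\langle \f{w},\f{\beta}^p\rangle=\langle \f{v}_0,\f{\beta}^p\rangle+c\langle \f{v},\f{\beta}^p\rangle$, and for $c$ large enough the term $c\langle \f{v},\f{\beta}^p\rangle$ dominates, forcing the response of $\f{\beta}^p$ on $\f{w}$ to equal the sign of $\langle \f{v},\f{\beta}^p\rangle$ (namely $+1$ if $\response(\f{v})=\{0,1\}$ and $-1$ if $\response(\f{v})=\{0,-1\}$). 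Thus, after running $\s{Query}(\f{w},T)$, the counts reveal $\s{sign}(\langle \f{v}_0,\f{\beta}^z\rangle)$: when $\response(\f{v})=\{0,1\}$ we get $\s{poscount}(\f{w})=2$ exactly if $\s{sign}(\langle \f{v}_0,\f{\beta}^z\rangle)=+1$ and $\s{poscount}(\f{w})=1$ otherwise, with the symmetric statement on $\s{negcount}(\f{w})$ when $\response(\f{v})=\{0,-1\}$. This single extra query therefore recovers the alignment bit.

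The main technical point is quantifying ``large enough $c$'', and this is where I would use the finite-precision hypothesis $\f{\beta}^1,\f{\beta}^2\in\delta\bb{Z}^n$ together with $\f{v},\f{v}_0\in\{-1,1\}^n$. Each inner product $\langle \f{v},\f{\beta}^p\rangle$ is a nonzero integer multiple of $\delta$, hence $|\langle \f{v},\f{\beta}^p\rangle|\ge\delta$, while $|\langle \f{v}_0,\f{\beta}^p\rangle|\le \|\f{v}_0\|_2\,\|\f{\beta}^p\|_2=\sqrt n$ by Cauchy--Schwarz and the unit-norm convention. Choosing any $c>\sqrt n/\delta$ (for instance an $\s{Inf}$-scale value) guarantees $c\,|\langle \f{v},\f{\beta}^p\rangle|>|\langle \f{v}_0,\f{\beta}^p\rangle|$, so the sign of $\langle \f{w},\f{\beta}^p\rangle$ is indeed that of $c\langle \f{v},\f{\beta}^p\rangle$, making the dominance argument rigorous uniformly over every $\f{v}\in W$ under consideration.

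For query complexity and success probability: reading off $\response(\f{v})$ and $\response(\f{v}_0)$ (to decide the sub-case $\{0,1\}$ versus $\{0,-1\}$) and then estimating the counts of the single query $\f{w}$ each amount to a constant number of calls to $\s{Query}(\cdot,T)$ with $\ell=2$. By Lemma~\ref{lem:batchsize}, a batchsize $T=O(\log n)$ makes every count correct with probability $1-O(1/n^2)$, and a union bound over the $O(1)$ invocations gives total query complexity $O(\log n)$ and overall success probability $1-O(1/n)$. The only places I expect to need genuine care are the case bookkeeping for $\response(\f{v})=\{0,1\}$ versus $\{0,-1\}$ and stating the dominance threshold for $c$ once and for all; the rest reduces to direct sign computations.
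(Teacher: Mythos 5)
Your proposal is correct and matches the paper's own argument: the paper likewise queries $\f{v}_0+\s{Inf}\cdot\f{v}$, notes that the vector orthogonal to $\f{v}$ retains its $\f{v}_0$-sign while the other is dominated by the $\f{v}$-term, and reads off the single alignment bit from the counts. Your explicit quantification of the dominance threshold $c>\sqrt{n}/\delta$ via the $\delta\bb{Z}^n$ assumption is a detail the paper leaves implicit, but the approach is the same.
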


\begin{prop}\label{prop:alignabxy}
Let $\delta > 0$, be the largest real number such that $\f{\beta}^1, \f{\beta}^2 \in \delta \bb{Z}^n$. Let $\f{v}_0 \in W$ such that $\response(\f{v}_0) = \{-1, 1\}$. For any query vector $\f{v} \in W$ such that $\response(\f{v}) = \{-1, 1\}$, there exists an algorithm that computes $\alignment_{\f{\beta}^1}(\f{v}_0, \f{v})$ and $\alignment_{\f{\beta}^2}(\f{v}_0, \f{v})$ using $O(\frac{k}{\delta^2} \log(\frac{nk}{\delta}))$ oracle queries with probability at least $1 - O(1/n)$.
\end{prop}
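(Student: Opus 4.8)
The plan is to align $\f{v}$ against the pivot $\f{v}_0$ by a homotopy argument along the segment joining them, reading off sign flips from the counts of Algorithm~\ref{algo:1}. Relabel the two vectors so that $\langle \f{v}_0, \f{\beta}^1\rangle > 0$ and $\langle \f{v}_0, \f{\beta}^2\rangle < 0$ (possible since $\response(\f{v}_0) = \{-1,1\}$); this fixes a labelling that is consistent across all queries, which is all we need as recovery is only up to permutation. Because $\response(\f{v}) = \{-1,1\}$ as well, exactly one of $\f{\beta}^1,\f{\beta}^2$ has positive inner product with $\f{v}$. Thus there are only two possibilities: the \emph{aligned} case $\s{sign}\langle \f{v},\f{\beta}^1\rangle = +1$ (so $\alignment_{\f{\beta}^1}(\f{v}_0,\f{v}) = (+1,+1)$ and $\alignment_{\f{\beta}^2}(\f{v}_0,\f{v}) = (-1,-1)$), or the \emph{crossed} case $\s{sign}\langle \f{v},\f{\beta}^1\rangle = -1$. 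Deciding which holds determines both alignments completely.

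To decide, I would query the oracle along the line $\f{v}(\lambda) := (1-\lambda)\f{v}_0 + \lambda \f{v}$ for $\lambda\in[0,1]$. For each $i$, the map $\lambda \mapsto \langle \f{v}(\lambda),\f{\beta}^i\rangle$ is affine, hence changes sign at most once on $[0,1]$. Since $\langle \f{v}_0,\f{\beta}^1\rangle>0>\langle\f{v}_0,\f{\beta}^2\rangle$, in the aligned case neither line crosses zero and $(\s{poscount},\s{negcount})(\f{v}(\lambda)) = (1,1)$ for every $\lambda$; in the crossed case both lines cross zero, at $\lambda_1 = \langle\f{v}_0,\f{\beta}^1\rangle / (\langle\f{v}_0,\f{\beta}^1\rangle-\langle\f{v},\f{\beta}^1\rangle)$ and $\lambda_2$ defined analogously, producing a nonempty open interval between $\lambda_1$ and $\lambda_2$ on which $(\s{poscount},\s{negcount})$ equals $(0,2)$ or $(2,0)$. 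Hence the algorithm estimates $\s{poscount}$ via Algorithm~\ref{algo:1} at a grid of $\lambda$-values and declares the crossed case iff some grid point returns $\s{poscount}\neq 1$.

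The quantitative heart is bounding how fine the grid must be. Since $\f{v}_0 \in \{-1,1\}^n$ and each $\f{\beta}^i \in \delta\bb{Z}^n$ is $k$-sparse of unit norm, every nonzero inner product is an integer multiple of $\delta$ with magnitude in $[\delta, \sqrt k]$. Writing $\langle\f{v}_0,\f{\beta}^1\rangle = p\delta$, $-\langle\f{v},\f{\beta}^1\rangle = q\delta$, $-\langle\f{v}_0,\f{\beta}^2\rangle = r\delta$, $\langle\f{v},\f{\beta}^2\rangle = s\delta$ with positive integers $p,q,r,s \le \sqrt k/\delta$, one gets $\lambda_1 = p/(p+q)$ and $\lambda_2 = r/(r+s)$, so whenever $\lambda_1 \neq \lambda_2$ one has $|\lambda_1-\lambda_2| = |ps-rq|/((p+q)(r+s)) \ge 1/((p+q)(r+s)) \ge \delta^2/(4k)$. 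A uniform grid of spacing $\delta^2/(8k)$, i.e. $O(k/\delta^2)$ points, therefore lands inside the transient interval whenever the crossed case is non-degenerate. Estimating each $\s{poscount}$ with batchsize $T = O(\log(nk/\delta))$ makes every estimate correct with probability $1 - O(\delta^2/(nk))$ by Lemma~\ref{lem:batchsize}, and a union bound over the $O(k/\delta^2)$ grid points yields overall success $1 - O(1/n)$ using $O(\tfrac{k}{\delta^2}\log(\tfrac{nk}{\delta}))$ oracle queries, as claimed.

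The main obstacle is the degenerate crossed case $\lambda_1 = \lambda_2$, where both lines vanish at the same point $\f{v}(\lambda^\ast)$ and no transient interval exists, so a grid may see $(1,1)$ everywhere and misclassify it as aligned. This coincidence is exactly the event that the segment $\f{v}(\lambda)$ meets the codimension-$2$ subspace $\mathrm{span}(\f{\beta}^1,\f{\beta}^2)^\perp$, which is non-generic for randomly drawn $\pm1$ queries. I would remove it by perturbing the pivot to $\f{v}_0 + \eta\f{g}$ with a small generic $\f{g}$ and $\eta$ small enough that the $\pm1$ responses are preserved (safe because nonzero inner products have magnitude at least $\delta$); this separates $\lambda_1$ from $\lambda_2$ while leaving the target alignment unchanged, and at the exact crossing one can alternatively certify the case via an $\s{nzcount}$ drop to $0$.
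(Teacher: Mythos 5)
Your main line of argument is essentially the paper's: both proofs reduce the problem to distinguishing the aligned hypothesis from the crossed one, sweep over positive combinations of $\f{v}_0$ and $\f{v}$ (your $\lambda$-segment is the paper's family $\eta\f{v}_0+\zeta\f{v}$ after reparametrization), detect the crossed case by a collapse of the response pattern on the interval between the two roots, and exploit the fact that every nonzero inner product is a multiple of $\delta$ of magnitude at most $\sqrt{k}$ to bound the number of probe points by $O(k/\delta^2)$. Your minimum-gap computation $|\lambda_1-\lambda_2|=|ps-rq|/\bigl((p+q)(r+s)\bigr)\ge\delta^2/(4k)$ is an explicit justification for a uniform grid and is, if anything, argued more carefully than the paper's version, which instead enumerates the $O(k/\delta^2)$ candidate rational values $c/d$ of the critical ratio in Algorithm~\ref{algo:align:case3}.

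The one step that does not hold up is your treatment of the degenerate crossed case $\lambda_1=\lambda_2$. Perturbing the pivot to $\f{v}_0+\eta\f{g}$ with a generic $\f{g}$ destroys exactly the structure your grid bound relies on: the perturbed inner products are no longer multiples of $\delta$, so after perturbation $|\lambda_1-\lambda_2|$ has no lower bound, and no uniform grid of predetermined spacing is guaranteed to land inside the (now nonempty but arbitrarily short) transient interval; the stated query complexity would no longer follow. The correct repair is already implicit in your final clause: the common root $\lambda^\ast=p/(p+q)$ is one of only $O(k/\delta^2)$ \emph{known} rationals, with numerator and denominator bounded by $O(\sqrt{k}/\delta)$, so you should add these candidate points to your grid and declare the crossed case also when $\s{nzcount}$ drops to $0$ (equivalently, when $0$ enters the response set) at one of them. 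This is precisely the paper's choice of grid $\{c/d : |c|,|d|\le\sqrt{k}/\delta\}$, which hits the degenerate crossing exactly and costs nothing asymptotically. With that substitution in place of the perturbation argument, your proof is complete.
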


Using the alignment process and Lemma~\ref{lem:pv}, we can now approximately recover both the unknown vectors. 

\begin{proof}[Proof of Lemma~\ref{lem:l2approxrec:case2}]
Consider Algorithm~\ref{algo:l2approxrec:case2}, which basically collects enough responses of an unknown vector for a set of sub-Gaussian queries by aligning all responses. 

Without loss of generality, we fix $\f{v}_0$ such that $\response(\f{v}_0) = \{+1, -1\}$, and also enforce that $\s{sign}(\f{v}_0, \f{\beta}^1) = +1$. Now, we align all other responses with respect to $\f{v}_0$. The proof of Lemma~\ref{lem:l2approxrec:case2} then follows from the guarantees of Lemma~\ref{lem:pv}. For $m = O(\frac{k}{\epsilon^4} \log n)$, along with the assumptions that $\|\f{\beta^1}\|_{\infty}, \|\f{\beta^2}\|_{\infty} = o(1)$, the algorithm approximately recovers $\f{\beta^1}, \f{\beta^2}$.

\begin{algorithm}[h!]
\caption{\textsc{$\epsilon$-\textsc{Approximate Recovery: Case 2}}\label{algo:l2approxrec:case2}}
\begin{algorithmic}[1]
\REQUIRE $\s{supp}(\f{\beta}^1) = \s{supp}(\f{\beta}^2)$,  Assumption~\ref{assum:max}. 
\STATE Set $m = O(\frac{k}{\epsilon^4} \log(n)) $
\STATE Set batchsize $T = O(\log mn)$
\FOR{$i = 1, \ldots, m$}
	\STATE Sample query vector $\f{v}$ as: $\f{v}_j = 
	\begin{cases}
		+1 \quad & \text{ w.p. } 1/2  \\
		-1 \quad &\text{ w.p. } 1/2
		\end{cases}$

	\STATE $\s{Query}(\f{v}, T)$, and store $\response(\f{v})$.
	\IF{$|\response(\f{v})| == 1$}
		\STATE Set $y^{\f{v}} = \response(\f{v})$. 
		\STATE Set $z^{\f{v}} = \response(\f{v})$. 
	\ELSE
		\STATE Add $\f{v}$ to $W$.
	\ENDIF
	\STATE Let $\f{v}_0$ be an arbitrary $\f{v} \in W$. 	
	\FOR{every $\f{v} \in W$}
		\STATE Set $(y^{\f{v}_0}, y^{\f{v}} ) = \alignment_{\f{\beta}^1}(\f{v}_0, \f{v})$.
		\STATE Set $(z^{\f{v}_0}, z^{\f{v}} ) = \alignment_{\f{\beta}^2}(\f{v}_0, \f{v})$.
	\ENDFOR
\ENDFOR
\STATE Using $\{ y^{\f{v}} \}_{\f{v}}$, estimate $\hat{\f{\beta}^1}$.
\STATE Using $\{ z^{\f{v}} \}_{\f{v}}$, estimate $\hat{\f{\beta}^2}$.
\end{algorithmic}
\end{algorithm}

The number of queries made by Algorithm~\ref{algo:l2approxrec:case2} is at most $mT$ to generate responses and $O(m \frac{k}{\delta^2} \log(\frac{nk}{\delta}))$ to align all the $m$ responses with respect to a fixed pivot query $\f{v}_0$.  Therefore the total query complexity of Algorithm~\ref{algo:l2approxrec:case2} is $O(\frac{k^2}{\epsilon^4 \delta^2} \log^2(\frac{nk}{\delta}))$.

All parts of the algorithm succeed with probability at least $1 - O(1/n)$, and therefore the algorithm succeeds with probability at least $1 - O(1/n)$.
\end{proof}

Finally, we prove Proposition~\ref{prop:align0x0y}, Proposition~\ref{prop:alignab0x} and Proposition~\ref{prop:alignabxy}. 

\begin{proof}[Proof of Proposition~\ref{prop:align0x0y}]
For the proof of Proposition~\ref{prop:align0x0y}, we simply use the query vector $\f{v}_0+\f{v}$ to reveal whether the \texttt{0}'s in the two response sets correspond to the same unknown vector or different ones. The correctness of Algorithm~\ref{algo:align:case1} follows from the fact that there will be a \texttt{0} in the response set of $\f{v}_0+\f{v}$ if and only if both the \texttt{0}'s correspond to the same unknown vector. 

To obtain the complete response set for the query $\f{v}_0+\f{v}$ with probability at least $1- 1/n$, Algorithm~\ref{algo:align:case1} makes at most $O(\log n)$ queries. 

\begin{algorithm}[h!]
\caption{\textsc{Align Queries, Case 1}\label{algo:align:case1}}
\begin{algorithmic}[1]
\REQUIRE $\f{v}_0, \f{v} \in \{-1, 1\}^n$, $0 \in \response(\f{v}_0) \cap \response(\f{v})$. 
\STATE Set batchsize $T = O(\log n)$.
\STATE $\s{Query}(\f{v}_0 + \f{v}, T)$. 
\IF{ $0 \in \response(\f{v}_0 + \f{v})$}
	\STATE $\alignment_{\f{\beta}^1}(\f{v}_0, \f{v}) = (0, 0)$
	\STATE $\alignment_{\f{\beta}^2}(\f{v}_0, \f{v}) = (\response(\f{v}_0) \setminus \{0\},  \response(\f{v}) \setminus \{0\})$
\ELSE
	\STATE $\alignment_{\f{\beta}^1}(\f{v}_0, \f{v}) = (0, \response(\f{v}) \setminus \{0\})$
	\STATE $\alignment_{\f{\beta}^2}(\f{v}_0, \f{v}) = (\response(\f{v}_0) \setminus \{0\},  0)$
	\ENDIF
\end{algorithmic}
\end{algorithm}

\end{proof}


\begin{proof}[Proof of Proposition~\ref{prop:alignab0x}]
In this case, we observe that the response set corresponding to the query $\s{Inf}\cdot \f{v}+\f{v}_0$ can reveal the correct alignment. To see this, let the response of $\f{v}_0$  and $\f{v}$ be $\{+1,-1\}$ and $\{\s{s},0\}$ respectively for some $\s{s} \in \{\pm1\}$. 
The response set corresponding to $\s{Inf}\cdot \f{v}+\f{v}_0$ will be the set (or multi-set) of the form $\{\s{s},\s{t}\}$. Since we know $\s{s} = \response(\f{v}) \setminus \{0\}$, we can deduce $\s{t}$ from the $\s{poscount}(\s{Inf}\cdot \f{v}+\f{v}_0)$, and $\s{negcount}(\s{Inf}\cdot \f{v}+\f{v}_0)$.  

Now, if $\s{t} = +1$, then $(+1, 0)$ are aligned together (response of the same unknown vector) and $(\s{s},-1)$ are aligned together. Similarly, if $\s{t} = -1$, then $(-1, 0)$ and $(+1, \s{s})$ are aligned together respectively. 

The alignment algorithm is presented in Algorithm~\ref{algo:align:case2}. It makes $O(\log n)$ queries and succeeds with probability at least $1-1/n$. 

\begin{algorithm}[h!]
\caption{\textsc{Align Queries, Case 2}\label{algo:align:case2}}
\begin{algorithmic}[1]
\REQUIRE $\f{v}_0, \f{v} \in \{ -1, 1\}^n$, $0 \in \response(\f{v})$, $\response(\f{v}_0) = \{\pm 1\}$. 
\STATE Set batchsize $T = O(\log n)$.
\STATE Set $\s{Inf}$ to be a large positive number. 
\STATE $\s{Query}( \f{v}_0 +\s{Inf}\cdot \f{v}, T)$. 
\IF{ $\response(\f{v}_0 + \s{Inf}\cdot \f{v}) = \{ \response(\f{v}) \setminus \{0\}, +1\}$}
	\STATE $\alignment_{\f{\beta}^1}(\f{v}_0, \f{v}) = (+1, 0)$
	\STATE $\alignment_{\f{\beta}^2}(\f{v}_0, \f{v}) = (-1,  \response(\f{v}) \setminus \{0\})$
\ELSE
	\STATE $\alignment_{\f{\beta}^1}(\f{v}_0, \f{v}) = (+1, \response(\f{v}) \setminus \{0\})$
	\STATE $\alignment_{\f{\beta}^2}(\f{v}_0, \f{v}) = (-1,  0)$
	\ENDIF
\end{algorithmic}
\end{algorithm}
\end{proof}


\begin{proof}[Proof of Proposition~\ref{prop:alignabxy}]
The objective of Proposition~\ref{prop:alignabxy} is to align the responses of queries $\f{v}_0$ and $\f{v}$ by identifying which among the following two hypotheses is true:

\begin{itemize}
\item $\bb{H}_1:$ The response of the unknown vectors with both the query vectors $\f{v}_0$ and $\f{v}$ is same. Since we fixed the $\s{sign}(\langle \f{v}_0, \f{\beta}^1 \rangle) = 1$, this corresponds to the case when $\alignment_{\f{\beta}^1}(\f{v}_0, \f{v}) = (+1, +1)$ and $\alignment_{\f{\beta}^1}(\f{v}_0, \f{v}) = (-1, -1)$. 

In this case, we observe that for any query of the form $\eta \f{v}_0+\zeta \f{v}$ with $\eta, \zeta>0$, the response set will remain $\{+1,-1\}$. 

\item $\bb{H}_2:$ The response of each unknown vector with both the query vectors $\f{v}_0$ and $\f{v}$ is different, i.e., $\alignment_{\f{\beta}^1}(\f{v}_0, \f{v}) = (+1, -1)$ and $\alignment_{\f{\beta}^1}(\f{v}_0, \f{v}) = (-1, +1)$. 

In this case, we note that the response for the queries of the form  $\eta \f{v}_0+\zeta \f{v}$ changes from $\{ -1, 1\}$ to either $\{+1\}, \{-1\}$, or $\{0\}$ for an appropriate choice of $\eta, \zeta > 0$. 
In particular, the cardinality of the response set for queries of the form $\eta \f{v}_0+\zeta \f{v}$ changes from $2$ to $1$ if 
$\frac{\eta}{\zeta} \in \left[-\frac{\langle \f{\beta}^1, \f{v} \rangle}{\langle \f{\beta}^1, \f{v}_0 \rangle}, -\frac{\langle \f{\beta}^2, \f{v} \rangle}{\langle \f{\beta}^2, \f{v}_0 \rangle} \right]  \cup  \left[ -\frac{\langle \f{\beta}^2, \f{v} \rangle}{\langle \f{\beta}^2, \f{v}_0 \rangle}, -\frac{\langle \f{\beta}^1, \f{v} \rangle}{\langle \f{\beta}^1, \f{v}_0 \rangle} \right]$. 
\end{itemize}

\begin{algorithm}[h!]
\caption{\textsc{Align Queries, Case 3}\label{algo:align:case3}}
\begin{algorithmic}[1]
\REQUIRE $\f{v}_0, \f{v} \in \{ 0,-1, 1\}^n$, $\response(\f{v}) = \response(\f{v}_0) = \{\pm 1\}$. 
\STATE Set batchsize $T = O(\log nk/\delta)$.
\FOR{$\eta \in \{\frac{c}{d} \mid c, d \in \bb{Z}\setminus \{0\}, |c|,|d| \le \frac{\sqrt{k}}{\delta}\}$}
	\STATE $\s{Query}(\eta \f{v}_0+\f{v}, T)$. 
	\IF{$|\response(\eta \f{v}_0+\f{v})| ==1$ }
		\STATE Return $\alignment_{\f{\beta}^1}(\f{v}_0, \f{v}) = (+1, -1)$,  $\alignment_{\f{\beta}^2}(\f{v}_0, \f{v}) = (-1, +1)$
	\ENDIF
\ENDFOR
\STATE Return  $\alignment_{\f{\beta}^1}(\f{v}_0, \f{v}) = (+1, +1)$, $\alignment_{\f{\beta}^2}(\f{v}_0, \f{v}) = (-1, -1)$
\end{algorithmic}
\end{algorithm}

In order to distinguish between these two hypotheses, Algorithm~\ref{algo:align:case3} makes sufficient queries of the form $\eta \f{v}_0+\zeta \f{v}$ for varying values of $\eta, \zeta >0$. If for some $\eta, \zeta$ the cardinality of the response set changes from $2$ to $1$, then we claim that $\bb{H}_2$ holds, otherwise $\bb{H}_1$ is true. Algorithm~\ref{algo:align:case3} then returns the appropriate alignment.

Note that for any query vector $\f{v} \in \{-1, 1\}^n$, and any $k$-sparse unknown vector $\f{\beta} \in \bb{S}^{n-1}$ the inner product $\langle \f{\beta}, \f{v} \rangle \in [-\sqrt{k}, \sqrt{k}]$. Moreover, if we assume that the unknown vectors have precision $\delta$, the ratio $\frac{\langle \f{\beta}^2, \f{v} \rangle}{\langle \f{\beta}^2, \f{v}_0 \rangle}$ can assume at most $4k/\delta^2$ distinct values. Algorithm~\ref{algo:align:case3} therefore iterates through all such possible values of $\eta/\zeta$ in order to decide which among the two hypothesis is true. 

The total number of queries made by Algorithm~\ref{algo:align:case3} is therefore $4k T/\delta^2 = O(\frac{k}{\delta^2} \log(\frac{nk}{\delta}))$. From Lemma~\ref{lem:batchsize}, all the responses are recovered correctly with probability $1 - O(1/n)$.
\end{proof}

\section{Experiments}\label{sec:exp}
Similar to the mixed regression model, the problem of learning mixed linear classifiers can be used to model heterogenous data with categorical labels. We provide some simulation results to show the efficacy of our proposed algorithms to reconstruct the component classifiers in the mixture. 

Moreover, the algorithm suggested in this work can be used to learn the set of discriminative features of a group of people in a crowd sourcing model using simple queries with binary responses. Each person's preferences represents a sparse linear classifier, and the oracle queries here correspond to the crowdsourcing model. To exemplify this, we provide experimental results using the MovieLens~\cite{harper2015movielens} dataset to recover the movie genre preferences of two different users (that may use the same account, thus generating mixed responses) using small number of queries.

\subsection{Simulations}
We perform simulations that recover the support of $\ell=2$, $k$-sparse vectors in $\mathbb{R}^n$ using Algorithm~\ref{algo:l2supprec}. We use random sparse matrices with sufficient number of rows to construct an $\s{RUFF}$. Error is measured in terms of relative hamming distance between the actual and the reconstructed support vectors. 

The  simulations show an improvement in the accuracy with increasing number of rows allocated to construct the $\s{RUFF}$ for different values of $n = 1000, 2000, 3000$ with fixed $k=5$. This is evident since the increasing number of rows improve the probability of getting an $\s{RUFF}$. 

\begin{figure}[h!]
	\centering
  \includegraphics[width=0.5\textwidth]{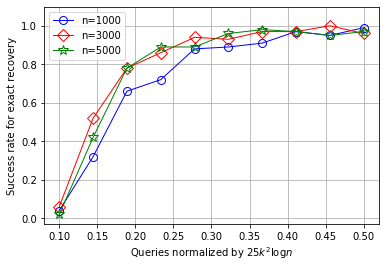}
  \caption{\small{Support Recovery for $\ell=2, k = 5$ and $n = 1000, 2000, 3000$.}}
  \label{fig:sim1}
\end{figure}

\subsection{Movie Lens} 
The MovieLens \cite{harper2015movielens} database contains the user ratings for movies across various genres. Our goal in this set of experiments is to learn the movie genre preferences of two $(\ell=2)$ unknown users using a small set of commonly rated movies.

We first preprocess the set of all movies from the dataset to obtain a subset that have an average rating between 2.5 to 3.5. This is done to avoid biased data points that correspond to movies that are liked (or not liked at all) by almost everyone. For the rest of the experiment, we work with this pre-processed set of movies. 

We consider $n=20$ movie genres in some arbitrary, but predetermined order. The genre preference of each user $i$ is depicted as an (unknown) indicator vector $\f{\beta}^{i} \in \{0,1\}^n$, i.e., $\f{\beta}^i_j = 1$ if and only if user $i$ likes the movies in genre $j$. 
We assume that a user \emph{likes} a particular movie if they rate it $3$ or above. Also, we assume that the user likes a genre if they like at least half the movies they rated in a particular genre.

We consider two users, say $U_1, U_2$ who have commonly rated at least $500$ movies. The preference vectors for both the users is obtained using Algorithm~\ref{algo:l2supprec}. We query the \emph{oracle} with a movie, and obtain its rating from one of the two users at random. For the algorithm, we consider each query to correspond to the indicator of genres that the queried movie belongs to. Using small number of such randomly chosen movie queries, we show that Algorithm~\ref{algo:l2supprec} approximately recovers the movie genre preference of both the users. 

First, we pick a random subset of $m$ movies that were rated by both the users, and partition them into two subsets of size $m_1$, and $m_2$ respectively. The first set of $m_1$ movies are used to partition the list of genres into three classes - genres liked by exactly one of the users, genres liked by both the users, and the genres liked by neither user.  These set of $m_1$  randomly chosen movies essentially correspond to the rows of a $\s{RUFF}$ used in Algorithm~\ref{algo:l2supprec}. 

We then align the genres liked by exactly one of the users, we use the other set of $m_2$ randomly chosen movies and obtain two genre preference vectors $\f{s_1}, \f{s_2}$. 
Since we do not know whether $\f{s_1}$ corresponds the preference vector of $U_1$ or $U_2$, we validate it against both,  i.e., we validate $\f{s_1}$ with $U_1$, $\f{s_2}$ with $U_2$ and vice versa and select the permutation with higher average accuracy. 

\paragraph{Validation: } In order to validate our results, we use our recovered preference vectors to predict the movies that $U_1$ and $U_2$ will like. For each user $U_i$, we select the set of movies that were rated by $U_i$, but were not selected in the set of $m$ movies used to recover their preference vector. The accuracy of our recovered preference vectors are measured by correctly predicting whether a user will like a particular movie from the test set. 

\paragraph{Results: } We obtain the accuracy, precision and recall for three random user pairs who have together rated at least 500 movies.  The results show that our algorithm predicts the movie genre preferences of the user pair with high accuracy even with small $m$. Each of the quantities are obtained by averaging over $100$ runs.
\begin{center}
 \begin{tabular}{|| c || c c | r r r | r r r||} 
 \hline
 id: $(U_1, U_2)$& $m_1$ & $m_2$ & $\s{A}(U_1)$ & $\s{P}(U_1)$& $\s{R}(U_1)$ & $\s{A}(U_2)$ &$\s{P}(U_2)$& $\s{R}(U_2)$  \\ 
 \hline \hline
 & $0$ & $0$ & $0.300$ & $0.000$ & $0.000$ & $0.435$ & $0.000$ &$0.000$ \\ 
 \hline
 (\texttt{68}, \texttt{448}) & $10$ & $20$ & $0.670$ & $0.704$ & $0.916$ & $0.528$ &  $0.550$ &  $0.706$ \\ 
 \hline
& $30$ & $60$ & $0.678$ & $0.700$ & $0.944$ & $0.533$ & $0.548$ & $0.791$ \\
 \hline \hline
 & $0$ & $0$ & $0.269$ & $0.000$ & $0.000$ & $0.107$ & $0.000$ &$0.000$ \\ 
 \hline
 (\texttt{274}, \texttt{380}) & $10$ & $20$ & $0.686$ & $0.733$ & $0.902$ & $0.851$ & $0.893$ &  $0.946$ \\  
 \hline
& $30$ & $60$ & $0.729$ & $0.737$ & $0.982$ & $0.872$ & $0.891$ & $0.976$ \\
 \hline \hline
 & $0$ & $0$ & $0.250$ & $0.000$ & $0.000$ & $0.197$ & $0.000$ &$0.000$ \\ 
 \hline
 (\texttt{474}, \texttt{606}) & $10$ & $20$ & $0.665$ & $0.752$ & $0.827$ & $0.762$ & $0.804$ & $0.930$ \\  
 \hline
& $30$ & $60$ & $0.703$ & $0.750$ & $0.910$ & $0.787$ & $0.806$ & $0.970$ \\
 \hline \hline
\end{tabular}
\end{center}

\section{Conclusion and Open Questions}

In this work, we initiated the study of recovering a mixture of $\ell$ different sparse linear classifiers given query access to an oracle. The problem generalizes the well-studied work on $1$-bit compressed sensing ($\ell = 1$) and also complements the  literature 
on learning mixtures of sparse linear regression in a similar query model. 

Our results for $\ell > 2$, rely on the assumption that the supports of all the unknown vectors are separable. This separability assumption translates to each classifier using a unique feature not being used in others, which happen often in practice. 
The approximate recovery problem without the separability assumption is non-trivial even for $\ell = 2$ case, for which we 
 provide guarantees with much milder assumptions on the  precision of the classifiers.  
 We leave the problem of support recovery and $\epsilon$-recovery without any assumptions as an open problem. 

We primarily focus on providing upper bounds on the query complexity of the support recovery and approximate recovery of the unknown vectors. However, proving optimality results for any such recovery is an interesting open direction. 
It is known that even to recover the support of a single $k$-sparse vector in the 1-bit compressed sensing setting, about $\Omega(k^2 \log n)$ queries are required. This corresponds to $\ell=1$ case, and the lower bound holds trivially for any general $\ell$ as well. 
However, a nontrivial lower bound on the query complexity characterizing the asymptotic dependence on $\ell$, the number of components, will be of interest.

%


\section*{Broader Impact}

This paper is a theoretical study that brings together two seemingly disjoint but equally impactful fields of sparse  recovery and mixture models: the first having numerous applications in signal processing while the second being the main statistical model for clustering. Given that, this work belongs to the foundational area of data science and enhances our understanding of some basic theoretical questions. We feel the methodology developed in this paper is instructive, and exemplifies the use of several combinatorial objects and techniques in signal recovery and classification, that are hitherto underused. Therefore we foresee the technical content of this paper to form good teaching material in foundational data science and signal processing courses. The content of this paper can raise interest  of students or young researchers in discrete mathematics to applications areas and problems of signal processing and machine learning.

While primarily of theoretical interest, the results of the paper can be immediately applicable to some real-life scenarios and be useful in recommendation systems, one of the major drivers of data science research. In particular, if in any case of feedback/rating from users of a service there is ambiguity about the source of the feedback, our framework can be used. This is also applicable to crowdsourcing applications.

{\em Acknowledgements:} This research is supported in part by NSF CCF 1909046 and NSF 1934846.

\bibliographystyle{plain}


\end{document}